%% file: master.tex
\pdfoutput=1
\documentclass[11pt]{article}

\usepackage{amsmath,amssymb,amsthm}
\usepackage{graphicx}
\usepackage{tikz}
\usetikzlibrary{positioning,arrows.meta}
\usepackage{forest}
\usepackage[authoryear,round]{natbib}

\usepackage{newtxtext}
\usepackage[subscriptcorrection]{newtxmath}

\usepackage[margin=1in]{geometry}

\usepackage[plain,noend]{algorithm2e}

\usepackage{amsthm}

\usepackage{amsthm}

\theoremstyle{plain}
\newtheorem{theorem}{Theorem}
\newtheorem{proposition}{Proposition}
\newtheorem{lemma}{Lemma}
\newtheorem{corollary}{Corollary}

\theoremstyle{definition}
\newtheorem{definition}{Definition}
\newtheorem{assumption}{Assumption}

\theoremstyle{remark}
\newtheorem{remark}{Remark}

\makeatletter
\renewcommand{\algocf@captiontext}[2]{#1\algocf@typo. \AlCapFnt{}#2}

\def\@algocf@capt@plain{top}
\renewcommand{\algocf@makecaption}[2]{%
	\addtolength{\hsize}{\algomargin}%
	\sbox\@tempboxa{\algocf@captiontext{#1}{#2}}%
	\ifdim\wd\@tempboxa >\hsize%
	\hskip .5\algomargin%
	\parbox[t]{\hsize}{\algocf@captiontext{#1}{#2}}%
	\else%
	\global\@minipagefalse%
	\hbox to\hsize{\box\@tempboxa}%
	\fi%
	\addtolength{\hsize}{-\algomargin}%
}
\makeatother

\graphicspath{{./art/}}

\DeclareMathOperator{\Var}{Var}
\DeclareMathOperator{\Cov}{Cov}

\usepackage[hidelinks]{hyperref}

\title{Random Forests as Statistical Procedures: Design, Variance, and Dependence}
\author{Nathaniel S. O'Connell\\
	Department of Biostatistics and Data Science, Wake Forest University School of Medicine\\
	Winston-Salem, NC, United States\\
	\texttt{nathaniel.oconnell@wfusm.edu}}
\date{} 

\begin{document}
	
	\maketitle
	
\begin{abstract}
	We develop a finite-sample, design-based theory for random forests in which each tree is a randomized conditional predictor acting on fixed covariates and the forest is their Monte Carlo average. An exact variance identity separates Monte Carlo error from a covariance floor that persists under infinite aggregation. The floor arises through two mechanisms: observation reuse, where the same training outcomes receive weight across multiple trees, and partition alignment, where independently generated trees discover similar conditional prediction rules. We prove the floor is strictly positive under minimal conditions and show that alignment persists even when sample splitting eliminates observation overlap entirely. We introduce procedure-aligned synthetic resampling (PASR) to estimate the covariance floor, decomposing the total prediction uncertainty of a deployed forest into interpretable components. For continuous outcomes, resulting prediction intervals achieve nominal coverage with a theoretically guaranteed conservative bias direction. For classification forests, the PASR estimator is asymptotically unbiased, providing the first pointwise confidence intervals for predicted conditional probabilities from a deployed forest. Nominal coverage is maintained across a range of design configurations for both outcome types, including high-dimensional settings. The underlying theory extends to any tree-based ensemble with an exchangeable tree-generating mechanism.
\end{abstract}
	
	\noindent\textbf{Keywords:} Random Forest; Variance Decomposition; Covariance Floor; Design-Based Inference; Prediction Intervals; Tree Ensembles
	
	\input{main_v2}

	\bibliographystyle{plainnat}
	\bibliography{rf_design}
	
	\clearpage
	\appendix
	\input{supplementary_material}

\end{document}

%% file: main_v2.tex
\section{Introduction}
\label{sec:intro}
\subsection{Existing theory and its limitations}

Since their introduction by \citet{breiman2001}, Random Forests have been among the most widely used modeling frameworks for prediction, adopted across a wide range of scientific domains, and widely regarded for providing strong predictive performance, stability, and robustness to tuning \citep{hastie2009}. Despite their ubiquity in applied statistical settings, Random Forests are typically presented as algorithms---ensembles of randomized decision trees motivated empirically or through large-sample arguments---rather than as finite-sample statistical procedures.

This algorithmic framing has shaped much of the theoretical literature. Early work studied consistency properties \citep{breiman2004}, and subsequent results established consistency and rates of convergence under structural assumptions on splitting rules and feature distributions \citep{biau2008,biau2012,scornet2015,biau2016}. Complementary work showed that random forest predictors can be represented as adaptive nearest-neighbor or weighted regression estimators, with predictions expressed as weighted averages of observed responses and weights induced by recursive partitioning \citep{lin2006,biau2010,meinshausen2006,athey2019}. More recent work has treated random forests as statistical estimators and developed asymptotic distributional theory using U-statistic, V-statistic, and influence-function representations \citep{mentch2016,zhou2021v,wager2018}. Related work on variance estimation \citep{xu2024variance} conditions on the forest construction mechanism and yields asymptotically valid estimators under repeated sampling.

These approaches characterize sampling variability---how the forest predictor would change under repeated draws of the training data from the population---but they do not analyze the procedural variability induced by the randomization used to construct the forest at a fixed observed dataset. More generally, the variability of a random forest predictor reflects two distinct sources of randomness: sampling variability arising from repeated draws of the dataset, and procedural variability induced by the forest's own randomized construction. Existing theory primarily addresses the former.

It has also long been recognized that correlation between ensemble members limits the extent to which aggregation can reduce predictive variability \citep{breiman1996,dietterich2000,buhlmann2002,kuncheva2003,brown2005}. More recent work emphasizes that additional sources of randomization can improve predictive performance by weakening dependence between trees, even when individual trees become more variable \citep{mentch2020,mentch2022,liu2025}, and other analyses study algorithmic variance and convergence of randomized ensembles conditional on the training data \citep{lopes2020,lopes2019b}. These contributions document the presence and consequences of dependence, but do not yield an explicit finite-sample decomposition of predictive variability or isolate the structural sources of dependence induced by distinct design choices.

As a practical consequence, no existing method provides pointwise variance estimates for a deployed random forest conditional on the data used to construct it. Existing asymptotic approaches---including the infinitesimal jackknife \citep{wager2014}, U-statistic \citep{mentch2016}, and V-statistic \citep{zhou2021v} frameworks---are developed for regression-type targets and quantify sampling variability of the infinite-aggregation target across repeated datasets, answering how the forest's target would change if new training data were collected. Conformal prediction methods \citep{lei2018,romano2019} provide distribution-free coverage guarantees but do not decompose the sources of predictive variability or characterize the forest's own design-induced uncertainty. For a practitioner with a fitted forest in hand, none of these approaches quantify the total uncertainty of the delivered prediction at a given covariate value. For classification tasks that yield predicted probabilities, the asymptotic regression machinery can be applied with binary outcomes encoded as $Y \in \{0,1\}$; however, existing theory is formulated for regression-type targets and does not provide pointwise uncertainty quantification for the estimated conditional probability $\hat{p}(x)$ from a deployed forest.

\subsection{A design-based perspective}

Recent work has synthesized Random Forests as adaptive smoothing procedures \citep{curth2024}. We adopt a complementary perspective: we treat a random forest as a finite-sample statistical procedure generated by an explicit randomized design acting on a realized covariate configuration. Specifically, we fix the observed covariate points $X$ and analyze predictive variability under the joint randomness of the outcome realization $(Y \mid X)$ and the tree-generating mechanism $\theta$.

This conditioning choice is deliberate. Fixing $X$ but not $Y$ preserves both sources of variability that affect the forest predictor at the moment it is constructed: the intrinsic dispersion of outcomes at realized covariates $X$, and algorithmic randomness introduced by the forest procedure. A practitioner observes a fixed set of covariate profiles determined by the study design; what remains uncertain is both what outcomes those profiles produce and how the tree-building algorithm partitions them. Conditioning on $X$ captures this joint uncertainty.

Within this framework, each tree arises from randomized operations---observation selection \citep{breiman1996}, random subspace selection for candidate splits \citep{ho1998}, and random split choice---that determine how outcomes are locally averaged to form predictions. The tree-generating mechanism induces a probability distribution over terminal regions and local averaging rules conditional on $X$, and all algorithmic variability is defined with respect to this design. Outcome randomness enters through the conventional fixed-design model $Y \mid X$. All results are finite-sample and hold at fixed $n$.

When the dataset itself is viewed as random under repeated sampling, the unconditional variability of the forest predictor decomposes into a sampling component and a design component. The sampling component reflects instability of the induced infinite-forest target across datasets and, under standard stability conditions, decreases at the $O(n^{-1})$ rate. The design component captures variability introduced by the forest's randomized construction acting on the realized covariate configuration and depends explicitly on aggregation, subsampling, and partitioning choices. Section~\ref{sec:rf_procedures} formalizes this decomposition and motivates our focus on the design-induced variability that distinguishes random forests as statistical procedures.

We study the random forest predictor itself, not resampling-based performance estimates such as out-of-bag evaluation. Out-of-bag evaluation is a form of internal cross-validation targeting predictive risk; our analysis concerns the behavior of the fitted predictor at a fixed prediction point. This parallels classical regression analysis, in which one studies properties of a fitted model independently of how cross-validation is used to assess its predictive performance.

We show that each tree induces a data-adaptive conditional regression, which can be written as as a weighted average of observed responses, and that the forest predictor is an average of these randomized regression functions. This formulation yields an exact finite-sample variance identity that separates aggregation variability from a structural dependence component that persists under infinite aggregation, and reveals how common forest hyperparameters govern resolution, single-tree variability, and structural dependence.

\section{Random Forests as Statistical Procedures}
\label{sec:rf_procedures}

\subsection{Variance decomposition and scope}
\label{subsec:variance_decomp}

We distinguish two sources of variability in random forest prediction: procedural variability arising from the forest-generating randomization acting on a realized covariate configuration, and sampling variability arising from repeated draws of the dataset from an underlying population.

Let $\mathcal D_n^\star=\{(X_i,Y_i)\}_{i=1}^n$ denote a generic sample of size $n$ from the population, and let $\hat f_B(x;\mathcal D,\theta)$ denote the $B$-tree forest predictor constructed from dataset $\mathcal D$ using design randomization $\theta$. To isolate variability conditional on the realized covariate configuration, we apply the law of total variance with respect to $X$:
\[
\Var\!\bigl(\hat f_B(x;\mathcal D_n^\star,\theta)\bigr)
=
\mathbb E\!\left[
\Var\!\bigl(\hat f_B(x;\mathcal D_n^\star,\theta)\mid X\bigr)
\right]
+
\Var\!\left(
\mathbb E\!\bigl[\hat f_B(x;\mathcal D_n^\star,\theta)\mid X\bigr]
\right).
\]

The first term represents the finite-sample variability of a forest at a realized covariate configuration, where randomness arises jointly from outcome variation under the fixed-design model $Y \mid X$ and from the tree-generating mechanism $\theta$. The second term reflects sampling variability of the covariate configuration itself; it captures how the induced infinite-forest target changes across repeated draws of $X$ from the population. Under standard stability conditions, this sampling component is generically $O(n^{-1})$; see Appendix~\ref{app:sampling_rate} and related results in \citet{mentch2016,wager2018}.

Accordingly, for the remainder of this paper we fix the realized covariate configuration $X$ and analyze the finite-sample variability of $\hat f_B(x)$ under the joint randomness of $(Y \mid X)$ and the forest-generating mechanism $\theta$. All probabilistic quantities are therefore evaluated conditional on $X$.

\subsection{Tree-level randomized regression functions}
\label{subsec:tree_level}

We formalize a single tree as a randomized conditional predictor acting on a realized covariate configuration $X=\{X_i\}_{i=1}^n$, with $X_i\in\mathcal X\subset\mathbb R^p$. Outcomes satisfy a fixed-design model $Y\mid X$, and randomness in the tree arises jointly from the outcome realization and from the algorithmic construction. A particular realization of this randomization is indexed by $\theta$, and the resulting predictor is denoted by $T_\theta$. At a fixed realization $(Y,\theta)$ and covariate configuration $X$, the tree and its induced conditional predictor are deterministic functions.

For a prediction point $x\in\mathcal X$, the tree-generating mechanism $\theta$ induces a random averaging set
\[
A_\theta(x)
:=
\{\, i\in\{1,\ldots,n\} : X_i \text{ is routed with } x \text{ under tree } \theta \,\},
\]
consisting of the training observations grouped with $x$ in the terminal node reached by $x$. The set $A_\theta(x)$ is the primitive object governing the tree-level prediction at $x$ and depends on all sources of algorithmic randomness. Define the terminal node membership indicator
\[
M_{i,\theta}(x)
:=
\mathbf 1\{ i\in A_\theta(x) \},
\]
which records whether observation $i$ contributes to the prediction at $x$ in tree $\theta$. Under squared-error loss, the tree prediction at $x$ is the CART terminal-node estimate \citep{breiman1984},
\[
T_\theta(x)
=
\frac{1}{|A_\theta(x)|}\sum_{i\in A_\theta(x)} Y_i
=
\sum_{i=1}^n W_i(x;\theta)\,Y_i,
\qquad
W_i(x;\theta)
:=
\frac{M_{i,\theta}(x)}{|A_\theta(x)|}.
\]
The weights $W_i(x;\theta)$ are normalized membership indicators derived entirely from the averaging set $A_\theta(x)$. As $\theta$ varies under the tree-generating mechanism, the prediction $T_\theta(x)$ varies solely through the induced averaging set, equivalently through the collection of indicators $\{M_{i,\theta}(x)\}_{i=1}^n$. Although the covariate configuration $X$ is fixed, the averaging set $A_\theta(x)$ depends on the realized outcomes $Y$ because split selection optimizes impurity criteria computed from the training responses available to tree $\theta$.

Representations of random forests as adaptive weighted averages of the training responses are well established \citep{lin2006,biau2010,scornet2015} and underlie extensions such as quantile regression forests \citep{meinshausen2006}. Our contribution is not this algebraic form, but the probabilistic framing: we treat the random weight vector $W(x;\theta)$ as the primary random variable of interest. The infinite-aggregation predictor is the expectation of this random weight vector under the tree-generating mechanism conditional on $X$, and subsequent variance properties are determined by its distribution.

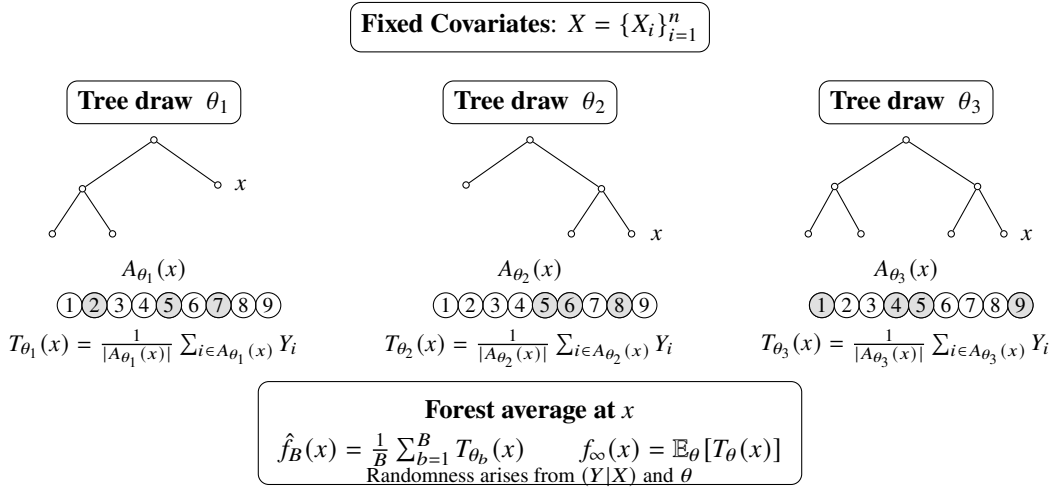
\begin{figure}[t]
	\centering
	\begin{tikzpicture}[font=\small, >=Latex]
		
		\tikzset{
			box/.style={draw, rounded corners, inner sep=4pt, align=center},
			arrow/.style={->, line width=0.6pt},
			treenode/.style={draw, circle, inner sep=0.85pt},
			idx/.style={circle, draw, inner sep=0.95pt},
			picked/.style={circle, draw, fill=black!12, inner sep=0.95pt},
		}
		
		\def\xL{-5.0}
		\def\xC{0.0}
		\def\xR{5.0}
		
		\def\yTop{3.05}
		\def\yLab{2.05}
		\def\yTree{1.55}
		\def\yStripLab{-0.20}
		\def\yStrip{-.65}
		\def\yFormula{-1.22}
		\def\yForest{-2.00}  
		\def\yNote{-2.56}    
		
		\node[box] (data) at (0,\yTop) {%
			\textbf{Fixed Covariates}:
			$X = \{X_i\}_{i=1}^n$%
		};
		
		\node[box] (lab1) at (\xL,\yLab) {\textbf{Tree draw } $\theta_1$};
		\node[box] (lab2) at (\xC,\yLab) {\textbf{Tree draw } $\theta_2$};
		\node[box] (lab3) at (\xR,\yLab) {\textbf{Tree draw } $\theta_3$};
		
		
		\node[treenode] (t1a) at (\xL,\yTree) {};
		\node[treenode] (t1b) at (\xL-0.95,\yTree-0.65) {};
		\node[treenode] (t1c) at (\xL+0.85,\yTree-0.60) {};
		\node[treenode] (t1d) at (\xL-1.35,\yTree-1.25) {};
		\node[treenode] (t1e) at (\xL-0.55,\yTree-1.25) {};
		\draw (t1a) -- (t1b);
		\draw (t1a) -- (t1c);
		\draw (t1b) -- (t1d);
		\draw (t1b) -- (t1e);
		\node[anchor=west, font=\footnotesize] at (\xL+0.95,\yTree-0.60) {$x$};
		
		\node[treenode] (t2a) at (\xC,\yTree) {};
		\node[treenode] (t2b) at (\xC-0.85,\yTree-0.60) {};
		\node[treenode] (t2c) at (\xC+0.95,\yTree-0.65) {};
		\node[treenode] (t2d) at (\xC+0.55,\yTree-1.25) {};
		\node[treenode] (t2e) at (\xC+1.35,\yTree-1.25) {};
		\draw (t2a) -- (t2b);
		\draw (t2a) -- (t2c);
		\draw (t2c) -- (t2d);
		\draw (t2c) -- (t2e);
		\node[anchor=west, font=\footnotesize] at (\xC+1.48,\yTree-1.26) {$x$};
		
		\node[treenode] (t3a) at (\xR,\yTree) {};
		\node[treenode] (t3b) at (\xR-0.95,\yTree-0.62) {};
		\node[treenode] (t3c) at (\xR+0.95,\yTree-0.62) {};
		\node[treenode] (t3d) at (\xR-1.30,\yTree-1.25) {};
		\node[treenode] (t3e) at (\xR-0.60,\yTree-1.25) {};
		\node[treenode] (t3f) at (\xR+0.60,\yTree-1.25) {};
		\node[treenode] (t3g) at (\xR+1.30,\yTree-1.25) {};
		\draw (t3a) -- (t3b);
		\draw (t3a) -- (t3c);
		\draw (t3b) -- (t3d);
		\draw (t3b) -- (t3e);
		\draw (t3c) -- (t3f);
		\draw (t3c) -- (t3g);
		\node[anchor=west, font=\footnotesize] at (\xR+1.40,\yTree-1.26) {$x$};
		
		\node[font=\footnotesize] at (\xL,\yStripLab) {$A_{\theta_1}(x)$};
		\node[font=\footnotesize] at (\xC,\yStripLab) {$A_{\theta_2}(x)$};
		\node[font=\footnotesize] at (\xR,\yStripLab) {$A_{\theta_3}(x)$};
		
		\def\dx{0.33}
		\def\xstart{-1.45}
		
		\foreach \k in {1,...,9} { \node[idx] (s1-\k) at (\xL+\xstart+\dx*\k,\yStrip) {\scriptsize \k}; }
		\foreach \k in {2,5,7} { \node[picked] at (s1-\k.center) {\scriptsize \k}; }
		
		\foreach \k in {1,...,9} { \node[idx] (s2-\k) at (\xC+\xstart+\dx*\k,\yStrip) {\scriptsize \k}; }
		\foreach \k in {5,6,8} { \node[picked] at (s2-\k.center) {\scriptsize \k}; }
		
		\foreach \k in {1,...,9} { \node[idx] (s3-\k) at (\xR+\xstart+\dx*\k,\yStrip) {\scriptsize \k}; }
		\foreach \k in {1,4,5,9} { \node[picked] at (s3-\k.center) {\scriptsize \k}; }
		
		\node[align=center, font=\footnotesize] at (\xL,\yFormula)
		{$T_{\theta_1}(x)=\frac{1}{|A_{\theta_1}(x)|}\sum_{i\in A_{\theta_1}(x)} Y_i$};
		
		\node[align=center, font=\footnotesize] at (\xC,\yFormula)
		{$T_{\theta_2}(x)=\frac{1}{|A_{\theta_2}(x)|}\sum_{i\in A_{\theta_2}(x)} Y_i$};
		
		\node[align=center, font=\footnotesize] at (\xR,\yFormula)
		{$T_{\theta_3}(x)=\frac{1}{|A_{\theta_3}(x)|}\sum_{i\in A_{\theta_3}(x)} Y_i$};
		
		\node[box, inner sep=7pt] (forest) at (0,\yForest-.35) {%
			\textbf{Forest average at $x$}\\[1.0 mm]
			$\hat f_B(x)=\frac{1}{B}\sum_{b=1}^B T_{\theta_b}(x)$\qquad
			$f_\infty(x)=\mathbb E_\theta[T_\theta(x)]$%
		};
		
		\node[align=center, font=\scriptsize] at (0,\yNote-.37)
		{Randomness arises from $(Y|X)$ and $\theta$};
		
	\end{tikzpicture}
	\caption{\textbf{Random forests as randomized local averaging on fixed outcomes.}
		Independent draws of the tree-generating design $\theta_1,\theta_2,\theta_3$ induce realized tree structures and tree specific terminal-node membership sets $A_{\theta_b}(x)$ for prediction point $x$. Each tree prediction is the average over indexed outcomes in its membership set, and averaging over independent draws yields $\hat f_B(x)$ and $f_\infty(x)$.}
	\label{fig:rf_design_schematic}
\end{figure}

\subsection{The forest predictor and its design-based target}
\label{subsec:forest_estimator}

A random forest predictor is a Monte Carlo average of randomized tree-level conditional predictors. Let $\{T_{\theta_b}\}_{b=1}^B$ denote $B$ independent realizations of the tree-generating mechanism described in Section~\ref{subsec:tree_level}. The finite forest predictor is
\[
\hat f_B(x)
=
\frac{1}{B}\sum_{b=1}^B T_{\theta_b}(x)
=
\sum_{i=1}^n \hat W_i^{(B)}(x)\,Y_i,
\qquad
\hat W_i^{(B)}(x)
:=
\frac{1}{B}\sum_{b=1}^B W_i(x;\theta_b).
\]

The aggregation level $B$ controls only the amount of Monte Carlo averaging and does not alter the distribution of the underlying tree-level predictor. Let $\theta$ denote a generic draw from the tree-generating mechanism and define the infinite-aggregation target as the $\theta$-average conditional on the realized covariate configuration,
\[
f_\infty(x)
:=
\mathbb E\!\left[T_\theta(x)\mid X\right].
\]
Because $X$ is fixed but the training outcomes remain random under the fixed-design model, $f_\infty(x)$ is random through $Y$. The finite forest predictor $\hat f_B(x)$ is a Monte Carlo approximation to $f_\infty(x)$ and satisfies $\mathbb E[\hat f_B(x)\mid X]=f_\infty(x)$. Finite forests fluctuate around this target due to limited aggregation, while structural dependence between tree-level conditional predictors induces a non-zero covariance component as $B\to\infty$. The variance analysis that follows makes this separation explicit.

\section{Variance of Random Forest Predictors}
\label{sec:rf_variance}

\subsection{Finite-sample variance identity}
\label{subsec:rf_variance_identity}

We now characterize the finite-sample variance of the random forest predictor $\hat f_B(x)$ conditional on the realized covariate configuration $X$. Define
\[
\sigma_T^2(x)
=
\Var\!\bigl(T_\theta(x)\mid X\bigr),
\qquad
C_T(x)
=
\Cov\!\bigl(T_\theta(x),T_{\theta'}(x)\mid X\bigr),
\]
where $\theta$ and $\theta'$ denote independent draws from the tree-generating mechanism. Although trees are generated independently, the resulting predictions are not independent, because each tree averages outcomes from the same realized dataset and the same observations can contribute to predictions across multiple trees. This dependence is captured by the covariance term $C_T(x)$.

\begin{theorem}[Finite-sample variance identity for random forests]
	\label{thm:finite_var}
	For any $B \ge 1$,
	\[
	\Var\!\left(\hat f_B(x)\mid X\right)
	=
	\frac{1}{B}\,\sigma_T^2(x)
	+
	\frac{B-1}{B}\,C_T(x).
	\]
\end{theorem}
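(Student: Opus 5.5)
The plan is to expand the variance of the Monte Carlo average directly, using bilinearity of conditional covariance. Conditional on $X$, the joint randomness is $(Y,\theta_1,\ldots,\theta_B)$, with $\theta_1,\ldots,\theta_B$ drawn i.i.d.\ from the tree-generating mechanism. They are independent of each other given the data $(X,Y)$, but not of $Y$, since splits depend on the outcomes. Writing $\hat f_B(x)=B^{-1}\sum_b T_{\theta_b}(x)$, I would write
\[
\Var\!\bigl(\hat f_B(x)\mid X\bigr)
=
\frac{1}{B^2}\sum_{b=1}^B\sum_{b'=1}^B \Cov\!\bigl(T_{\theta_b}(x),T_{\theta_{b'}}(x)\mid X\bigr).
\]
This requires $\mathbb E[T_\theta(x)^2\mid X]<\infty$, which is implicit in $\sigma_T^2(x)$ being defined. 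It holds, for example, whenever $Y$ has finite conditional second moments, because $T_\theta(x)$ is a convex combination of the $Y_i$ and so $T_\theta(x)^2\le \max_i Y_i^2$.

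Next I would split the double sum into diagonal and off-diagonal terms.

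\emph{Diagonal terms ($b=b'$).} Each of the $B$ terms equals $\Var(T_{\theta_b}(x)\mid X)$. Because every $\theta_b$ has the same law as a generic $\theta$ jointly with $(X,Y)$, each equals $\sigma_T^2(x)$.

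\emph{Off-diagonal terms ($b\neq b'$).} There are $B(B-1)$ of them. The key point is exchangeability: the pair $(\theta_b,\theta_{b'})$, jointly with $Y$ given $X$, has the same distribution as the generic pair $(\theta,\theta')$ of independent draws used to define $C_T(x)$. Hence each off-diagonal covariance equals $C_T(x)$.

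Collecting terms gives $B^{-2}\bigl[B\sigma_T^2(x)+B(B-1)C_T(x)\bigr]$, which simplifies to the stated identity for every $B\ge1$. When $B=1$ the off-diagonal sum is empty and the formula reduces correctly to $\sigma_T^2(x)$.

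The only step needing care is the off-diagonal one. I must make precise that ``independent draws'' means conditional independence of $\theta$ and $\theta'$ given $(X,Y)$, with a common conditional law; they are not independent of $Y$. Under that reading, the joint law of $(T_{\theta_b}(x),T_{\theta_{b'}}(x))$ given $X$ does not depend on the indices, so $C_T(x)$ is well defined and common to all pairs.

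I would also remark that $C_T(x)$ need not vanish. By the law of total covariance conditioning on $Y$, the conditional covariance given $(X,Y)$ is zero, but the covariance of the conditional means remains. This gives $C_T(x)=\Var(f_\infty(x)\mid X)$, which links the identity to the target defined in Section~\ref{subsec:forest_estimator}.
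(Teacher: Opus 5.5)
Your proposal is correct and matches the paper's proof. Both expand $\Var(\hat f_B(x)\mid X)$ as $B^{-2}$ times the double sum of pairwise conditional covariances, use exchangeability to identify the $B$ diagonal terms with $\sigma_T^2(x)$ and the $B(B-1)$ off-diagonal terms with $C_T(x)$, and collect. Your added checks on finite second moments, and your reading of ``independent draws'' as a common joint law of $(Y,\theta_b,\theta_{b'})$ given $X$, are harmless refinements of points the paper leaves implicit.
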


\begin{proof}
	Write $\hat f_B(x)=B^{-1}\sum_{b=1}^B T_{\theta_b}(x)$. Then
	\[
	\Var\!\bigl(\hat f_B(x)\mid X\bigr)
	=
	\frac{1}{B^2}\Var\!\left(\sum_{b=1}^B T_{\theta_b}(x)\mid X\right)
	=
	\frac{1}{B^2}\sum_{b=1}^B\sum_{b'=1}^B
	\Cov\!\bigl(T_{\theta_b}(x),T_{\theta_{b'}}(x)\mid X\bigr).
	\]
	
	By exchangeability of the tree-generating mechanism,
	$\Var(T_{\theta_b}(x)\mid X)=\sigma_T^2(x)$ for all $b$
	and $\Cov(T_{\theta_b}(x),T_{\theta_{b'}}(x)\mid X)=C_T(x)$ for all $b\neq b'$.
	Therefore
	\[
	\Var\!\bigl(\hat f_B(x)\mid X\bigr)
	=
	\frac{1}{B^2}\Bigl(B\,\sigma_T^2(x)+B(B-1)\,C_T(x)\Bigr)
	=
	\frac{1}{B}\sigma_T^2(x)+\frac{B-1}{B}C_T(x).
	\]
\end{proof}

This identity separates variability due to finite aggregation, $\sigma_T^2(x)/B$, from structural dependence induced by the design, $C_T(x)$.

\subsection{Decomposing the single-tree variance}
\label{subsec:single_tree_variance_decomposition}
We now decompose the single-tree variance term $\sigma_T^2(x)=\Var(T_\theta(x)\mid X)$ appearing in Theorem~\ref{thm:finite_var}. Let $\mathcal I_\theta=(I_{\theta,1},\dots,I_{\theta,n})$ denote the resampling indicators for the tree indexed by $\theta$, where $I_{\theta,i}=1$ if observation $i$ is exposed to the tree construction and $I_{\theta,i}=0$ otherwise. The resampling indicators determine which observations are eligible to appear in the random averaging set $A_\theta(x)$, while the remaining algorithmic randomness---feature subsampling and split selection---determines how $x$ is grouped with those observations. We condition on $\mathcal I_\theta$ in addition to $X$ to separate these two sources of variability; this follows the operational structure of the algorithm, in which the resampling step determines which observations are available to each tree before any splitting decisions are made. Applying the law of total variance yields

\begin{equation}
	\label{eq:lotv_sigma}
	\Var\!\bigl(T_\theta(x)\mid X\bigr)
	=
	\underbrace{\mathbb E\!\left[
		\Var\!\bigl(T_\theta(x)\mid \mathcal I_\theta, X\bigr)
		\mid X
		\right]}_{V_{\mathrm{in}}(x)}
	+
	\underbrace{\Var\!\left(
		\mathbb E\!\bigl[T_\theta(x)\mid \mathcal I_\theta, X\bigr]
		\mid X
		\right)}_{V_{\mathrm{out}}(x)}.
\end{equation}

The first term captures variability arising from random splitting decisions after the resampling pattern is fixed; the second captures variability induced by changes in the resampling realization itself. Both components are evaluated under the joint randomness of $(Y\mid X)$ and the tree-generating mechanism $\theta$. 

\subsection{Decomposing the within-resample variance term}
\label{subsec:within_resample_variance}

We refine $V_{\mathrm{in}}(x)$ by conditioning further on the averaging set $A_\theta(x)$, which fixes the local grouping used to form the prediction at $x$. Applying the law of total variance with $A_\theta(x)$ as the intermediate conditioning object yields
\begin{equation}
	\label{eq:lotv_within_resample}
	V_{\mathrm{in}}(x)
	=
	\mathbb E\!\left[
	\Var\!\left(
	T_\theta(x)\mid \mathcal I_\theta, A_\theta(x), X
	\right)
	\;\middle|\;
	X
	\right]
	+
	\mathbb E\!\left[
	\Var\!\left(
	\mathbb E\!\left[T_\theta(x)\mid \mathcal I_\theta, A_\theta(x), X\right]
	\;\middle|\;
	\mathcal I_\theta, X
	\right)
	\;\middle|\;
	X
	\right].
\end{equation}

\noindent\textbf{First term: outcome noise through a fixed averaging rule.} Once both the resampling pattern and $A_\theta(x)$ are fixed, the weights $W_i(x;\theta)$ are deterministic and the tree prediction is a fixed linear combination of outcomes. Under conditional independence of outcomes across observational units,
\[
\Var\!\left(T_\theta(x)\mid \mathcal I_\theta, A_\theta(x), X\right)
=
\sum_{i=1}^n W_i(x;\theta)^2\,\Var(Y_i\mid A_\theta(x), X).
\]
The resampling indicators $\mathcal I_\theta$ are drawn independently of the outcome-generating process and carry no information about $Y_i$ beyond $X$, so they can be removed from the conditioning. The averaging set $A_\theta(x)$, by contrast, depends on the realized outcomes through data-adaptive split selection and thus remains. Taking expectations over the design yields
\[
\mathbb E\!\left[
\Var\!\left(T_\theta(x)\mid \mathcal I_\theta, A_\theta(x), X\right)
\;\middle|\;
X
\right]
=
\mathbb E\!\left[
\sum_{i=1}^n W_i(x;\theta)^2\,\Var(Y_i\mid A_\theta(x), X)
\;\middle|\;
X
\right].
\]
This term measures how much outcome noise at the contributing observations influences the tree prediction through the conditional prediction rule. The conditional variances $\Var(Y_i\mid A_\theta(x), X)$ are properties of the outcome model at the realized design points; the design controls the weights. 

\noindent\textbf{Second term: instability of the grouping rule.} Even with the resampling pattern fixed, different realizations of the remaining tree-level randomization can route $x$ into different terminal nodes, inducing different averaging sets $A_\theta(x)$. Because $\mathbb E[T_\theta(x)\mid \mathcal I_\theta, A_\theta(x), X] = \sum_{i=1}^n W_i(x;\theta)\,Y_i$ with deterministic weights, this component measures variability in the prediction at $x$ induced by randomness in which terminal node $x$ is routed to:
\[
\mathbb E\!\left[
\Var\!\left(
\sum_{i=1}^n W_i(x;\theta)\,Y_i
\;\middle|\;
\mathcal I_\theta, X
\right)
\;\middle|\;
X
\right].
\]
This term is zero if and only if the averaging set $A_\theta(x)$ is almost surely constant given $(\mathcal I_\theta, X)$, and reflects instability in which observations the tree-growing procedure groups with $x$.

\subsection{Decomposing the resampling component}
\label{subsec:resampling_component}

We next interpret $V_{\mathrm{out}}(x)$, which captures variability induced by the resampling design across trees. Conditioning on the resampling indicators and using the weighted representation,
\[
\mathbb E\!\left[T_\theta(x)\mid \mathcal I_\theta, X\right]
=
\sum_{i=1}^n \bar W_i(x;\mathcal I_\theta)\,Y_i,
\qquad
\bar W_i(x;\mathcal I_\theta)
:=
\mathbb E\!\left[W_i(x;\theta)\mid \mathcal I_\theta, X\right],
\]
where $\bar W_i(x;\mathcal I_\theta)$ is the expected weight assigned to observation $i$ at $x$ under the remaining tree-level randomization, given the resampling realization. Therefore
\[
V_{\mathrm{out}}(x)
=
\Var\!\left(
\sum_{i=1}^n \bar W_i(x;\mathcal I_\theta)\,Y_i
\;\middle|\;
X
\right).
\]
Different resampling realizations $\mathcal I_\theta$ change which observations are exposed to the tree, altering the distribution of $A_\theta(x)$ and shifting the expected local prediction rule $\bar W(x;\mathcal I_\theta)$. Unlike $V_{\mathrm{in}}(x)$, which reflects instability in how a given resampled dataset is locally partitioned, $V_{\mathrm{out}}(x)$ reflects variability from changes in which observations are eligible to contribute to the prediction at $x$.

\section{Decomposing the Covariance Structure}
\label{sec:covariance_decomposition}

\subsection{Law of Total Covariance Decomposition}
\label{subsec:lotc_decomposition}

We now decompose the covariance term $C_T(x)$ appearing in Theorem~\ref{thm:finite_var}. From the finite-sample variance identity, $C_T(x)$ is the limiting contribution to predictive variability as $B\to\infty$, representing dependence between tree-level conditional predictors that persists under infinite aggregation. Although trees indexed by $\theta$ and $\theta'$ are generated independently, their predictions at $x$ are generally dependent because both act on the same realized outcomes under the fixed covariate configuration $X$. To isolate the sources of this dependence, we again condition on the resampling indicators $\mathcal I_\theta$ and $\mathcal I_{\theta'}$. Applying the law of total covariance conditional on $X$ with respect to the resampling indicators yields
\begin{equation}
	\label{eq:lotc}
	C_T(x)
	=
	\mathbb E\!\left[
	\Cov\!\left(
	T_\theta(x), T_{\theta'}(x)
	\mid
	\mathcal I_\theta, \mathcal I_{\theta'}, X
	\right)
	\;\middle|\;
	X
	\right]
	+
	\Cov\!\left(
	\mathbb E\!\left[T_\theta(x)\mid \mathcal I_\theta, X\right],
	\mathbb E\!\left[T_{\theta'}(x)\mid \mathcal I_{\theta'}, X\right]
	\;\middle|\;
	X
	\right).
\end{equation}

The two terms in \eqref{eq:lotc} correspond to distinct design mechanisms. The first captures dependence arising from joint inclusion of observations in the averaging sets $A_\theta(x)$ and $A_{\theta'}(x)$, whereby the same outcomes receive positive weight in both trees. The second captures dependence arising from alignment of the induced conditional predictors at $x$: even when two trees are trained on disjoint subsets of observations, they may group $x$ with observations from the same part of the covariate space, because the underlying signal structure leads both trees to make similar partitioning decisions in the region containing $x$. Resampling schemes govern the first term through joint inclusion probabilities, while feature-level randomization and split selection govern the second through how frequently trees arrive at similar groupings of $x$.

\subsection{Covariance Induced by Shared Training Observations}
\label{subsec:obs_overlap}

We first analyze the leading term in \eqref{eq:lotc}. This component is nonzero whenever the same observation receives positive weight in both trees at $x$. Joint inclusion in the resampling realizations is necessary but not sufficient; the observation must also lie in the terminal region containing $x$ in both trees. Using the weighted representation $T_\theta(x)=\sum_{i=1}^n W_i(x;\theta)\,Y_i$ and conditioning on $(\mathcal I_\theta,\mathcal I_{\theta'},X)$, the covariance expands as
\begin{align*}
	\Cov\!\left(
	T_\theta(x),T_{\theta'}(x)
	\;\middle|\;
	\mathcal I_\theta,\mathcal I_{\theta'}, X
	\right)
	&=
	\sum_{i=1}^n\sum_{j=1}^n
	W_i(x;\theta)\,W_j(x;\theta')\,
	\Cov(Y_i,Y_j\mid \mathcal I_\theta,\mathcal I_{\theta'}, X).
\end{align*}

Because the resampling indicators are drawn independently of the outcome-generating process, conditioning on $(\mathcal I_\theta,\mathcal I_{\theta'})$ does not impact the joint distribution of $(Y_i, Y_j)$ given $X$. Under conditional independence of outcomes across observational units, $\Cov(Y_i,Y_j\mid X)=0$ for $i\neq j$, so the double sum reduces to
\[
\Cov\!\left(
T_\theta(x),T_{\theta'}(x)
\;\middle|\;
\mathcal I_\theta,\mathcal I_{\theta'}, X
\right)
=
\sum_{i=1}^n
W_i(x;\theta)\,W_i(x;\theta')\,\sigma_i^2,
\]
where $\sigma_i^2 := \Var(Y_i \mid X)$ is the conditional outcome variance at the realized design point $X_i$. Taking expectations over the tree-generating randomization conditional on $X$ yields
\[
\mathbb E\!\left[
\Cov\!\left(
T_\theta(x),T_{\theta'}(x)
\mid
\mathcal I_\theta,\mathcal I_{\theta'}, X
\right)
\;\middle|\;
X
\right]
=
\sum_{i=1}^n
\mathbb E\!\left[
W_i(x;\theta)\,W_i(x;\theta')
\;\middle|\;
X
\right]
\sigma_i^2.
\]

The joint weight factor decomposes as
\[
\mathbb E\!\left[
W_i(x;\theta)\,W_i(x;\theta')
\;\middle|\;
X
\right]
=
\mathbb P\!\left(
I_{\theta,i}=1,\;I_{\theta',i}=1
\right)
\,
\mathbb E\!\left[
W_i(x;\theta)\,W_i(x;\theta')
\mid
I_{\theta,i}=I_{\theta',i}=1, X
\right],
\]
separating joint inclusion under the resampling scheme from the conditional allocation of weight induced by the local partitions. This contribution reflects dependence induced purely by repeated reuse of the same observations in the predictions at $x$, and is identically zero only when the resampling design assigns zero probability to joint inclusion of any observation across trees.

\subsection{Covariance Induced by Partition Alignment}
\label{subsec:partition_alignment}

We now analyze the second term in \eqref{eq:lotc}, which captures dependence arising from alignment of the induced conditional predictors across independently generated trees. This mechanism does not require any observation to receive positive weight in both trees at $x$. Instead, it arises when distinct trees, potentially trained on disjoint subsets, lead to a similar terminal region containing $x$, therefore yielding similar conditional prediction rules. Two trees trained on different observations may split on the same variables at similar thresholds along the path to $x$, routing $x$ into structurally similar terminal regions; each tree then averages outcomes from the same covariate-defined subpopulation even if the contributing observations are disjoint.

To formalize this, let $R_\theta(x)\subset\mathcal X$ denote the terminal region containing $x$ under tree draw $\theta$. The averaging set can be written equivalently as
\[
A_\theta(x)=\{\, i : I_{\theta,i}=1,\; X_i \in R_\theta(x)\,\},
\qquad
W_i(x;\theta)=\frac{\mathbf 1\{I_{\theta,i}=1\}\,\mathbf 1\{X_i\in R_\theta(x)\}}{|A_\theta(x)|}.
\]
When independent trees induce similar regions $R_\theta(x)$ and $R_{\theta'}(x)$, they average different observations drawn from essentially the same local subpopulation, producing dependence through aligned partition structure rather than shared outcomes. Define the resampling-conditional prediction
\[
Z_\theta(x)
:=
\mathbb E[T_\theta(x)\mid \mathcal I_\theta, X]
=
\sum_{i=1}^n \bar W_i(x;\mathcal I_\theta)\,Y_i,
\]
where $\bar W_i(x;\mathcal I_\theta) := \mathbb E[W_i(x;\theta)\mid \mathcal I_\theta, X]$ averages over the distribution of terminal regions induced by split-level randomization, assigning observation $i$ mass proportional to how frequently it is grouped with $x$ across possible tree realizations given the resampling pattern. This is an intermediate quantity between the realized tree-level weight $W_i(x;\theta)$ and the forest-level expected weight $\mathbb{E}[W_i(x;\theta)\mid X]$, and its introduction makes the alignment mechanism formally identifiable as a distinct component of the covariance structure.

The alignment component is therefore
\[
C_{\mathrm{align}}(x)
:=
\Cov\!\left(
Z_\theta(x), Z_{\theta'}(x)
\;\middle|\;
X
\right).
\]
Under conditional independence of outcomes across observational units given $X$,
\begin{equation}
	\label{eq:calign_explicit}
	C_{\mathrm{align}}(x)
	=
	\sum_{i=1}^n
	\sigma_i^2\;
	\mathbb E\!\left[
	\bar W_i(x;\mathcal I_\theta)\,
	\bar W_i(x;\mathcal I_{\theta'})
	\;\middle|\;
	X
	\right],
\end{equation}
where $\sigma_i^2=\Var(Y_i\mid X)$. The joint expectation $\mathbb E[\bar W_i(x;\mathcal I_\theta)\,\bar W_i(x;\mathcal I_{\theta'})\mid X]$ measures whether independent trees tend to assign predictive weight at $x$ to the same observations after averaging over split-level randomization. When routing of $x$ is stable across tree realizations---so that similar split conditions recur and $x$ repeatedly falls into similar terminal regions---the observations receiving weight, though potentially disjoint across trees, are drawn from the same conditional subpopulation defined by the shared partition structure. Both tree predictions therefore estimate the same conditional mean, yielding positive covariance even without shared training observations.

Alignment reflects a more fundamental source of dependence than observation overlap. Two forests trained on independent samples from the same data-generating process $(Y\mid X)$ will discover similar partitions at $x$ whenever the signal structure produces stable split decisions, because the data-generating process---not the specific training observations---determines which splits are favorable. Observation overlap amplifies this dependence by additionally correlating the noise component: when the same $Y_i$ contributes to both predictions at $x$, outcome-level noise does not cancel across trees. Alignment is the primary structural mechanism, present whenever trees are trained on data from the same population; observation overlap provides a secondary reinforcement that requires shared training observations.

\begin{figure}[h]
	\centering
	\begin{tikzpicture}[font=\small, >=Latex]
		
		\tikzset{
			panel/.style={draw, rounded corners, inner sep=6pt},
			box/.style={draw, rounded corners, inner sep=4pt, align=center},
			treenode/.style={draw, circle, inner sep=0.8pt},
			leafbox/.style={draw, rounded corners, inner sep=2.5pt, font=\tiny, align=center},
			idx/.style={circle, draw, inner sep=0.9pt},
			picked/.style={circle, draw, fill=black!18, inner sep=0.9pt},
			lab/.style={font=\footnotesize, align=center},
			splitlab/.style={font=\tiny, fill=white, inner sep=1pt},
			routearrow/.style={
				->, red!70!black, dashed,
				line width=0.6pt,
				shorten <=1.6pt, shorten >=1.6pt
			},
			connector/.style={
				<->, red!70!black,
				line width=0.45pt,
				shorten <=1.2pt, shorten >=1.2pt
			}
		}
		
		\newlength{\panelW}
		\newlength{\panelHalfW}
		\setlength{\panelW}{0.46\textwidth}
		\setlength{\panelHalfW}{0.5\panelW}
		
		\begin{scope}[xshift=-\panelHalfW]
			
			\node[panel, minimum width=\panelW, minimum height=6.2cm, xshift=4pt] (A) {};
			\node[lab, anchor=north] at ([yshift=-1pt]A.north) {\textbf{(A) Observation overlap}};
			
			\node[box] at (-2.0,2.25) {Tree $\theta$};
			\node[box] at ( 2.0,2.25) {Tree $\theta'$};
			
			\node[treenode] (a1) at (-2.0,1.62) {};
			\node[treenode] at (-2.7,1.06) {};
			\node[treenode] at (-1.3,1.06) {};
			\draw (a1)--(-2.7,1.06);
			\draw (a1)--(-1.3,1.06);
			\node[font=\footnotesize] at (-1.1,1.06) {$x$};
			
			\node[treenode] (a2) at (2.0,1.62) {};
			\node[treenode] at (1.3,1.06) {};
			\node[treenode] at (2.7,1.06) {};
			\draw (a2)--(1.3,1.06);
			\draw (a2)--(2.7,1.06);
			\node[font=\footnotesize] at (2.9,1.06) {$x$};
			
			\node[lab] at (-2.0,0.55) {$A_\theta(x)$};
			\node[lab] at ( 2.0,0.55) {$A_{\theta'}(x)$};
			
			\foreach \k in {1,...,9} {
				\node[idx] (A1-\k) at (-2.0-1.2+0.3*\k,0.2) {\scriptsize\k};
				\node[idx] (A2-\k) at ( 2.0-1.2+0.3*\k,0.2) {\scriptsize\k};
			}
			
			\foreach \k in {3,5,7} { \node[picked] at (A1-\k.center) {\scriptsize\k}; }
			\foreach \k in {2,5,8} { \node[picked] at (A2-\k.center) {\scriptsize\k}; }
			
			\draw[thick, red!70!black, <->, shorten <=2pt, shorten >=2pt]
			(-2.0+0.3*5-1.2, -0.1) -- (2.0+0.3*5-1.2, -0.1);
			\node[font=\tiny, red!70!black] at (0,-0.35) {observation 5 weighted in both trees};
			
			\node[lab] at (0,-0.85) {\textit{Shared outcome} $\Rightarrow$ \textit{reuse-induced}};
			\node[lab] at (0,-1.15) {\textit{dependence}};
			
		\end{scope}
		
		\begin{scope}[xshift=\panelHalfW]
			
			\node[panel, minimum width=\panelW, minimum height=6.2cm] (B) {};
			\node[lab, anchor=north] at ([yshift=-1pt]B.north) {\textbf{(B) Alignment without overlap}};
			
			\node[box] at (-2.0,2.18) {Tree $\theta$};
			\node[box] at ( 2.0,2.18) {Tree $\theta'$};
		\node[font=\tiny] at (-2.0,1.76) {trained on $\{1,2,3,4\}$};
		\node[font=\tiny] at ( 2.0,1.76) {trained on $\{6,7,8,9\}$};

			\def\yRoot{1.38}
			\def\yLevA{0.74}
			\def\yLevB{0.06}
			\def\xShiftR{-0.28}
			
			\node[treenode] (L0)  at (-2.00,\yRoot) {};
			\node[splitlab, anchor=south] at (L0.north) {$X_1\!<\!c$};
			
			\node[treenode] (L1L) at (-2.85,\yLevA) {};
			\node[treenode] (L1R) at (-1.15,\yLevA) {};
			\draw (L0)--(L1L);
			\draw (L0)--(L1R);
			
			\node[splitlab, anchor=south] at (L1R.north) {$X_3\!<\!d$};
			
			\node[treenode] (L2RL) at (-1.65,\yLevB) {};
			\node[treenode] (L2RR) at (-0.65,\yLevB) {};
			\draw (L1R)--(L2RL);
			\draw (L1R)--(L2RR);
			
			\node[leafbox] (LeafL1) at (-2.85,\yLevA-0.36) {$\{1\}$};
			\node[leafbox] (LeafL2) at (-1.65,\yLevB-0.36) {$\{3\}$};
			\node[leafbox, text=red!70!black] (LeafLx) at (-0.65,\yLevB-0.36) {$\{2,4\}$};
			
			\draw[routearrow] ($(L0)!0.52!(L1R)+(-0.07,0)$) -- ($(L1R)!0.52!(L2RR)+(-0.07,0)$);
			\draw[routearrow] ($(L1R)!0.52!(L2RR)+(-0.07,0)$) -- ($(L2RR)!0.52!(LeafLx)+(-0.07,0)$);
			\node[font=\footnotesize, red!70!black, anchor=west] at ($(LeafLx.east)+(0.16,0.00)$) {$x$};
			
			\node[treenode] (R0)  at ( 2.00+\xShiftR,\yRoot) {};
			\node[splitlab, anchor=south] at (R0.north) {$X_1\!<\!c$};
			
			\node[treenode] (R1L) at ( 1.15+\xShiftR,\yLevA) {};
			\node[treenode] (R1R) at ( 2.85+\xShiftR,\yLevA) {};
			\draw (R0)--(R1L);
			\draw (R0)--(R1R);
			
			\node[splitlab, anchor=south] at (R1R.north) {$X_3\!<\!d$};
			
			\node[treenode] (R2RL) at ( 2.35+\xShiftR,\yLevB) {};
			\node[treenode] (R2RR) at ( 3.35+\xShiftR,\yLevB) {};
			\draw (R1R)--(R2RL);
			\draw (R1R)--(R2RR);
			
			\node[leafbox] (LeafR1) at (1.15+\xShiftR,\yLevA-0.36) {$\{6\}$};
			\node[leafbox] (LeafR2) at (2.35+\xShiftR,\yLevB-0.36) {$\{9\}$};
			\node[leafbox, text=red!70!black] (LeafRx) at (3.35+\xShiftR,\yLevB-0.36) {$\{7,8\}$};
			
			\draw[routearrow] ($(R0)!0.52!(R1R)+(-0.07,0)$) -- ($(R1R)!0.52!(R2RR)+(-0.07,0)$);
			\draw[routearrow] ($(R1R)!0.52!(R2RR)+(-0.07,0)$) -- ($(R2RR)!0.52!(LeafRx)+(-0.07,0)$);
			\node[font=\footnotesize, red!70!black, anchor=west] at ($(LeafRx.east)+(0.06,0.00)$) {$x$};
			
			\node[draw, dashed, red!70!black, line width=0.55pt, rounded corners=2pt, inner sep=2.0pt, fit=(LeafLx)] (FitL) {};
			\node[draw, dashed, red!70!black, line width=0.55pt, rounded corners=2pt, inner sep=2.0pt, fit=(LeafRx)] (FitR) {};
			
			\draw[connector]
			([yshift=-10pt]FitL.east) --
			node[below, font=\tiny, red!70!black, align=center, text width=2.10cm]
			{same splits\\same routing\\same region}
			([yshift=-10pt]FitR.west);
			
			\node[lab] at (-2.0,-1.6) {$T_\theta(x)=\tfrac{Y_2+Y_4}{2}$};
			\node[lab] at ( 2.0,-1.6) {$T_{\theta'}(x)=\tfrac{Y_7+Y_8}{2}$};

			\node[lab, text width=0.90\panelW] at (0,-2.30)
			{\textit{No shared outcomes, but identical partition}\\$\Rightarrow$ \textit{aligned averaging rules} $\Rightarrow$ \textit{dependence}};
			
		\end{scope}
		
	\end{tikzpicture}
	
\caption{\textbf{Two distinct design-induced dependence mechanisms at a fixed prediction point $x$.}
	(A)~\emph{Observation overlap:} independently generated trees reuse the same outcome (here, $Y_5$) in their terminal-node averages at $x$, inducing dependence through shared weighted outcomes.
	(B)~\emph{Partition alignment without overlap:} trees are grown on disjoint training sets ($\{1,2,3,4\}$ and $\{6,7,8,9\}$), yet the covariate structure at $x$ drives both trees to discover the same splits ($X_1<c$, then $X_3<d$). The prediction point $x$ routes identically through both trees (dashed red paths), landing in structurally equivalent terminal regions. The resulting predictions $T_\theta(x)$ and $T_{\theta'}(x)$ average different observations from the same covariate-defined subpopulation at $x$, producing dependence through aligned conditional prediction rules rather than shared outcomes.}
\label{fig:overlap_vs_alignment}
\end{figure}

\subsection{The covariance floor and its strict positivity}
\label{subsec:covariance_floor}

\begin{figure}[ht!]
	\centering
	\includegraphics[width=\textwidth]{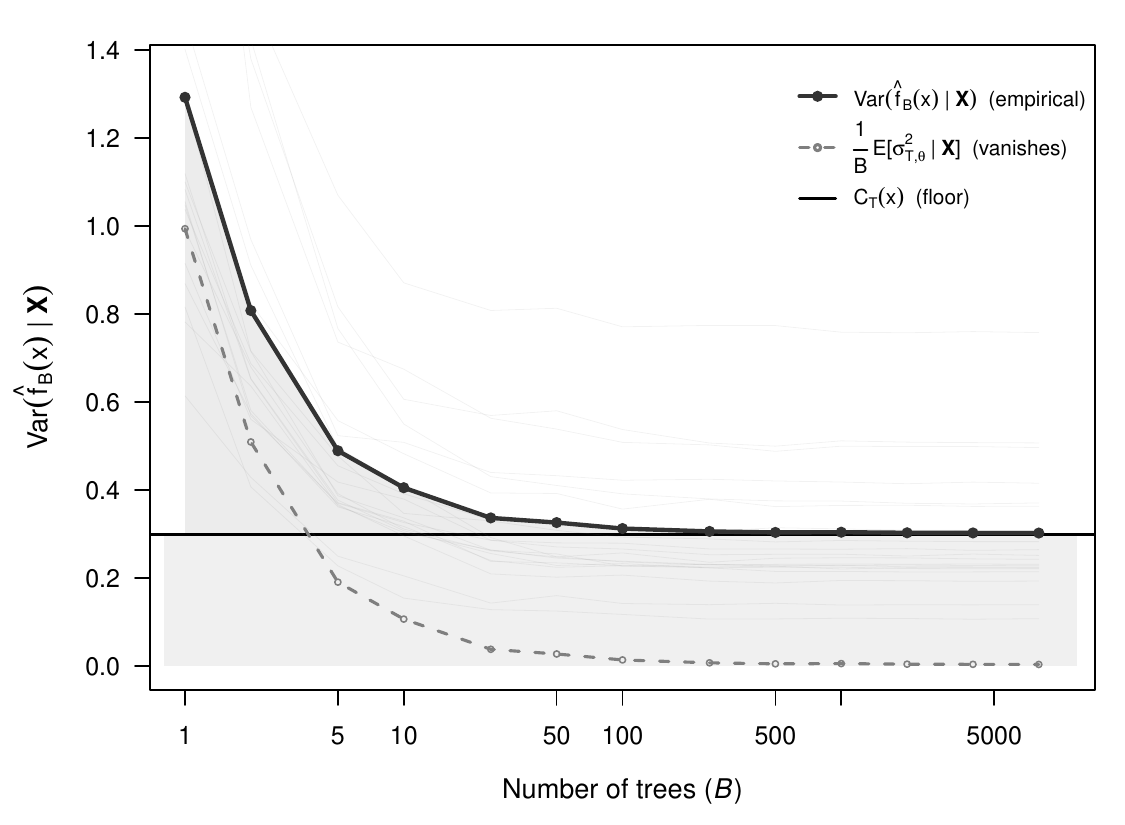}
	\caption{Empirical variance decomposition of $\hat{f}_B(x)$ as a function of the number of trees $B$ ($n = 400$, $p = 10$, $q = 4$, bootstrap; DGM in Appendix~\ref{app:dgm}). A single covariate matrix $X$ is fixed and $R = 300$ outcome vectors are drawn from the DGM. The total prediction variance (solid dark line) is the sample variance across draws, averaged over 20 test points; faint lines show individual test points before averaging. As $B$ increases, the Monte Carlo component (dashed line) goes to zero while the total variance converges to the mean covariance floor $\overline{C_T} = n_{\mathrm{pts}}^{-1}\sum_k C_T(x_k) > 0$ (horizontal line, shaded region), estimated independently from paired forests at $B = 8{,}000$ trees. Agreement is within $1.3\%$, supporting the decomposition of Theorem~\ref{thm:finite_var}.}
	\label{fig:variance_vs_B}
\end{figure}

The finite-sample variance identity in Theorem~\ref{thm:finite_var} implies that the predictive variability of a random forest decomposes into a Monte Carlo component that decreases with the number of trees and a design-induced component that persists under infinite aggregation. In particular,
\[
\lim_{B\to\infty}\Var\!\left(\hat f_B(x)\mid X\right)=C_T(x),
\]
so strict positivity of \(C_T(x)\) establishes an irreducible variance floor at fixed sample size under the fixed covariate configuration.

Sections~\ref{subsec:obs_overlap} and~\ref{subsec:partition_alignment} identified two distinct mechanisms contributing to \(C_T(x)\): reuse of training observations in local averages at \(x\), and alignment of the induced conditional prediction rules across trees. We now show that observation reuse alone is sufficient to guarantee strict positivity of the covariance floor, without invoking any alignment assumptions.

\begin{theorem}[Strict positivity of the covariance floor under observation reuse]
	\label{thm:cov_floor}
	Fix a prediction point \(x\) and a realized covariate configuration \(X\). Suppose there exists an index \(i^\star\) such that \(\sigma_{i^\star}^2>0\) and 
	\[
	\mathbb P\!\left(W_{i^\star}(x;\theta)>0 \mid X\right)>0.
	\]
	Then
	\[
	C_T(x)>0,
	\qquad\text{and hence}\qquad
	\lim_{B\to\infty}\Var\!\left(\hat f_B(x)\mid X\right)>0.
	\]
\end{theorem}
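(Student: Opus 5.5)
The plan is to start from the law of total covariance decomposition \eqref{eq:lotc}, which writes $C_T(x)$ as a reuse term plus the alignment term $C_{\mathrm{align}}(x)$. I will show that each term is nonnegative and that the reuse term is strictly positive under the stated hypothesis. Both terms have already been reduced, under conditional independence of outcomes given $X$, to sums of the form $\sum_i \sigma_i^2\,\mathbb E[\,\cdot\,]$. For the reuse term, the summand is $\mathbb E[W_i(x;\theta)W_i(x;\theta')\mid X]$; for the alignment term, \eqref{eq:calign_explicit}, it is $\mathbb E[\bar W_i(x;\mathcal I_\theta)\bar W_i(x;\mathcal I_{\theta'})\mid X]$.

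Nonnegativity of the alignment term follows from independence of $\theta$ and $\theta'$ given $X$. The factor $\mathbb E[\bar W_i(x;\mathcal I_\theta)\bar W_i(x;\mathcal I_{\theta'})\mid X]$ then equals $\bigl(\mathbb E[\bar W_i(x;\mathcal I_\theta)\mid X]\bigr)^2 \ge 0$; in any case it is nonnegative because the weights are. So $C_{\mathrm{align}}(x)\ge 0$, and it suffices to show the reuse term is strictly positive.

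For the reuse term, I again use independence of the two tree draws and treat the weights as functions of the design randomization:
\[
\mathbb E[W_{i}(x;\theta)W_{i}(x;\theta')\mid X]=\bigl(\mathbb E[W_{i}(x;\theta)\mid X]\bigr)^2.
\]
Because $W_i\ge 0$, this is nonnegative for every $i$. For $i=i^\star$, the hypothesis $\mathbb P(W_{i^\star}(x;\theta)>0\mid X)>0$ gives $\mathbb E[W_{i^\star}(x;\theta)\mid X]>0$, since a nonnegative variable that is positive with positive probability has positive mean. Multiplying by $\sigma_{i^\star}^2>0$ gives a strictly positive summand, and all other summands are $\ge 0$. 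Hence $C_T(x)>0$, and the limit statement follows directly from Theorem~\ref{thm:finite_var}, because $\frac{1}{B}\sigma_T^2(x)+\frac{B-1}{B}C_T(x)\to C_T(x)$.

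The main obstacle is justifying the factorization and the sum-of-$\sigma_i^2$ reduction. The weights $W_i(x;\theta)$ depend on $Y$ through adaptive splitting, so $T_\theta$ is not a fixed linear combination of outcomes, and $\theta,\theta'$ are independent only conditionally on $(X,Y)$, not merely on $X$. I would handle this by following the paper's earlier reductions in Sections~\ref{subsec:obs_overlap} and \ref{subsec:partition_alignment} as given, that is, by treating the weights as design-determined when computing the covariance, which is how the displayed formulas were derived. Under that framework, the weights of the two independent draws factor in expectation. If I wanted more robustness, I would instead write $C_T(x)=\Var(f_\infty(x)\mid X)$ using conditional independence of trees given $(X,Y)$: $\Cov(T_\theta,T_{\theta'}\mid X)=\Var(\mathbb E[T_\theta\mid X,Y]\mid X)\ge 0$. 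I would then argue strict positivity by showing that $f_\infty(x)$ genuinely depends on $Y_{i^\star}$ with nonzero sensitivity. That alternative step is where I expect the real difficulty, so the primary route is the paper's weight-based formulas.
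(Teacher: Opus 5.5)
Your proposal is essentially the paper's proof. The paper also starts from \eqref{eq:lotc}, keeps only the reuse term $\sum_i\sigma_i^2\,\mathbb E[W_i(x;\theta)W_i(x;\theta')\mid X]$ as a lower bound, uses independence of $\theta,\theta'$ to make the $i^\star$ summand strictly positive (it squares $\mathbb P(W_{i^\star}(x;\theta)>0\mid X)$ where you square $\mathbb E[W_{i^\star}(x;\theta)\mid X]$), and then invokes Theorem~\ref{thm:finite_var}; your explicit nonnegativity check for the alignment term supplies a step the paper's Step~1 leaves implicit. The adaptivity caveat you raise applies equally to the paper's argument, but only the reduction of the reuse term to $\sum_i\sigma_i^2\,\mathbb E[W_i(x;\theta)W_i(x;\theta')\mid X]$ genuinely requires treating the weights as design-determined; the positivity step survives adaptivity, since by conditional independence of $\theta,\theta'$ given $(X,Y)$ and Jensen, $\mathbb E[W_{i^\star}(x;\theta)W_{i^\star}(x;\theta')\mid X]=\mathbb E\bigl[(\mathbb E[W_{i^\star}(x;\theta)\mid X,Y])^2\mid X\bigr]\ge\bigl(\mathbb E[W_{i^\star}(x;\theta)\mid X]\bigr)^2>0$.
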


A proof for Theorem \ref{thm:cov_floor} is provided in the Supplemental Materials. 

\begin{remark}[Design interpretation]
	Theorem~\ref{thm:cov_floor} shows that repeated reuse of the same training outcomes in local averages at \(x\) is sufficient to induce a strictly positive covariance floor. This effect is governed entirely by the tree-generating design acting on the fixed covariate configuration and persists regardless of the number of trees aggregated. The condition 
	\(\mathbb{P}\!\bigl(W_{i^\star}(x;\theta) > 0 \mid X\bigr) > 0\) 
	is satisfied by all standard implementations whenever $x$ lies in the support of the design and at least one nearby observation has positive conditional variance; moreover, the alignment component provides a second, independent nonnegative contribution to the covariance floor that is not required for strict positivity but reinforces it.
\end{remark}

\section{Resolution and Structural Error in Random Forests}
\label{sec:resolution}

\subsection{The infinite-aggregation target and resolution}
\label{subsec:rf_resolution}

\begin{definition}[Design-based resolution]
	\label{def:resolution}
	Fix a prediction point $x$ and a forest design under a realized covariate configuration $X$. The resolution of the infinite-aggregation forest predictor at $x$ is defined by the distribution of the random weight vector $W(x;\theta)=(W_1(x;\theta),\ldots,W_n(x;\theta))$ induced by the tree-generating mechanism. Equivalently, resolution is characterized by the distribution of the weights appearing in the representation
	\[
	f_\infty(x)
	=
	\sum_{i=1}^n \mathbb E\!\left[W_i(x;\theta)\mid X\right]Y_i.
	\]
	Designs that assign expected weight to many observations are said to have lower resolution at $x$, while designs that concentrate expected weight on fewer observations near $x$ are said to have higher resolution at $x$.
\end{definition}

\begin{remark}[Interpretation]
	\label{rem:resolution}
	We use the term \emph{resolution} rather than bias because it more precisely characterizes the fineness of the forest's conditional prediction rule without invoking a parametric reference model. Coarser weight distributions average responses over larger sets of observations, yielding increased stability but reduced sensitivity to local variation in the outcomes. Finer weight distributions concentrate mass on fewer observations, producing predictors that adapt more sharply to local structure but that are more susceptible to dependence across trees through alignment of the induced conditional predictors. All forest design parameters affect resolution through their influence on the distribution of $W(x;\theta)$.
\end{remark}

\subsection{Training fraction and resolution scaling}
\label{subsec:training_fraction_resolution}

A defining feature of random forest constructions is that each tree is trained on a strict subset of the available observations. Under bootstrap resampling, the expected number of distinct observations used to construct a tree is approximately $0.632n$ \citep{efron1983}; under subsampling without replacement, the per-tree training size is fixed at $m=np_{\mathrm{obs}}$ for some training fraction $p_{\mathrm{obs}}\leq1$. In both cases, the distribution of the weight vector $W(x;\theta)$ depends on $m$, and therefore the resolution of $f_\infty(x)$ is determined by the per-tree training size rather than by the full sample size.

Let $T_\theta^{(m)}(x)$ denote a tree constructed using $m$ training observations, and define the corresponding infinite-aggregation target
\[
f_{\infty,m}(x)
=
\mathbb E\!\left[T_\theta^{(m)}(x)\mid X\right].
\]
Whenever the distribution of $W(x;\theta)$ varies with the per-tree training size, $f_{\infty,m}(x)$ differs from $f_{\infty,n}(x)$ for $m<n$, and this discrepancy persists as the number of trees tends to infinity. Aggregation combines subsample-based conditional predictors but does not increase the resolution beyond that attainable by the underlying tree-level fits.

\subsection{Structural error pathways}
\label{subsec:struc_errors}

Resolution and dependence arise from different aspects of the same design-induced randomness under fixed $X$. The infinite-aggregation predictor $f_\infty(x)=\mathbb E[T_\theta(x)\mid X]$ is determined by the average weight vector $\mathbb{E}[W(x;\theta)\mid X]$, which governs the resolution of the forest at $x$. The covariance floor $C_T(x)$ depends on the variability and co-movement of the random weights across design realizations. Any design choice that reduces dependence necessarily alters the distribution of these weight vectors, and therefore also affects the resolution of the infinite-aggregation predictor. This relationship is analogous to the classical bias--variance tradeoff: the forest design redistributes error between resolution and dependence, and no choice of design parameters can optimize both simultaneously.

\section{Design parameters and their effects on resolution, dependence, and error}
\label{sec:design_parameters}

\subsection{Parameterizing the tree-generating mechanism}
\label{subsec:design_parameterization}

We parameterize the tree-generating mechanism $\theta$ through a collection of design components (i.e. hyperparameters) defined relative to the observed sample size $n$. Although $n$ is not a tuning parameter in the same sense as subsampling or split restriction, it plays a structural role by determining the support over which the induced averaging sets $A_\theta(x)$ and weight vectors $W(x;\theta)$ are defined, and therefore influences both resolution and dependence.

Each tree is generated through the following randomized steps.

\begin{enumerate}
	\item \textbf{Observation selection.}
	A random index set $\mathcal S_\theta \subset \{1,\dots,n\}$ is drawn with fixed cardinality $m = |\mathcal S_\theta|$, defining the per-tree observation-selection fraction $p_{\mathrm{obs}} = m/n$. The corresponding indicator vector $\mathcal I_\theta = (I_{\theta,1},\dots,I_{\theta,n})$, where $I_{\theta,i} := \mathbf 1\{ i \in \mathcal S_\theta \}$, encodes which observations are exposed to the tree construction.
	
	\item \textbf{Split-candidate restriction.}
	At each internal node, a random candidate set of variables $\mathcal J_{\theta,v} \subset \{1,\dots,p\}$ is drawn with fixed size $q$. The split at that node is chosen by optimizing the splitting criterion over $\mathcal J_{\theta,v}$.
	
	\item \textbf{Terminal occupancy constraint.}
	A stopping rule enforces a lower bound $s \ge 1$ on the size of the induced averaging set, $|A_\theta(x)| \ge s$ almost surely, ensuring that each tree-level prediction averages at least $s$ observed outcomes. A maximum tree depth $d_{\max}$ operates as an alternative constraint on partition fineness by limiting the number of recursive splits along any root-to-leaf path.
	
	\item \textbf{Aggregation level.}
	The forest predictor averages $B$ independent realizations $\theta_1,\dots,\theta_B$. This parameter governs only Monte Carlo approximation error: it does not affect the distribution of the underlying tree-level conditional predictor $T_\theta(x)$, and therefore has no effect on resolution, dependence, or the covariance floor.
\end{enumerate}

The forest design is therefore characterized by the parameter vector $(n, p_{\mathrm{obs}}, q, s, B)$. For fixed $n$, the parameters $(p_{\mathrm{obs}}, q, s)$ jointly determine the distribution of the induced averaging sets $A_\theta(x)$ and weight vectors $W(x;\theta)$, and thereby control both the resolution of the infinite-aggregation target and the dependence structure across trees. The role of $B$ is purely to approximate this design-based target through Monte Carlo averaging. The following subsections analyze how each remaining structural design parameter acts on resolution and dependence.

\subsection{Observation-selection fraction $\boldsymbol{p}_{\mathrm{obs}}$}
\label{subsec:obs_fraction}

The observation-selection fraction $p_{\mathrm{obs}}$ directly alters both the dependence structure across trees and the resolution of the induced predictor. From the perspective of dependence, $p_{\mathrm{obs}}$ acts through the observation-overlap component of the covariance decomposition via joint inclusion probabilities
\[
\mathbb P(I_{\theta,i}=1,\;I_{\theta',i}=1),
\]
which determine how frequently the same observation can receive positive weight at $x$ in multiple trees. Increasing $p_{\mathrm{obs}}$ increases these probabilities and, holding the within-tree partitioning mechanism fixed, strengthens the observation-overlap contribution, thus increasing $C_T(x)$.

Beyond its effect on dependence, $p_{\mathrm{obs}}$ also governs resolution by restricting which observations are available to define candidate splits and terminal regions. Smaller values of $p_{\mathrm{obs}}$ limit the data available to each tree, inducing coarser averaging and lower resolution. Larger values increase resolution at the cost of stronger reuse-driven dependence between trees. Subsampling without replacement makes this trade-off explicit: $p_{\mathrm{obs}}$ becomes a directly tunable parameter controlling both resolution and dependence.

The bootstrap corresponds to a particular, non-tunable choice of $p_{\mathrm{obs}}$. Although each tree is grown from a resample of size $n$, the expected fraction of distinct observations satisfies
\[
\mathbb P(i\in\mathcal I_\theta)=1-(1-1/n)^n \approx 1-e^{-1},
\qquad
\mathbb P(i\in\mathcal I_\theta\cap\mathcal I_{\theta'})
\approx (1-e^{-1})^2,
\]
so the observation-selection fraction and the resulting magnitude of observation-induced dependence are intrinsic features of the bootstrap design rather than tunable choices.

\subsection{Split-candidate restriction size $\boldsymbol{\mathrm(q)}$}
\label{subsec:candidate_restriction}

Restricting the number of candidate variables at each split influences the dependence between trees by changing how frequently independently generated trees make similar splitting decisions along the path to $x$. This effect operates through the alignment component of the covariance decomposition.

The mechanism can be made explicit through a combinatorial property of how candidate variables are sampled at each split. 

\begin{lemma}[Shared candidate variables under random subspaces]
	\label{lem:cand_overlap}
	Fix an internal node $v$. Under uniform sampling without replacement of candidate sets of size $q$ at node $v$,
	\[
	\mathbb E\!\left[
	\left|S_{\theta,v}^{(q)}\cap S_{\theta',v}^{(q)}\right|
	\right]
	=
	\frac{q^2}{p},
	\]
	where $S_{\theta,v}^{(q)}$ and $S_{\theta',v}^{(q)}$ denote the candidate-variable sets drawn independently for two trees. In particular, the expected overlap is strictly increasing in $q$ for $q\in\{1,\dots,p\}$.
\end{lemma}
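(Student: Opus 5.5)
The plan is to write the overlap size as a sum of indicators and use linearity of expectation together with the independence of the two candidate draws. Write
\[
\left|S_{\theta,v}^{(q)}\cap S_{\theta',v}^{(q)}\right|=\sum_{j=1}^p \mathbf 1\{j\in S_{\theta,v}^{(q)}\}\,\mathbf 1\{j\in S_{\theta',v}^{(q)}\}.
\]
The two candidate sets at node $v$ are drawn independently, one for each tree, so the expectation of each product factors as $\mathbb P(j\in S_{\theta,v}^{(q)})\,\mathbb P(j\in S_{\theta',v}^{(q)})$.

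Under uniform sampling without replacement of a $q$-subset of $\{1,\dots,p\}$, each variable is included with probability $\binom{p-1}{q-1}/\binom{p}{q}=q/p$. This also follows from exchangeability: the inclusion probabilities are all equal and must sum to $q$. Summing over $j$ then gives
\[
\mathbb E\!\left[\left|S_{\theta,v}^{(q)}\cap S_{\theta',v}^{(q)}\right|\right]=p\cdot (q/p)^2=q^2/p.
\]
Monotonicity is immediate, since $q\mapsto q^2/p$ is strictly increasing on the positive integers.

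The only step that needs care is the independence claim, so I will state explicitly what ``fix an internal node $v$'' means. I will interpret the lemma as fixing a node position and drawing, for each tree, a candidate set at that node independently of the other tree's draw. This holds because $\theta$ and $\theta'$ are independent realizations of the mechanism. The dependence of the node itself on the data and on earlier splits does not affect the uniform distribution of the candidate set drawn at that node. So I will condition on the two trees having reached a node at $v$, together with their histories, note that the candidate draws there are still independent and uniform, apply the computation above conditionally, and remove the conditioning with the tower property.
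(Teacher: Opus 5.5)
Your argument is correct, and it reaches $q^2/p$ by a slightly different route than the paper. The paper realizes each candidate set as the first $q$ entries of an independent uniform permutation $\pi_{\theta,v}$ of $\{1,\dots,p\}$, which nests the sets across $q$. It then reads off $q^2/p$ as the mean of the hypergeometric law of $|S^{(q)}_{\theta,v}\cap S^{(q)}_{\theta',v}|$.

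Your indicator decomposition is the standard derivation of that hypergeometric mean, so the core computation is the same. Your version is more self-contained, because the coupling plays no role in the expectation identity itself.

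What the permutation coupling buys is something your argument does not give. Under it the intersections are themselves nested: $S^{(q')}_{\theta,v}\cap S^{(q')}_{\theta',v}\subseteq S^{(q)}_{\theta,v}\cap S^{(q)}_{\theta',v}$ for $q'<q$. So the overlap is pathwise, hence stochastically, nondecreasing in $q$, not merely in expectation. This nested construction is what Proposition~\ref{prop:restrict_alignment} uses when it obtains the $q'$-candidate set by subsampling the $q$-candidate set.

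Conversely, your proposal handles a point the paper's proof leaves implicit. You condition on both trees reaching the node together with their histories, and note that the fresh candidate draw there is still uniform and independent across the two trees. This is exactly what is needed to make ``fix an internal node $v$'' meaningful when the node itself is data- and history-dependent.
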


A proof is provided in Appendix~\ref{app:proof_candidate_overlap}. Lemma~\ref{lem:cand_overlap} isolates the design mechanism by which candidate restriction controls the opportunity for repeated split selection across trees. The following result shows that this increased opportunity translates directly into stronger alignment.

\begin{proposition}[Candidate restriction weakens alignment probability]
	\label{prop:restrict_alignment}
	Fix a prediction point $x$. Consider two forest designs that differ only in candidate-set size, with $q'<q$. Define the alignment probability under candidate-set size $q$ as
	\[
	\alpha_q(x) := \mathbb P\!\left(Z_\theta^{(q)}(x) = Z_{\theta'}^{(q)}(x)\right),
	\]
	where $Z_\theta^{(q)}(x) = \mathbb E[T_\theta^{(q)}(x) \mid \mathcal I_\theta, X]$ denotes the resampling-conditional predictor under candidate-set size $q$, and $\theta, \theta'$ are independent draws from the tree-generating mechanism. Assume that at each internal node along the path to $x$, the splitting criterion selects a single best split almost surely. Then
	\[
	\alpha_{q'}(x)\le \alpha_q(x),
	\]
	with strict inequality whenever restricting from $q$ to $q'$ alters split selection with positive probability along the path to $x$.
\end{proposition}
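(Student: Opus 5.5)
The plan is to reduce the alignment event to a path-by-path agreement event for split decisions, and then compare the two designs with a coupling argument driven by Lemma~\ref{lem:cand_overlap}. Because the best split at each internal node is unique almost surely, the split chosen at a node $v$ on the path to $x$ is a deterministic function of the node's data and of the candidate set $S_{\theta,v}^{(q)}$. In particular, it is the best split among the variables in $S_{\theta,v}^{(q)}$. Two trees therefore make the same split at $v$ whenever both candidate sets contain the variable $j^\star_v$ that is globally optimal at $v$. More generally, they agree whenever both candidate sets share the same best available variable. I will treat $Z_\theta^{(q)}(x)=Z_{\theta'}^{(q)}(x)$ as equivalent, up to null events, to the two trees inducing the same routing of $x$, and hence the same expected weight vector $\bar W(x;\mathcal I)$. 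This equivalence holds generically, since distinct weight vectors give distinct linear combinations of $Y$ almost surely when $Y$ has a continuous distribution. I will record this as a working reduction. It means $\alpha_q(x)$ is governed by the probability that both trees make identical split choices along the path to $x$.

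The key step is monotonicity of per-node agreement in $q$. I will couple the two designs by nesting the candidate sets: draw a uniformly random ordering of the $p$ variables at each node and take $S^{(q')}$ to be the first $q'$ variables and $S^{(q)}$ the first $q$. Under this coupling each set is marginally uniform of the correct size, and $S^{(q')}\subset S^{(q)}$. Call a variable \emph{dominant} at $v$ if it is the best among all $p$ variables. Its inclusion probability is $q/p$, so both independent trees contain it with probability $(q/p)^2$. This is exactly the quantity whose sum over variables yields the $q^2/p$ of Lemma~\ref{lem:cand_overlap}, and it is increasing in $q$. More precisely, I will show that the probability both trees choose the same split at $v$ is nondecreasing in $q$. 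For each candidate split $k$ ranked by criterion value, the probability that tree $\theta$ selects $k$ equals the probability that $k$ is included and every better-ranked variable is excluded. Agreement is the sum over $k$ of the squares of these selection probabilities. As $q$ grows, this selection distribution concentrates on the top-ranked variables. I expect the sum of squares, a collision probability, to increase, because the vector of selection probabilities becomes more majorized toward the top-ranked variable. The multiplicative structure then carries this monotonicity along the root-to-$x$ path. That step requires conditioning on agreement at ancestors, so that the descendant nodes have identical data in distribution.

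The main obstacle is precisely this conditioning across depth. With different subsamples $\mathcal I_\theta\neq\mathcal I_{\theta'}$, the criterion rankings at a node differ between the two trees. The collision-probability argument must therefore be run conditionally on the pair of rankings. It is not automatic that the sum over $k$ of $\pi_k\pi'_k$, for two distinct selection distributions, increases in $q$. My first attempt will assume a common top-ranked variable along the path, which is the regime in which ``alignment'' is meaningful. Then $\pi_1=\pi'_1=q/p$ dominates and increases in $q$, while the remaining cross terms are handled by a majorization or inclusion argument. If this fails in general, I will state the result under that common-optimum condition. Strictness then follows because, when restricting to $q'$ alters split selection with positive probability, some node has $\pi_1^{(q')}<\pi_1^{(q)}$ with positive probability, which gives a strict drop in the agreement probability at that node and hence in $\alpha$.
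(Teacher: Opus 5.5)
\paragraph{The reduction to split agreement is invalid.} By definition, $Z^{(q)}_\theta(x)=\mathbb E[T^{(q)}_\theta(x)\mid\mathcal I_\theta,X]=\sum_i\bar W_i(x;\mathcal I_\theta)Y_i$ has already integrated out the candidate sets and split draws. So the event $\{Z^{(q)}_\theta(x)=Z^{(q)}_{\theta'}(x)\}$ depends on the two trees only through $(\mathcal I_\theta,\mathcal I_{\theta'})$ (and $Y$). The per-node draws $S^{(q)}_{\theta,v}$ that your collision argument manipulates never enter it.

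Your equivalence between ``same routing'' and ``same $Z$'' fails in both directions. First, $\bar W(x;\mathcal I_\theta)$ does not depend on the realized routing at all, so identical routing does not give $\bar W(x;\mathcal I_\theta)=\bar W(x;\mathcal I_{\theta'})$ when $\mathcal I_\theta\neq\mathcal I_{\theta'}$. Second, on $\{\mathcal I_\theta=\mathcal I_{\theta'}\}$ the event holds whatever splits are drawn. Applied correctly, your genericity remark turns $\alpha_q(x)$ into $\mathbb P\bigl(\bar W^{(q)}(x;\mathcal I_\theta)=\bar W^{(q)}(x;\mathcal I_{\theta'})\bigr)$. That is a comparison of two averaged weight maps, not the probability that two realized trees choose the same splits. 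At best, then, you would be proving a statement about a surrogate quantity.

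\paragraph{Even for the surrogate, monotonicity fails beyond a single node.} Your single-node claim is correct when both trees see the same ranking. There $\pi_k(q)=\binom{p-k}{q-1}/\binom pq$ is nonincreasing in $k$, and its partial sums $1-\binom{p-K}{q}/\binom pq$ increase in $q$. Majorization plus Schur-convexity then gives $\sum_k\pi_k(q)^2$ increasing. Two things break beyond this.

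(a) With different rankings, the cross term $\sum_j\pi_j\pi'_j$ is not monotone even with a common optimum. For $p=10$ with rankings $1\succ2\succ\cdots\succ10$ and $1\succ10\succ9\succ\cdots\succ2$, it equals $1/10$ at $q=1$ but $\tfrac1{25}+\tfrac{84}{2025}=\tfrac{11}{135}\approx0.081$ at $q=2$. So your common-optimum fallback does not rescue the argument.

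(b) Along the path, the agreement probability is $\sum_s\pi_s(q)^2G_s(q)$, where $G_s$ is the downstream agreement after split $s$. Raising $q$ can push both trees into the branch with the lowest downstream agreement, so the ``multiplicative structure'' does not carry monotonicity. Concretely, take identical data, $p=3$, and root ranking $1\succ2\succ3$. Suppose splitting on variable 2 or 3 sends $x$ to a leaf, while splitting on variable 1 sends $x$ through three further internal nodes. The probability of identical split sequences is $\tfrac19\cdot\tfrac1{27}+\tfrac29=\tfrac{55}{243}\approx0.226$ at $q=1$, but $\tfrac49\bigl(\tfrac59\bigr)^3+\tfrac19=\tfrac{1229}{6561}\approx0.187$ at $q=2$. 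Your strictness step inherits both problems.

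\paragraph{The paper's route does not fill the gap.} The paper argues differently: it nests the $q'$ candidate sets inside the $q$ sets and asserts the pathwise inclusion $A_{q'}(x)\subseteq A_q(x)$. That inclusion also reasons at the level of realized splits, and it fails for the same reason. In the example, trees with nested orderings $(3,2,1)$ and $(3,1,2)$ both choose variable 3 at $q'=1$ but choose 2 and 1 respectively at $q=2$. The numbers above show that no coupling can deliver the inclusion. What is missing in both arguments is an extra structural hypothesis, such as common rankings and common downstream branch structure along the path, or a redefinition of $\alpha_q$, under which a monotone comparison actually holds.
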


A proof is provided in Appendix ~\ref{app:proof_restrict_overlap}. Reducing $q$ therefore decreases the probability that independently generated trees lead to similar conditional predictors at $x$, reducing the alignment component of the $C_T(x)$.

Candidate restriction simultaneously affects resolution. Increasing $q$ concentrates the distribution of induced conditional predictors by reducing variability in split selection, producing more stable resolution at $x$. Decreasing $q$ increases diversity across tree realizations but broadens the weight distribution. Unlike $p_{\mathrm{obs}}$, which acts exclusively through joint inclusion probabilities, candidate restriction also affects the observation-overlap component indirectly: when trees make similar splitting decisions, a jointly included observation is more likely to receive positive weight at $x$ in both trees. Candidate restriction therefore modulates both terms of the covariance decomposition~\eqref{eq:lotc}.

\begin{figure}[t!]
	\centering
	\includegraphics[width=\textwidth]{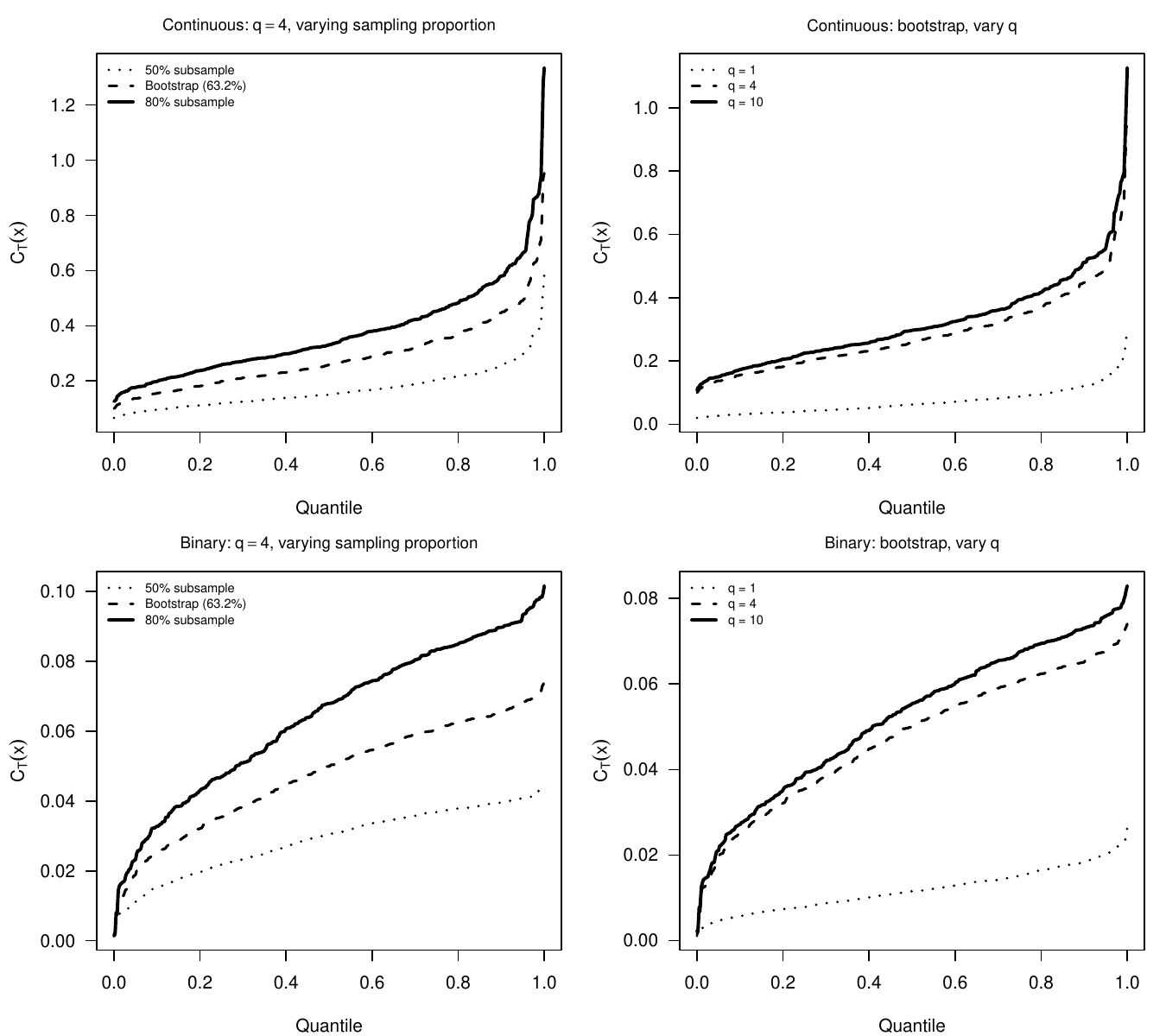}
	\caption{Quantile profiles of the true covariance floor $C_T(x)$ across $n_{\mathrm{test}} = 400$ test points (Scenario: $n = 400$, $p = 10$; DGM in Appendix~\ref{app:dgm}). Left column varies the observation sampling scheme at fixed $q = 4$; right column varies $q$ at fixed bootstrap resampling. Top row: continuous outcomes; bottom row: binary outcomes. Increasing $q$ increases $C_T(x)$. Increasing $p_{obs}$ increases $C_T(x)$, consistent with the increased tree correlation induced by larger candidate sets.}
	\label{fig:Ct_profiles}
\end{figure}

\subsection{Minimum terminal occupancy $\boldsymbol{\mathrm(s)}$ and maximum tree depth}
\label{subsec:terminal_occupancy}

The minimum terminal node occupancy $s$ imposes a lower bound on the number of observations in each terminal region, defining the scale at which local averaging may be realized. Larger values of $s$ lead to coarser partitions with greater stability but reduced resolution; smaller values lead to finer partitioning and higher-resolution predictors at the cost of increased single-tree variability. Because coarser partitions are more likely to be reused across tree realizations, increasing $s$ strengthens alignment-induced dependence and raises $C_T(x)$ relative to a forest with smaller terminal nodes but the same candidate set size $q$. This effect is structural and persists under infinite aggregation. A maximum tree depth constraint $d_{\max}$ operates through the same mechanism: shallower trees produce coarser partitions, increasing between-tree agreement and inflating $C_T(x)$. When both constraints are active, the effective resolution at $x$---and therefore the magnitude of the covariance floor---is determined by whichever is more restrictive.

\subsection{Sample size and persistence of structural dependence}
\label{subsec:sample_size_dependence}
We now address how increasing sample size affects the covariance floor when the design parameters $(p_{\mathrm{obs}},q,s)$ are held fixed.

From the finite-sample covariance decomposition, $C_{T,n}(x)$ is governed by the joint distribution of resampling patterns and induced conditional predictors. Increasing $n$ has two competing effects. First, larger samples improve the quality of each tree's local estimate: terminal nodes contain more observations, reducing single-tree variance and shrinking the Monte Carlo component $B^{-1}\sigma^2_{T,\theta}(x)$. Second, with more data each tree estimates $\mu(x)$ more accurately, reducing the between-tree dispersion that drives $C_T(x)$. This tighter agreement across trees can reduce $C_T(x)$ in absolute terms, as will be shown empirically in Section~\ref{sec:simulations} where $\overline{C_T}$ is consistently smaller at $n = 400$ than at $n = 200$ for matched $(p, q, p_{\mathrm{obs}})$ configurations. However, increasing $n$ does not eliminate structural dependence. The covariance floor decreases but the design-induced randomization that produces it---candidate restriction, enforced terminal occupancy, non-unique split choices---is unchanged, so $C_T(x)$ remains strictly positive for any finite $n$.

\subsection{Why added randomization improves finite-sample behavior}
\label{subsec:why_randomness}

Additional randomization in split selection directly alters the dependence structure of the forest. Unlike aggregation, which averages independent realizations of a fixed randomized procedure, added randomization modifies the tree-generating mechanism itself and therefore changes both the induced predictor and its structural dependence properties.

A concrete example is augmented bagging, which introduces $r\ge0$ independent noise variables into the predictor space prior to growing each tree \citep{mentch2022}. Increasing $r$ reduces the probability that any particular split is repeatedly selected along the path to $x$, dispersing the distribution of conditional predictors induced by the forest design. From the perspective of Section~\ref{subsec:partition_alignment}, this weakens alignment across trees. At the same time, added randomization spreads probability mass across a broader collection of conditional prediction rules, altering the resolution of the induced predictor.

These two effects act in opposition. Added randomization weakens alignment and reduces structural dependence, while simultaneously redistributing probability mass across conditional predictors and altering resolution. Improvements in finite-sample behavior arise when the reduction in alignment-induced dependence outweighs the accompanying change in resolution. This perspective unifies several existing explanations for the empirical benefits of randomization in tree ensembles, including regularization effects \citep{mentch2020}, performance gains in randomized constructions \citep{liu2025}, and smoothing properties controlled by design parameters \citep{curth2024}, by identifying alignment-induced dependence and resolution as the two competing mechanisms through which added randomization acts.

\section{Estimation of the Covariance Floor}
\label{sec:ct_estimation}

\subsection{A variance representation of $\boldsymbol{C}_T(x)$}
\label{subsec:ct_identity}

The covariance floor $C_T(x)$ is defined as a covariance under the joint randomness of $(Y \mid X)$ and the tree-generating mechanism. To construct an estimator, we exploit an equivalent representation that expresses $C_T(x)$ as variance over repeated draws of $(Y \mid X)$. For a realized pair $(X,Y)$, define
\[
f_\infty(x;X,Y)
:=
\mathbb E\!\left[T_\theta(x)\mid X,Y\right],
\]
the deterministic limit of a forest constructed on realized data and averaged over the tree-generating mechanism. Unlike the $X$-conditional target $f_\infty(x)=\mathbb E[T_\theta(x)\mid X]$ introduced in Section~\ref{subsec:forest_estimator}, which remains random through $Y$, the quantity $f_\infty(x;X,Y)$ is fully determined once both $X$ and $Y$ are observed.

\begin{lemma}[Variance representation of the covariance floor]
	\label{lem:ct_identity}
	Fix a prediction point $x$ and condition on the realized covariate configuration $X$. Then
	\[
	C_T(x)
	=
	\Cov\!\bigl(T_\theta(x),T_{\theta'}(x)\mid X\bigr)
	=
	\Var\!\left(f_\infty(x;X,Y)\mid X\right).
	\]
\end{lemma}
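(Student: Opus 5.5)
Conditioning on $(X,Y)$ and using that $\theta,\theta'$ are independent of each other given $(X,Y)$, I would apply the law of total covariance with $Y$ as the intermediate conditioning object. This gives
\[
C_T(x)=\mathbb E\!\left[\Cov\!\bigl(T_\theta(x),T_{\theta'}(x)\mid X,Y\bigr)\,\middle|\,X\right]
+\Cov\!\left(\mathbb E[T_\theta(x)\mid X,Y],\,\mathbb E[T_{\theta'}(x)\mid X,Y]\,\middle|\,X\right).
\]
Once this is established, the argument has two steps: show that the first term vanishes, and identify the second term as a variance.

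\textbf{First term.} Fix the realized data $(X,Y)$. Then $T_\theta(x)$ is a deterministic measurable function of $(X,Y,\theta)$. This holds even though split selection depends on $Y$, because the only remaining randomness is $\theta$. Since the tree-generating draws are mutually independent and drawn independently of the outcome process, $T_\theta(x)$ and $T_{\theta'}(x)$ are conditionally independent given $(X,Y)$. Their conditional covariance is therefore zero, and so is the first term.

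\textbf{Second term.} By exchangeability of the tree-generating mechanism, $\theta$ and $\theta'$ have the same distribution. Hence
\[
\mathbb E[T_\theta(x)\mid X,Y]=\mathbb E[T_{\theta'}(x)\mid X,Y]=f_\infty(x;X,Y).
\]
The second term is then $\Cov(f_\infty(x;X,Y),f_\infty(x;X,Y)\mid X)=\Var(f_\infty(x;X,Y)\mid X)$, which is the claim.

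\textbf{Main obstacle.} The only delicate step is stating precisely the conditional independence of $\theta$ and $\theta'$ given $(X,Y)$, i.e.\ that the design randomization is generated independently of $Y$. The paper already uses this when it removes $\mathcal I_\theta$ from the conditioning in Section~\ref{subsec:within_resample_variance}, and I would extend it to the full $\theta$, including feature subsampling and random split choice. The Monte Carlo seeds are independent of $Y$; the data dependence enters only through the deterministic map $(X,Y,\theta)\mapsto T_\theta(x)$.

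\textbf{Integrability.} I would also note that finite second moments are needed. This holds because $T_\theta(x)$ is a convex combination of the $Y_i$ (its weights $W_i(x;\theta)$ are nonnegative and sum to one), so $\mathbb E[Y_i^2\mid X]<\infty$ suffices.
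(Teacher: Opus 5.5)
Your proof is correct and matches the paper's argument step for step. Both apply the law of total covariance with $Y$ as the intermediate conditioning variable, show the inner covariance vanishes because $\theta$ and $\theta'$ are conditionally independent given $(X,Y)$, and identify both conditional means with $f_\infty(x;X,Y)$.
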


A proof is provided in Appendix~\ref{app:ct_identity}. The estimand remains $C_T(x)$ as defined throughout this paper; Lemma~\ref{lem:ct_identity} simply provides an equivalent characterization as the variability of the infinite-aggregation predictor across draws of $(Y \mid X)$. This identity is the foundation for estimation: using the covariance of two independent forests fitted to the same data eliminates finite-aggregation bias by construction, because the Monte Carlo errors of the two forests are independent and cancel in the cross-product.

\subsection{Estimation of $C_T(x)$}
\label{subsec:ct_algorithm}

The covariance floor $C_T(x)$ is not a population parameter that exists independently of modeling assumptions. It is a property of the random forest procedure as defined relative to a specified model for $(Y\mid X)$, and must be estimated under that same model. We therefore generate synthetic outcomes from the identical model specification used to construct the predictor whose variance we seek to quantify.

The logic of this approach rests on a simple observation. A deployed forest maps the training data to a fitted conditional distribution $\widehat{\mathbb{P}}_n(Y \mid X)$. All downstream inference---predictions, intervals, uncertainty estimates---operates on quantities defined by this fitted distribution, not by the unknown true distribution. Synthetic replicates drawn from $\widehat{\mathbb{P}}_n(Y \mid X)$ at the observed covariate configuration $X$ are therefore exact samples from the population that governs the deployed predictions. Because the fitted model is the data-generating process for the synthetic outcomes by construction rather than by approximation, refitting forests on these replicates and computing sample covariances yields unbiased estimates of the covariance structure under the fitted model. Generating synthetic outcomes from any other distribution---including a nonparametric bootstrap over observed residuals or an independently estimated outcome model---would target a covariance floor defined by that alternative distribution rather than the one governing the deployed procedure. The remaining question---how closely the fitted-model covariance floor $C_T^{\widehat{\mathbb{P}}_n}(x)$ approximates the true floor $C_T(x)$---reduces to the quality of nuisance estimation, which we address in Proposition~\ref{prop:conservative} for continuous outcomes and Proposition~\ref{prop:binary_unbiased} for binary outcomes. 

\medskip
\noindent\textbf{Step 1: Specify the outcome model.}
Let $\widehat{\mathbb P}_n(\cdot\mid X)$ denote a fitted model for $(Y \mid X)$ at the realized covariate configuration. This model is constructed by fitting a random forest of the same implementation and hyperparameters as the procedure under analysis, and using its predictions to define a synthetic outcome distribution. Concrete specifications for continuous and binary outcomes are given in Section~\ref{subsec:practical_specification}.

\medskip
\noindent\textbf{Step 2: Generate synthetic outcome replicates.}
Draw independent synthetic outcome vectors
\[
Y^{(1)},\ldots,Y^{(R)}
\stackrel{\text{i.i.d.}}{\sim}
\widehat{\mathbb P}_n(\cdot\mid X),
\qquad
Y^{(r)}=(Y^{(r)}_1,\ldots,Y^{(r)}_n).
\]

\medskip
\noindent\textbf{Step 3: Fit two independent forests per synthetic replicate.}
For each replicate $r$, fit two forests using independent draws from the tree-generating mechanism at the same synthetic data:
\[
\widehat f_{A}^{(r)}(x)
:=
\frac{1}{B_{\mathrm{mc}}}
\sum_{b=1}^{B_{\mathrm{mc}}}
T_{\theta_{rb}^A}\!\left(x;X,Y^{(r)}\right),
\qquad
\widehat f_{B}^{(r)}(x)
:=
\frac{1}{B_{\mathrm{mc}}}
\sum_{b=1}^{B_{\mathrm{mc}}}
T_{\theta_{rb}^B}\!\left(x;X,Y^{(r)}\right),
\]
where $\{\theta_{r1}^A,\ldots,\theta_{rB_{\mathrm{mc}}}^A\}$ and $\{\theta_{r1}^B,\ldots,\theta_{rB_{\mathrm{mc}}}^B\}$ are independent collections of draws from the tree-generating mechanism. Within each replicate, the two forests share the same synthetic outcome vector $Y^{(r)}$ and the same covariate configuration $X$, but use independent tree-generating randomness.

\medskip
\noindent\textbf{Step 4: Estimate the covariance floor.}
Define
\begin{equation}
	\label{eq:ct_hat}
	\widehat C_T(x)
	:=
	\frac{1}{R-1}
	\sum_{r=1}^R
	\left(
	\widehat f_{A}^{(r)}(x)
	-
	\overline f_{A}(x)
	\right)
	\left(
	\widehat f_{B}^{(r)}(x)
	-
	\overline f_{B}(x)
	\right),
\end{equation}
where $\overline f_{A}(x)=R^{-1}\sum_{r=1}^R\widehat f_{A}^{(r)}(x)$ and $\overline f_{B}(x)=R^{-1}\sum_{r=1}^R\widehat f_{B}^{(r)}(x)$.

Because the tree-generating mechanisms are independent across the two forests conditional on $(X,Y^{(r)})$, the finite-aggregation errors $\varepsilon_A^{(r)}(x) := \widehat f_{A}^{(r)}(x) - f_\infty(x;X,Y^{(r)})$ and $\varepsilon_B^{(r)}(x) := \widehat f_{B}^{(r)}(x) - f_\infty(x;X,Y^{(r)})$ are conditionally independent given $(X,Y^{(r)})$ with mean zero. Consequently, these errors do not contribute to the covariance across replicates, and the estimator targets $\Var(f_\infty(x;X,Y)\mid X) = C_T(x)$ without bias from finite aggregation at any value of $B_{\mathrm{mc}}$.

We refer to this estimation strategy as \emph{procedure-aligned synthetic resampling} (PASR), reflecting the requirement that synthetic outcomes are generated under the same model specification that defines the procedure whose variance is being estimated. Errors in the fitted model alter which covariance floor is targeted, but do not compromise the internal validity of the estimation procedure. The properties of $\widehat C_T(x)$ as an estimator of $C_T^{\widehat{\mathbb P}_n}(x)$, and the gap between this target and the true $C_T(x)$, are formalized in Section~\ref{subsec:ct_consistency}.

\subsection{Practical specification of $\widehat{\mathbb P}_n(\cdot\mid X)$}
\label{subsec:practical_specification}

The PASR estimator~\eqref{eq:ct_hat} requires an explicit model for $(Y \mid X)$ from which synthetic outcomes can be generated. In each case, this model is constructed from estimates of the conditional mean and conditional variance of $Y$ given $X$. These estimated quantities serve only to define the synthetic outcome distribution and are not themselves the objects of inferential interest; following standard semiparametric terminology, we refer to them as \emph{nuisance components} of the estimation procedure. The accuracy of these nuisance components determines how closely the fitted model approximates the true data-generating process, and therefore governs the gap between the estimated and true covariance floors.

\medskip
\noindent\textbf{Continuous outcomes.}
We adopt a location--scale model
\[
Y_i^\star = \hat m(X_i) + \hat\sigma(X_i)\,Z_i, \qquad Z_i\stackrel{\text{i.i.d.}}{\sim}\mathcal N(0,1),
\]
where $\hat m(X)$ is an estimated conditional mean and $\hat\sigma(X)$ is an estimated conditional standard deviation.

\medskip
\noindent\textit{Conditional mean.}\quad
The conditional mean $\hat m(X)$ is estimated from the observed data by averaging cross-fitted random forest predictions. For each of $R_{\mathrm{cf}}$ independent repetitions, the training indices are randomly partitioned into two disjoint halves. A random forest is fitted on each half using the full half-sample (without internal subsampling) and used to predict on the complementary half, yielding out-of-fold predictions for every observation. Two such sequences of cross-fitted predictions are constructed using independent random partitions, producing $\hat m_1(X_i)$ and $\hat m_2(X_i)$ for each observation. The final location estimate is their average, $\hat m(X_i) = \tfrac{1}{2}(\hat m_1(X_i) + \hat m_2(X_i))$. This step is performed once on the observed outcomes before any synthetic replicates are generated.

\medskip
\noindent\textit{Conditional variance.}\quad
The conditional variance is estimated via the cross-fitted residual product \citep{chernozhukov2018}
\[
s_i = (Y_i - \hat m_1(X_i))(Y_i - \hat m_2(X_i)).
\]
Because $\hat m_1$ and $\hat m_2$ are constructed from independent training partitions, their estimation errors are approximately independent conditional on $X_i$. Consequently,
\begin{equation}
	\label{eq:sprod_expectation}
	\mathbb E[s_i \mid X_i] = \sigma^2(X_i) + \mathrm{Bias}_1(X_i)\,\mathrm{Bias}_2(X_i),
\end{equation}
where $\mathrm{Bias}_k(X_i) = m(X_i) - \mathbb E[\hat m_k(X_i) \mid X_i]$. The bias product is a second-order term that is typically much smaller than the mean squared error of either estimator individually. A random forest fitted to $\{(X_i, s_i)\}_{i=1}^n$ estimates the conditional variance function $\sigma^2(x)$; predictions from this forest at the training points yield $\hat\sigma^2(X_i)$, and the conditional standard deviation is $\hat\sigma(X_i) = \sqrt{\max(\hat\sigma^2(X_i),\,\epsilon)}$ for a small constant $\epsilon > 0$.

\medskip
\noindent\textit{Nuisance hyperparameters.}\quad
The cross-fitted mean forests and the variance forest are nuisance components: their role is to approximate the conditional mean and variance as accurately as possible, not to replicate the design of the procedure under analysis. We use random forests to estimate these nuisance components out of convenience and consistency with the overall framework, but the theory of the PASR estimator does not require the nuisance components to be estimated by the same procedure or with matching hyperparameters; any sufficiently accurate estimator of the conditional mean and variance would suffice. Investigation of alternative nuisance specifications is deferred to future work. Accordingly, these forests use the candidate-set size $q = p$ and minimum node size $1$, regardless of the hyperparameters of the target forest. Each cross-fitted mean forest trains on the full half-sample without replacement, maximizing the data available in each fold.

\medskip
\noindent\textbf{Binary outcomes.}
For binary responses $Y_i\in\{0,1\}$, the model is $Y_i^\star\sim\mathrm{Bernoulli}(\hat p(X_i))$, where $\hat p(X_i)$ is the out-of-bag predicted probability from a probability forest fitted to the observed binary outcomes using the same forest class and hyperparameters. The conditional variance $\hat p(X_i)(1-\hat p(X_i))$ is determined by the estimated probability and requires no separate variance modeling.

\medskip
In both cases, the specification of $\widehat{\mathbb P}_n(\cdot\mid X)$ is fully determined by the fitted nuisance components. Errors in $\hat m$ or $\hat\sigma$ (or $\hat p$) alter which covariance floor the PASR estimator targets, but do not compromise the internal consistency of the procedure: $\widehat C_T(x)$ remains a consistent estimator of $\Var_{\widehat{\mathbb P}_n}\!\left(f_\infty(x;X,Y)\mid X\right)$ under the fitted model, regardless of whether that model coincides with the true data-generating process.

\subsection{Unbiasedness, consistency, and the nuisance gap}
\label{subsec:ct_consistency}

The PASR estimator~\eqref{eq:ct_hat} targets the covariance floor under the model that generates the synthetic replicates. This subsection establishes that the estimator is unbiased and consistent for this target, then characterizes the gap between the fitted-model and true covariance floors separately for continuous and binary outcomes.

\begin{assumption}[Monte Carlo approximation]
	\label{ass:mc}
	For each fixed $(X,Y)$ and prediction point $x$, the Monte Carlo average satisfies
	\[
	\frac{1}{B_{\mathrm{mc}}}\sum_{b=1}^{B_{\mathrm{mc}}} T_{\theta_{b}}(x;X,Y)\longrightarrow f_\infty(x;X,Y) \quad\text{in probability as }B_{\mathrm{mc}}\to\infty,
	\]
	and the second moment of $T_\theta(x;X,Y)$ conditional on $(X,Y)$ is finite.
\end{assumption}

\begin{proposition}[Unbiasedness and consistency of $\widehat C_T(x)$]
	\label{prop:ct_consistency}
	Fix $x$ and condition on the realized covariate configuration $X$. Let $C_T^{\widehat{\mathbb P}_n}(x) := \Var_{\widehat{\mathbb P}_n}(f_\infty(x;X,Y) \mid X)$ denote the covariance floor under the fitted model. Given Assumption~\ref{ass:mc}, the PASR estimator~\eqref{eq:ct_hat} is unbiased for $C_T^{\widehat{\mathbb P}_n}(x)$ at any $B_{\mathrm{mc}}$:
	\[
	\mathbb E\!\left[\widehat C_T(x) \mid X\right] = C_T^{\widehat{\mathbb P}_n}(x).
	\]
	Under the sequential limits $B_{\mathrm{mc}} \to \infty$ followed by $R \to \infty$,
	\[
	\widehat C_T(x) \xrightarrow{P} C_T^{\widehat{\mathbb P}_n}(x).
	\]
	When the fitted model coincides with the true data-generating process, $\widehat{\mathbb P}_n = \mathbb P(\cdot \mid X)$, both results hold with $C_T(x)$ in place of $C_T^{\widehat{\mathbb P}_n}(x)$.
\end{proposition}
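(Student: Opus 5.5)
The plan is to treat $\widehat C_T(x)$ as the sample covariance of $R$ i.i.d.\ pairs $(\widehat f_A^{(r)}(x),\widehat f_B^{(r)}(x))$ and reduce everything to Lemma~\ref{lem:ct_identity} applied under the fitted model $\widehat{\mathbb P}_n$. Fix $X$ and $\widehat{\mathbb P}_n$. Throughout, all expectations are taken with $Y\sim\widehat{\mathbb P}_n(\cdot\mid X)$ and with independent tree-generating draws. The replicates $r=1,\dots,R$ are then i.i.d., because each uses a fresh $Y^{(r)}$ and fresh collections $\{\theta^A_{rb}\}$, $\{\theta^B_{rb}\}$.

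For unbiasedness I will use the standard fact that, for i.i.d.\ pairs $(U_r,V_r)$ with finite second moments, $\frac{1}{R-1}\sum_r (U_r-\bar U)(V_r-\bar V)$ is unbiased for $\Cov(U_1,V_1)$. Expanding $\sum_r U_rV_r - R\bar U\bar V$ gives expectation $R\,\mathbb E[U_1V_1]-\mathbb E[U_1V_1]-(R-1)\mathbb E U_1\mathbb E V_1$, which is $(R-1)\Cov(U_1,V_1)$. Finite second moments follow from Assumption~\ref{ass:mc}. Here I would also note the implicit requirement that the conditional second moment be integrable over $Y\sim\widehat{\mathbb P}_n$; this holds trivially for binary outcomes, and for the Gaussian location--scale model because tree predictions are averages of $Y_i$.

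It then remains to compute $\Cov(\widehat f_A^{(1)}(x),\widehat f_B^{(1)}(x)\mid X)$. I apply the law of total covariance conditioning on $(X,Y^{(1)})$. Given $(X,Y^{(1)})$, the two forests are built from independent $\theta$-collections, so the conditional covariance term vanishes. Each conditional mean equals $f_\infty(x;X,Y^{(1)})$ by exchangeability, for every $B_{\mathrm{mc}}$. The covariance therefore equals $\Var_{\widehat{\mathbb P}_n}(f_\infty(x;X,Y)\mid X)=C_T^{\widehat{\mathbb P}_n}(x)$, which is exactly the variance representation of Lemma~\ref{lem:ct_identity} read under the fitted law. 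Note that this step uses no limit in $B_{\mathrm{mc}}$.

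For consistency, at fixed $B_{\mathrm{mc}}$ the weak law of large numbers, applied to $\bar U$, $\bar V$ and $R^{-1}\sum_r U_rV_r$, gives $\widehat C_T(x)\to\Cov(U_1,V_1)=C_T^{\widehat{\mathbb P}_n}(x)$ as $R\to\infty$. The limit in $B_{\mathrm{mc}}$ is then harmless, since the target does not depend on $B_{\mathrm{mc}}$; Assumption~\ref{ass:mc} merely guarantees that each $\widehat f^{(r)}$ is close to $f_\infty(x;X,Y^{(r)})$. The sequential limit is in fact stronger than needed. Finally, if $\widehat{\mathbb P}_n=\mathbb P(\cdot\mid X)$, the same argument applies verbatim with the true law and yields $C_T(x)$ via Lemma~\ref{lem:ct_identity}.

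The main obstacle is conceptual rather than computational. One must be careful that "unbiased" is conditional on $\widehat{\mathbb P}_n$, that is, conditional on the observed data used to fit the nuisances, and not over the randomness of the nuisance fit. One must also verify the moment conditions needed both for the covariance expansion and for the WLLN.
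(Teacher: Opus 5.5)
Your proof is correct. Its unbiasedness argument is organized differently from the paper's, and its consistency argument takes the two limits in the opposite order.

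\textbf{Unbiasedness.} The paper writes $\widehat f_{A}^{(r)}(x)=f_\infty^{(r)}+\varepsilon_A^{(r)}$ and $\widehat f_{B}^{(r)}(x)=f_\infty^{(r)}+\varepsilon_B^{(r)}$. It then expands the sample covariance term by term. Every term containing a Monte Carlo error vanishes, by conditional mean zero, conditional independence given $Y^{(r)}$, and independence across replicates. What survives is the unbiased sample variance of the $f_\infty^{(r)}$. You instead split the problem in two: unbiasedness of the sample covariance for i.i.d.\ pairs, then one law-of-total-covariance step given $Y^{(1)}$. That second step is just Lemma~\ref{lem:ct_identity} rerun for $B_{\mathrm{mc}}$-tree forests under $\widehat{\mathbb P}_n$. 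Your version is shorter and shows that the only structural input is this: $\widehat f_A^{(1)}$ and $\widehat f_B^{(1)}$ are conditionally independent given $Y^{(1)}$, with common conditional mean $f_\infty(x;X,Y^{(1)})$. The paper's expansion shows more concretely where the finite-aggregation error goes.

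\textbf{Consistency.} The paper first sends $B_{\mathrm{mc}}\to\infty$ at fixed $R$, using Assumption~\ref{ass:mc} and continuous mapping, so that $\widehat C_T(x)$ tends to the sample variance of $f_\infty^{(1)},\dots,f_\infty^{(R)}$. It then applies the strong law. You apply the weak law at fixed $B_{\mathrm{mc}}$. This gives consistency in $R$ for every fixed $B_{\mathrm{mc}}$ and never uses the convergence half of Assumption~\ref{ass:mc}. That is a stronger and more practically relevant statement, since PASR runs at finite $B_{\mathrm{mc}}$. It is, however, a different iterated limit from the one stated. To recover the stated order literally, you still need the paper's inner step identifying the $B_{\mathrm{mc}}\to\infty$ limit at fixed $R$. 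Your law-of-large-numbers argument with $U_r=V_r=f_\infty^{(r)}$ then finishes. So your phrase ``the limit in $B_{\mathrm{mc}}$ is harmless'' needs that one extra sentence.

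\textbf{Moments and conditioning.} Your remark on moments is a genuine improvement. Assumption~\ref{ass:mc} gives a finite second moment only conditional on $(X,Y)$. Both the covariance computation and the paper's SLLN step tacitly need $\mathbb E_{\widehat{\mathbb P}_n}[f_\infty(x;X,Y)^2\mid X]<\infty$. Your observation that tree predictions are averages of the $Y_i$, so $|T_\theta(x)|\le\max_i|Y_i|$, supplies this for both the Bernoulli and Gaussian specifications. Likewise, your point that ``unbiased'' is conditional on the data used to fit $\widehat{\mathbb P}_n$ makes precise what the paper's notation $\mathbb E[\,\cdot\mid X]$ leaves implicit.
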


A proof is provided in Appendix~\ref{app:ct_consistency}. The unbiasedness at any $B_{\mathrm{mc}}$ follows from the independence of the two forests' tree-generating mechanisms conditional on $(X, Y^{(r)})$, which ensures that finite-aggregation errors cancel in the cross-product. Proposition~\ref{prop:ct_consistency} guarantees that the PASR estimator is internally valid: it recovers the covariance floor of whatever model is imposed, without systematic error from finite aggregation.

\medskip
\noindent\textit{Error decomposition.}\quad
The total estimation error can be written as
\begin{equation}
	\label{eq:error_decomp}
	\widehat C_T(x) - C_T(x) = \underbrace{\widehat C_T(x) - C_T^{\widehat{\mathbb P}_n}(x)}_{\text{Monte Carlo error}} + \underbrace{C_T^{\widehat{\mathbb P}_n}(x) - C_T(x)}_{\text{nuisance gap}}.
\end{equation}
The first term has mean zero by Proposition~\ref{prop:ct_consistency} and concentrates around zero as $R$ increases. The second is deterministic given the fitted nuisance components and reflects how well the fitted model approximates the true data-generating process. All finite-sample bias of $\widehat{C}_T(x)$ relative to $C_T(x)$ is attributable to the nuisance gap; the covariance estimation in Steps~2--4 introduces no systematic error.

\begin{corollary}[Finite-sample concentration in $R$]
	\label{cor:concentration}
	Fix $x$ and condition on the realized covariate configuration $X$. Suppose the tree-level predictions satisfy $T_\theta(x;X,Y)\in[a,b]$ almost surely under $\widehat{\mathbb P}_n(\cdot\mid X)$ for some finite interval $[a,b]$. Then for any $\epsilon>0$,
	\[
	\mathbb P\!\left( \left|\widehat C_T(x) - C_T^{\widehat{\mathbb P}_n}(x)\right| > \epsilon \;\middle|\; X \right) \;\le\; 2\exp\!\left( -\frac{c\,R\,\epsilon^2}{(b-a)^4} \right),
	\]
	for a universal constant $c>0$.
\end{corollary}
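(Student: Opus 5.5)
The plan is to treat $\widehat C_T(x)$ as a sample cross-covariance of $R$ i.i.d. bounded pairs and apply a bounded-differences (McDiarmid) argument, using Proposition~\ref{prop:ct_consistency} to identify the mean.

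\textbf{Setup.} Write $U_r := \widehat f_{A}^{(r)}(x)$ and $V_r := \widehat f_{B}^{(r)}(x)$. Conditional on $X$, the pairs $(U_r,V_r)$, $r=1,\dots,R$, are i.i.d. across $r$. This holds because the synthetic vectors $Y^{(r)}$ are drawn i.i.d. from $\widehat{\mathbb P}_n(\cdot\mid X)$, and the tree-generating draws are independent across replicates. Each $U_r$ and $V_r$ is an average of tree predictions lying in $[a,b]$, so it also lies in $[a,b]$. By Proposition~\ref{prop:ct_consistency}, $\mathbb E[\widehat C_T(x)\mid X]=C_T^{\widehat{\mathbb P}_n}(x)$ exactly, at every $B_{\mathrm{mc}}$. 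The task therefore reduces to concentrating $\widehat C_T(x)$ around its mean, with no centering bias to handle.

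\textbf{Bounded differences.} Use the U-statistic form
\[
\widehat C_T(x)=\frac{1}{2R(R-1)}\sum_{r\neq s}(U_r-U_s)(V_r-V_s),
\]
which follows from the usual identity for the unbiased sample covariance. Now replace one pair $(U_r,V_r)$ by an independent copy. Only the $2(R-1)$ ordered terms involving index $r$ change. Each term is bounded in absolute value by $(b-a)^2$, so each can change by at most $2(b-a)^2$. The total change is therefore at most
\[
\frac{2(R-1)\cdot 2(b-a)^2}{2R(R-1)}=\frac{2(b-a)^2}{R}=:c_r .
\]
McDiarmid's inequality, applied conditionally on $X$ to the independent blocks $(U_r,V_r)$, then gives
\[
\mathbb P\bigl(|\widehat C_T(x)-C_T^{\widehat{\mathbb P}_n}(x)|>\epsilon\mid X\bigr)\le 2\exp\!\Bigl(-\frac{2\epsilon^2}{\sum_r c_r^2}\Bigr)=2\exp\!\Bigl(-\frac{R\epsilon^2}{2(b-a)^4}\Bigr).
\]
This is the claimed bound with $c=1/2$.

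\textbf{Main obstacle.} The delicate step is justifying the independence structure needed for McDiarmid. Each block $(U_r,V_r)$ is a function of $Y^{(r)}$ together with the independent tree draws $\{\theta^A_{rb},\theta^B_{rb}\}_b$. The blocks must be mutually independent across $r$ given $X$. This requires the algorithm's randomization to be drawn freshly for each replicate, which is exactly how Steps~2--3 are specified.

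One further point is that $\widehat{\mathbb P}_n$ is held fixed, i.e., we condition on the observed data that determine the nuisance fit. This makes ``conditional on $X$'' effectively mean conditional on the fitted model. That is the same convention used in Proposition~\ref{prop:ct_consistency}, and I would state it explicitly.
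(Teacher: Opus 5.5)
Your proposal is correct and is essentially the paper's argument. Like the paper, you use that the pairs $(\widehat f_A^{(r)}(x),\widehat f_B^{(r)}(x))$ are i.i.d.\ and $[a,b]$-valued, write $\widehat C_T(x)$ as a U-statistic, and apply a Hoeffding-type bound. The only difference is the concentration inequality: you use McDiarmid's bounded-differences inequality, which gives $c=1/2$. The paper instead cites Hoeffding's inequality for U-statistics; with kernel $\tfrac12(u_1-u_2)(v_1-v_2)$ of range $(b-a)^2$ this gives $2\exp\bigl(-2\lfloor R/2\rfloor\epsilon^2/(b-a)^4\bigr)$, so $c=2/3$ for $R\ge 2$. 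Your remark that $\widehat{\mathbb P}_n$ must be held fixed, i.e.\ that one conditions on the observed data defining the nuisance fit and not on $X$ alone, makes precise a point the paper leaves implicit.
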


\begin{proof}
	Under the fitted model, the pairs $(\widehat f_A^{(r)}(x), \widehat f_B^{(r)}(x))$ are i.i.d.\ across $r=1,\ldots,R$ conditional on $X$. Each forest prediction lies in $[a,b]$, so the centered cross-product is bounded in an interval of width at most $(b-a)^2$. The result follows from Hoeffding's inequality applied to the U-statistic representation of the sample covariance.
\end{proof}

The concentration bound depends on the computational budget $R$ and the range of tree predictions, but not on $p$ or $n$. The Monte Carlo error is therefore controlled entirely by computational effort. Furthermore, because partition selection in squared-error forests is driven by variance-reduction criteria and within-leaf averaging is linear in $Y$, the covariance floor depends on the conditional distribution of $Y \mid X$ primarily through its first two moments. The Gaussian location--scale specification can therefore yield small nuisance gaps even when the true conditional error distribution is non-Gaussian, provided the conditional mean and variance are well approximated.

We now characterize the nuisance gap separately for each outcome type.

\medskip
\noindent\textit{Continuous outcomes: conservative nuisance gap.}\quad

\begin{assumption}[Monotonicity of $C_T$ in outcome dispersion]
	\label{ass:monotone}
	Fix $x$ and condition on the realized covariate configuration $X$. For two models $\mathbb Q_1(\cdot\mid X)$ and $\mathbb Q_2(\cdot\mid X)$ that share the same conditional mean function but satisfy $\Var_{\mathbb Q_2}(Y_i\mid X_i) \geq \Var_{\mathbb Q_1}(Y_i\mid X_i)$ for all $i$, the covariance floor satisfies $C_T^{\mathbb Q_2}(x) \geq C_T^{\mathbb Q_1}(x)$.
\end{assumption}

Assumption~\ref{ass:monotone} states that increasing outcome noise at a fixed conditional mean does not decrease the covariance floor. This holds when the split criterion responds monotonically to within-leaf outcome dispersion, as is the case for variance-reduction criteria used in standard CART implementations.

\begin{proposition}[Conservative nuisance gap for continuous outcomes]
	\label{prop:conservative}
	Consider the continuous outcome specification of Section~\ref{subsec:practical_specification}, with conditional variance estimated via the cross-fitted residual product. Suppose $\hat m_1$ and $\hat m_2$ are constructed by the same algorithm applied to independent training partitions of equal size, so that their conditional bias functions coincide: $\mathrm{Bias}_1(x) = \mathrm{Bias}_2(x) := \beta(x)$ for all $x$. Then:
	\[
	\mathbb E[s_i \mid X_i] = \sigma^2(X_i) + \beta^2(X_i) \geq \sigma^2(X_i)
	\]
	at every design point, with equality if and only if $\hat m_1$ is conditionally unbiased at $X_i$. Under Assumption~\ref{ass:monotone},
	\[
	C_T^{\widehat{\mathbb P}_n}(x) \;\geq\; C_T(x),
	\]
	with equality when $\hat m_1$ is conditionally unbiased.
\end{proposition}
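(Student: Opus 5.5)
The argument has two stages: a moment computation for the cross-fitted residual product, followed by a comparison of covariance floors through Assumption~\ref{ass:monotone}.

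\textbf{Stage 1: the moment identity.} Write $Y_i = m(X_i) + \varepsilon_i$ with $\mathbb E[\varepsilon_i\mid X_i]=0$ and $\Var(\varepsilon_i\mid X_i)=\sigma^2(X_i)$. Because $\hat m_1,\hat m_2$ are out-of-fold, $Y_i$ is not in the training half used to predict at $X_i$. Decompose
\[
Y_i-\hat m_k(X_i)=\varepsilon_i+\beta_k(X_i)+\eta_k(X_i),
\]
where $\beta_k(X_i)=\mathrm{Bias}_k(X_i)$ and $\eta_k(X_i)=\mathbb E[\hat m_k(X_i)\mid X_i]-\hat m_k(X_i)$ has conditional mean zero.

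Expand the product $s_i=(Y_i-\hat m_1(X_i))(Y_i-\hat m_2(X_i))$ and take conditional expectations term by term.
\begin{itemize}
\item The term $\varepsilon_i^2$ contributes $\sigma^2(X_i)$.
\item Terms of the form $\varepsilon_i\cdot(\beta_k+\eta_k)$ vanish. Out-of-fold construction makes $\varepsilon_i$ independent of $\hat m_k(X_i)$ given $X_i$, and $\varepsilon_i$ has mean zero.
\item The term $\eta_1\eta_2$ vanishes, using the approximate independence of the two partitions' estimation errors invoked in \eqref{eq:sprod_expectation}.
\item The terms $\beta_k\eta_{k'}$ vanish because each $\eta_{k'}$ has conditional mean zero.
\end{itemize}
This reproduces \eqref{eq:sprod_expectation}: $\mathbb E[s_i\mid X_i]=\sigma^2(X_i)+\beta_1(X_i)\beta_2(X_i)$.

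By hypothesis $\beta_1=\beta_2=\beta$, since the same algorithm is applied to exchangeable equal-size partitions. The cross term becomes $\beta^2(X_i)\ge 0$. Equality holds if and only if $\beta(X_i)=0$, that is, if and only if $\hat m_1$ is conditionally unbiased at $X_i$.

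\textbf{Stage 2: from the residual product to the covariance floor.} First I would pass from $\mathbb E[s_i\mid X_i]$ to the fitted variance $\hat\sigma^2(X_i)$. The variance forest is a local average of the $s_i$, so I would treat it as targeting $\mathbb E[s\mid X]=\sigma^2+\beta^2$. I would state explicitly the idealization that $\hat\sigma^2(X_i)$ equals, or at least dominates, $\sigma^2(X_i)$ at the design points. The $\max(\cdot,\epsilon)$ truncation only increases $\hat\sigma^2$, so it preserves the inequality.

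Next I would build an intermediate model $\mathbb Q_1$. It has the same conditional mean as the fitted model $\widehat{\mathbb P}_n$ (the location $\hat m$) and the true variances $\sigma^2(X_i)$. The fitted model $\mathbb Q_2=\widehat{\mathbb P}_n$ shares that mean and has variances $\hat\sigma^2(X_i)\ge\sigma^2(X_i)$. Assumption~\ref{ass:monotone} then gives $C_T^{\widehat{\mathbb P}_n}(x)\ge C_T^{\mathbb Q_1}(x)$.

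\textbf{Main obstacle.} The final link is comparing $C_T^{\mathbb Q_1}(x)$ with the true $C_T(x)$, and this is where I expect the difficulty. The two models have the same dispersion but different conditional means, $\hat m$ versus $m$, and Assumption~\ref{ass:monotone} says nothing about mean changes. I would handle this by first aligning the means: either assume the location component is unbiased at the level relevant to the floor, or argue that $C_T$ depends on the outcome law primarily through its second-moment structure. The latter appeals to the same linearity-in-$Y$ heuristic given after Corollary~\ref{cor:concentration}. That yields $C_T^{\mathbb Q_1}(x)=C_T(x)$ and closes the chain.

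For the equality clause, suppose $\beta\equiv 0$. Then $\hat\sigma^2$ targets $\sigma^2$ exactly, so $\mathbb Q_2$ coincides with $\mathbb Q_1$ and the two floors are equal. The mean-alignment step is the point I would flag as an implicit modelling assumption rather than something proved.
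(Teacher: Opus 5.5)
Your proposal is correct to the same standard as the paper's proof and follows it essentially step for step. Stage 1 reproduces Steps 1--3: the same $\mathrm{Bias}_k+\eta_k$ split, out-of-fold independence to remove the $\varepsilon_i$ cross terms, independence of $\eta_1,\eta_2$, and then $\beta_1=\beta_2$. Stage 2 is Steps 4--5. The two idealizations you flag are the ones the paper also relies on: it asserts $\hat\sigma^2(X_i)\ge\sigma^2(X_i)$ only ``in expectation,'' and it applies Assumption~\ref{ass:monotone} directly to two laws with different means ($\hat m$ versus $m$), dismissing that as ``a separate channel.'' Your intermediate model $\mathbb Q_1$, which keeps the means equal as the assumption requires, is the more careful of the two and makes explicit the mean-alignment step the paper leaves implicit.
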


A proof is provided in Appendix~\ref{app:conservative}. The cross-fitted residual product estimates $\sigma^2(X_i)$ with upward bias equal to $\beta^2(X_i)$, the squared systematic error of the mean estimator. Because $\beta^2(X_i)$ is the same quantity that governs the forest's predictive mean squared error, the PASR estimator's accuracy is directly tied to the quality of the deployed prediction: forests with low prediction error produce small nuisance gaps, while forests with large prediction error produce wider but still valid intervals. The conservative direction ensures that estimation difficulty manifests as overcoverage rather than undercoverage---the nuisance gap can never produce anti-conservative intervals for continuous outcomes. The PASR estimator thus inherits the difficulty of the prediction problem it seeks to characterize: settings where the forest predicts well are precisely the settings where its uncertainty can be estimated most accurately.

\medskip
\noindent\textit{Binary outcomes: asymptotic unbiasedness.}\quad

For binary outcomes, the nuisance gap has a qualitatively different character. The conditional variance under the fitted Bernoulli model is $\hat{p}(X_i)(1 - \hat{p}(X_i))$---a deterministic function of the forest's own prediction rather than a separately estimated quantity. This eliminates the source of conservative bias in continuous outcomes.

\begin{proposition}[Asymptotic unbiasedness for binary outcomes]
	\label{prop:binary_unbiased}
	Consider a classification forest with tree-level predictions $T_\theta(x) = \sum_{i=1}^n W_i(x;\theta) Y_i$, where $Y_i \mid X \sim \mathrm{Bernoulli}(p(X_i))$ independently. Let $\bar{W}_i(x) = \mathbb{E}_\theta[W_i(x;\theta) \mid X]$ denote the expected forest weight for observation $i$. The PASR estimator targets $C_T^{\widehat{\mathbb{P}}_n}(x)$ and the true covariance floor is $C_T(x)$, defined respectively by
	\[
	C_T^{\widehat{\mathbb{P}}_n}(x) = \sum_{i=1}^n \bar{W}_i(x)^2 \, \hat{p}(X_i)(1 - \hat{p}(X_i)), \qquad C_T(x) = \sum_{i=1}^n \bar{W}_i(x)^2 \, p(X_i)(1 - p(X_i)),
	\]
	where $\hat{p}(X_i)$ is the forest's fitted probability at training point $X_i$. Then:
	\begin{enumerate}
		\item[\textnormal{(i)}] The PASR estimator is exactly unbiased for $C_T^{\widehat{\mathbb{P}}_n}(x)$ at any finite $B_{\mathrm{mc}}$ (Proposition~\ref{prop:ct_consistency}).
		\item[\textnormal{(ii)}] Taking expectations over $Y \mid X$,
		\[
		\mathbb{E}_Y\!\left[C_T^{\widehat{\mathbb{P}}_n}(x) \mid X\right]
		= C_T(x) - \sum_{i=1}^n \bar{W}_i(x)^2 \, \Var\!\left(\hat{p}(X_i) \mid X\right).
		\]
		\item[\textnormal{(iii)}] Under the consistency conditions assumed throughout, $\Var(\hat{p}(X_i) \mid X) = O(n^{-1})$ uniformly in $i$ and $\sum_{i=1}^n \bar{W}_i(x)^2 = O(n^{-1})$, so
		\[
		\mathbb{E}_Y\!\left[C_T^{\widehat{\mathbb{P}}_n}(x) \mid X\right] = C_T(x) + O(n^{-2}),
		\]
		and the bias decreases at rate $O(n^{-2})$.
	\end{enumerate}
\end{proposition}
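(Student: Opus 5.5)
Part (i) needs no new argument: it is Proposition~\ref{prop:ct_consistency} specialized to the Bernoulli model $\widehat{\mathbb P}_n$. The unbiasedness there rests only on two facts. The synthetic replicates are i.i.d.\ from the fitted model, and the two forests within each replicate use independent tree-generating mechanisms. Neither fact depends on the outcome type. I would therefore just record that the binary specification $Y_i^\star\sim\mathrm{Bernoulli}(\hat p(X_i))$ satisfies Assumption~\ref{ass:mc}, since tree predictions lie in $[0,1]$ and second moments are trivially finite.

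For (ii), I take the displayed formulas for $C_T^{\widehat{\mathbb P}_n}(x)$ and $C_T(x)$ as given. They are the linear-in-$Y$ covariance expressions with expected weights $\bar W_i(x)$ treated as fixed, analogous to \eqref{eq:calign_explicit}. The key algebraic identity is, for each $i$,
\[
\mathbb E\bigl[\hat p(X_i)(1-\hat p(X_i))\mid X\bigr]
= \mathbb E[\hat p(X_i)\mid X]-\bigl(\mathbb E[\hat p(X_i)\mid X]\bigr)^2-\Var(\hat p(X_i)\mid X).
\]
If $\hat p(X_i)$ is unbiased for $p(X_i)$ given $X$, the first two terms give $p(X_i)(1-p(X_i))$. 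Multiplying by $\bar W_i(x)^2$ and summing over $i$, using linearity of expectation, then yields (ii). The weights must be treated as non-random with respect to the $Y$-expectation, as in the statement's definition. I would state unbiasedness $\mathbb E[\hat p(X_i)\mid X]=p(X_i)$ explicitly as part of the ``consistency conditions assumed throughout''. Without it, an extra term $\sum_i\bar W_i^2[b_i(1-2p_i)-b_i^2]$ appears, where $b_i$ is the bias of $\hat p(X_i)$. I would either assume it away or note that (ii) holds exactly under conditional unbiasedness of the out-of-bag probabilities.

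For (iii), the plan is to bound the correction term $\sum_i\bar W_i(x)^2\Var(\hat p(X_i)\mid X)$. Since $\Var(\hat p(X_i)\mid X)\le \max_j\Var(\hat p(X_j)\mid X)=O(n^{-1})$ uniformly in $i$, the sum is at most $O(n^{-1})\cdot\sum_i\bar W_i(x)^2=O(n^{-1})\cdot O(n^{-1})=O(n^{-2})$. This is just Hölder's inequality in the form $\sum a_ib_i\le\max b_i\sum a_i$.

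The main obstacle is the justification in (ii)–(iii) that the bias of $\hat p$ is negligible. It needs either exact conditional unbiasedness or a bias of order $o(n^{-1/2})$ uniformly in $i$, so that $b_i^2$ and $b_i\cdot\bar W_i^2$ terms are also $O(n^{-2})$ after summation. I would make this an explicit rate condition inside the ``consistency conditions'' and remark that the weight condition $\sum_i\bar W_i^2=O(n^{-1})$ corresponds to expected weight spread over $\Theta(n)$ observations. That in turn requires terminal occupancy $s$ growing proportionally to $n$, or the statement being read as a rate assumption.
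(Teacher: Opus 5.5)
Your route is the same as the paper's. Part (i) is an appeal to Proposition~\ref{prop:ct_consistency}. Part (ii) applies the identity $\mathbb E[Z(1-Z)]=\mathbb E Z-(\mathbb E Z)^2-\Var(Z)$ to $Z=\hat p(X_i)$, with $\bar W_i(x)$ held fixed under $\mathbb E_Y$. Part (iii) multiplies the two $O(n^{-1})$ rates. The paper gets $\sum_i\bar W_i(x)^2=O(n^{-1})$ from $\bar W_i(x)=O(n^{-1})$ under ``standard conditions on terminal node sizes,'' which is your remark about $s$ growing with $n$.

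The one place you diverge is the bias of $\hat p$, and there your version is the sounder one. The paper uses only $\mathbb E_Y[\hat p(X_i)\mid X]=p_\infty(X_i;X)=p(X_i)+o(1)$. It then writes the resulting discrepancy as $o\bigl(\Var_Y(\hat p(X_i)\mid X)\bigr)$ and ``absorbs'' it. But a mean shift $b_i$ changes $p(1-p)$ by $b_i(1-2p(X_i))-b_i^2$, which is $o(1)$, not $o(\Var)$. So the exact equality in (ii) really does require $\mathbb E[\hat p(X_i)\mid X]=p(X_i)$, or an explicit rate on $b_i$, as you say.

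Your fallback rate is too weak, though. With $\max_i|b_i|=o(n^{-1/2})$, the linear term $\sum_i\bar W_i(x)^2\,b_i\bigl(1-2p(X_i)\bigr)$ is only guaranteed to be $O(n^{-1})\cdot o(n^{-1/2})=o(n^{-3/2})$, not $O(n^{-2})$. To keep the $O(n^{-2})$ conclusion of (iii) you need $\max_i|b_i|=O(n^{-1})$ uniformly in $i$. The quadratic term $\sum_i\bar W_i(x)^2 b_i^2$ is then harmless.
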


A proof is provided in Appendix~\ref{app:binary_unbiased}. The bias in part~(ii) arises from Jensen's inequality applied to the concave function $p \mapsto p(1-p)$: the variability of $\hat{p}(X_i)$ around its target produces a small downward bias in the fitted-model variance at each training point. The $O(n^{-2})$ rate in part~(iii) reflects the product of two $O(n^{-1})$ quantities---the variance of the forest prediction at each training point and the concentration of forest weights---so the bias decreases faster than either component alone.

\begin{remark}
	\label{rem:binary_vs_continuous}
	The contrast between the two outcome types is instructive. For continuous outcomes, the nuisance gap is nonnegative and first-order: the separately estimated conditional variance absorbs $\beta^2(X_i)$, producing conservative bias that scales with the difficulty of estimating the conditional mean. For binary outcomes, the nuisance gap is $O(n^{-2})$ and slightly anti-conservative: the concavity of $p(1-p)$ produces a small downward bias that is empirically negligible across all configurations evaluated, including a high-dimensional stress test where $n = p$. The key structural difference is that the Bernoulli variance is a deterministic function of the prediction itself, eliminating the independent estimation channel that drives conservative inflation in the continuous setting.
\end{remark}

\subsection{Numerical stability of the estimation procedure}
\label{subsec:numerical_stability}

The PASR procedure requires fitting forests to synthetic outcome vectors across $R$ replicates. A natural concern is whether the resulting variance estimates are numerically stable, particularly when the covariates exhibit collinearity. Recall that conditional on a realized tree structure, the prediction at $x$ is a weighted average of outcomes $\tilde f(x;Y) = \sum_{i=1}^n W_i(x)\,Y_i$ with nonnegative weights summing to one.

\begin{proposition}[Uniform outcome stability conditional on tree structure]
	\label{prop:lipschitz_rf}
	Fix a realized tree structure and prediction point \(x\). For any perturbation vector \(\delta Y\in\mathbb R^n\),
	\[
	\bigl|\tilde f(x;Y+\delta Y)-\tilde f(x;Y)\bigr|
	\;\le\;
	\|\delta Y\|_\infty.
	\]
	The mapping $Y\mapsto\tilde f(x;Y)$ is $1$-Lipschitz; that is, the change in prediction at $x$ is bounded by the largest change in any single observed outcome.
\end{proposition}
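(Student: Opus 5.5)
The plan is to exploit linearity of the prediction in the outcomes once the tree structure is frozen, and then to use the defining properties of the CART weights. Conditional on a realized tree structure, and hence on a realized averaging set $A(x)$ with $|A(x)|\ge s\ge 1$, the prediction is $\tilde f(x;Y)=\sum_{i=1}^n W_i(x)Y_i$. The weights are $W_i(x)=M_i(x)/|A(x)|$, as in Section~\ref{subsec:tree_level}. They therefore satisfy $W_i(x)\ge 0$ and $\sum_i W_i(x)=1$.

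The key step is to hold these weights fixed when $Y$ is replaced by $Y+\delta Y$. The statement conditions on the realized tree structure, so the partition, and with it $A(x)$, does not depend on the perturbed outcomes. Linearity then gives
\[
\tilde f(x;Y+\delta Y)-\tilde f(x;Y)=\sum_{i=1}^n W_i(x)\,\delta Y_i .
\]

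Next I bound this difference with the triangle inequality and nonnegativity of the weights:
\[
\Bigl|\sum_i W_i(x)\,\delta Y_i\Bigr|\le \sum_i W_i(x)\,|\delta Y_i|\le \|\delta Y\|_\infty\sum_i W_i(x)=\|\delta Y\|_\infty .
\]
Since $\delta Y$ is arbitrary, this is exactly the $1$-Lipschitz property with respect to $\|\cdot\|_\infty$. In other words, the prediction is a convex combination of the outcomes, so shifting each outcome by at most $\|\delta Y\|_\infty$ shifts the prediction by at most that amount. I would add a brief remark that the same bound holds for the forest average $\hat f_B(x)$ with weights $\hat W^{(B)}_i(x)$, because an average of convex weight vectors is again a convex weight vector.

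The only real subtlety is interpretive rather than technical. The data-adaptive dependence of $A_\theta(x)$ on $Y$, noted in Section~\ref{subsec:tree_level}, must be excluded by the conditioning on the realized tree structure. I would state explicitly that the bound concerns the map with the weights frozen, not the refitted tree. Apart from that clarification, every step is immediate.
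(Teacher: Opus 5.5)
Your proposal is correct and matches the paper's proof. With the tree structure fixed, linearity gives the difference $\sum_i W_i(x)\,\delta Y_i$, and nonnegativity and unit sum of the weights, with the triangle inequality, bound it by $\|\delta Y\|_\infty$; your remarks that the weights are frozen (excluding the data-adaptive dependence of $A_\theta(x)$ on $Y$) and that the bound extends to $\hat f_B(x)$ are sound additions the paper's argument leaves implicit.
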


\begin{proof}
	By linearity,
	$\tilde f(x;Y+\delta Y)-\tilde f(x;Y) = \sum_{i=1}^n W_i(x)\,\delta Y_i$.
	Because the weights are nonnegative and sum to one,
	\[
	\left|\sum_{i=1}^n W_i(x)\,\delta Y_i\right|
	\le
	\sum_{i=1}^n W_i(x)\,|\delta Y_i|
	\le
	\max_{1\le i\le n}|\delta Y_i|.
	\]
\end{proof}

\begin{remark}[Implications for variance estimation under collinearity]
	\label{rem:glm_contrast}
	For linear and generalized linear models, predictions depend on outcomes through fitted coefficients involving an inverse information matrix, so small perturbations in $Y$ can be amplified under near collinearity. The prediction variance at a point $x$ can grow without bound as the design matrix becomes ill-conditioned. Tree-based predictors do not share this property. Conditional on their realized structure, tree-based predictors are convex combinations of outcomes, and because $\sum W_i^2 \leq \sum W_i = 1$, the prediction variance satisfies $\Var(\tilde f(x;Y) \mid X) \leq \max_i \sigma_i^2$ regardless of the collinearity structure of the covariates. This bound carries through to the covariance floor: because $C_T(x) = \Var(f_\infty(x;X,Y) \mid X)$ and $f_\infty(x;X,Y)$ is an average of such convex combinations, the covariance floor itself cannot be inflated by collinearity. The PASR procedure therefore produces stable variance estimates across synthetic replicates even when the predictor space contains highly correlated variables.
\end{remark}

\subsection{Operational variance estimation}
\label{subsec:operational}

Combining Theorem~\ref{thm:finite_var} with Lemma~\ref{lem:ct_identity} yields a decomposition of the finite-$B$ variance that separates the $\theta$-variability at realized data from the outcome-induced structural component:
\[
\Var\!\left(\hat f_B(x)\mid X\right)
=
\frac{1}{B}\,\mathbb E\!\left[\sigma_{T,\theta}^2(x)\mid X\right]
+
C_T(x),
\]
where $\sigma_{T,\theta}^2(x):=\Var(T_\theta(x)\mid X,Y)$ denotes the tree-level variance arising purely from the tree-generating mechanism at fixed data. The identity follows from writing $\sigma_T^2(x)=\mathbb E[\sigma_{T,\theta}^2(x)\mid X]+C_T(x)$ via the law of total variance and substituting into Theorem~\ref{thm:finite_var}.

For a practitioner with data in hand, the within-forest sample variance of tree-level predictions $\{T_{\theta_b}(x)\}_{b=1}^B$ from a single fitted forest estimates $\sigma_{T,\theta}^2(x)$ at the observed outcome vector. The covariance floor $C_T(x)$ is estimated by $\widehat C_T(x)$ via PASR. The operational finite-$B$ variance estimator is therefore
\begin{equation}
	\label{eq:var_hat_operational}
	\widehat{\Var}\!\left(\hat f_B(x)\mid X\right)
	=
	\frac{1}{B}\,\widehat\sigma_{T,\theta}^{\,2}(x)
	+
	\widehat C_T(x),
\end{equation}
where $\widehat\sigma_{T,\theta}^{\,2}(x)=\frac{1}{B-1}\sum_{b=1}^B\bigl(T_{\theta_b}(x)-\hat f_B(x)\bigr)^2$ is the empirical variance of tree-level predictions from the fitted forest.

\subsection{Prediction and confidence intervals for deployed forests}
\label{subsec:prediction_intervals}

The operational variance estimator~\eqref{eq:var_hat_operational} quantifies uncertainty in the forest prediction $\hat f_B(x)$ itself. The appropriate interval construction depends on whether the prediction targets a continuous outcome or a probability.

\subsubsection*{Continuous outcomes: prediction intervals for $Y_{\mathrm{new}}$}

For predicting a new outcome $Y_{\mathrm{new}}$ at a covariate value $x$, an additional component is needed: the irreducible outcome variability $\sigma^2(x) = \Var(Y_{\mathrm{new}} \mid X_{\mathrm{new}} = x)$, which represents noise that would remain even if the conditional mean were known exactly. The total predictive variance is
\[
\Var\!\left(Y_{\mathrm{new}} - \hat f_B(x) \mid X\right)
=
\sigma^2(x)
\;+\;
\frac{1}{B}\,\sigma_{T,\theta}^2(x)
\;+\;
C_T(x),
\]
where the three terms correspond to irreducible outcome noise, Monte Carlo variability from finite aggregation, and the design-induced covariance floor. An estimated prediction interval at level $1-\alpha$ is
\begin{equation}
	\label{eq:pred_interval}
	\hat f_B(x)
	\;\pm\;
	z_{\alpha/2}
	\sqrt{
		\hat\sigma^2(x)
		\;+\;
		\frac{1}{B}\,\widehat\sigma_{T,\theta}^{\,2}(x)
		\;+\;
		\widehat C_T(x)
	},
\end{equation}
where $\hat\sigma^2(x)$ is available from the variance forest fitted during the nuisance estimation of Section~\ref{subsec:practical_specification}. By Proposition~\ref{prop:conservative}, the bias direction of both $\hat\sigma^2(x)$ and $\widehat C_T(x)$ is upward, so any systematic error in the prediction interval favors overcoverage rather than undercoverage.

\subsubsection*{Binary outcomes: confidence intervals for $p(x)$}

For probability forests targeting the conditional probability $p(x) = \mathbb P(Y = 1 \mid X = x)$, the natural inferential target is the precision of the probability estimate itself. A prediction interval for the binary outcome $Y_{\mathrm{new}} \in \{0,1\}$ is not meaningful: any interval spanning both values achieves trivial coverage, while the Gaussian approximation underlying~\eqref{eq:pred_interval} is poorly suited to discrete outcomes.

The operationally relevant quantity is a confidence interval for $p(x)$:
\begin{equation}
	\label{eq:ci_binary}
	\hat p_B(x)
	\;\pm\;
	z_{\alpha/2}
	\sqrt{
		\frac{1}{B}\,\widehat\sigma_{T,\theta}^{\,2}(x)
		\;+\;
		\widehat C_T(x)
	}.
\end{equation}
The irreducible outcome variance $p(x)(1-p(x))$ does not appear because the interval targets the estimand $p(x)$, not a new realization from it. This construction answers the question a practitioner faces when deploying a probability forest: given the data in hand and the forest already fitted, how precisely is the conditional probability estimated at $x$?

As noted in Section~\ref{sec:intro}, no existing method provides this type of pointwise uncertainty quantification for predicted probabilities from a deployed probability forest. Existing asymptotic approaches target sampling variability of the infinite-aggregation target under repeated dataset generation, quantifying how the forest's target would change if new training data were collected. The PASR-based intervals in~\eqref{eq:pred_interval} and~\eqref{eq:ci_binary} answer a different question: given a trained forest and a new observation characterized by $X_{\mathrm{new}}$, how much uncertainty surrounds the prediction at that point? They condition on the realized training data and quantify the total uncertainty of the deployed prediction at $x$, including the design-induced covariance floor that persists under infinite aggregation. All components are estimable from the fitted forest and the PASR procedure alone, without resampling over new datasets or appealing to asymptotic sampling theory.

For continuous outcomes, ignoring the covariance floor and constructing prediction intervals from the within-forest variance alone---which estimates only the Monte Carlo component $\sigma_{T,\theta}^2(x)/B$---produces intervals that undercover systematically as $B$ grows, because the floor component is not captured. For binary outcomes, the consequence is more severe: the within-forest variance estimates only the Monte Carlo variability of $\hat{p}_B(x)$ around the data-conditional limit $f_\infty(x;X,Y)$, which goes to zero as $B$ increases. Without the covariance floor, the confidence interval collapses to a point, yielding coverage that goes to zero for the true $p(x)$. The covariance floor is therefore not merely an improvement over existing intervals---it is the component that makes pointwise inference for deployed probability forests possible. Formal coverage properties, including finite-sample calibration and comparison with conformal prediction methods, are deferred to future work.

\section{Simulation study of the empirical estimation}
\label{sec:simulations}

\subsection{Simulation design}
\label{subsec:sim_design}

We evaluated the PASR estimator across 36 scenarios spanning two sample sizes ($n \in \{200, 400\}$), two predictor dimensions ($p \in \{10, 30\}$), three values of the candidate split set size ($q \in \{1, \lceil\sqrt{p}\,\rceil, p\}$), and three observation sampling schemes (50\% and 80\% subsampling without replacement and standard bootstrap). Both continuous and binary outcomes are evaluated for each scenario. The data-generating mechanism uses mixed predictor types (continuous, binary, and multinomial), nonlinear mean and variance functions with interactions, and heteroscedastic errors for continuous outcomes; binary outcomes target 40\% prevalence. Full specification appears in Appendix~\ref{app:dgm}.

The simulation proceeds in two stages for each scenario. Both stages share a single fixed covariate matrix $X$ of $n$ rows, generated once from the predictor distribution (Appendix~\ref{app:dgm}) and held constant throughout. All forests in both stages are trained on these same $n$ observations---only the outcome vector $Y$ changes across replications. A separate set of $n_{\mathrm{test}} = 400$ evaluation points is constructed by resampling rows from $X$ with replacement and adding Gaussian jitter ($\sigma = 0.02$) to continuous predictors, producing covariate configurations near the training support but not identical to any training row. These evaluation points are also fixed across all replications and serve only as locations at which $C_T(x)$ and $\widehat C_T(x)$ are assessed; they do not enter any forest's training data.

\medskip
\noindent\textit{Stage 1: True covariance floor.}\quad
For each of $R_{\mathrm{true}} = 300$ replications, we draw a fresh outcome vector $Y$ of length $n$ from the data-generating mechanism at the fixed $X$, train a random forest of $B_{\mathrm{true}} = 8{,}000$ trees on the $n$-row dataset $(X, Y)$ under the scenario's $(q, \text{sampling scheme})$ configuration using a common realization of the tree-generating mechanism across replications, and record the forest's predictions at all $n_{\mathrm{test}}$ evaluation points. Because only $Y$ varies across replications while both $X$ and the tree-generating seeds are held fixed, the pointwise sample covariance of predictions across the $R_{\mathrm{true}}$ replications estimates $C_T(x) = \Var(f_\infty(x; X, Y) \mid X)$.

\medskip
\noindent\textit{Stage 2: PASR estimation.}\quad
For each of $S = 50$ independent datasets---each generated by drawing a fresh $Y$ of length $n$ at the same fixed $X$---we apply the full PASR procedure: fit a deployed forest of $B_{\mathrm{deploy}} = 2{,}000$ trees on the $n$-row dataset, estimate the covariance floor using $R_{\mathrm{syn}} = 150$ synthetic resamples, and estimate the cross-fitted residual product using $R_{\mathrm{cf}} = 5$ half-splits with saturated mean forests ($q = p$, min.node.size $= 1$, no subsampling) as specified in Section~\ref{subsec:practical_specification}. The PASR estimate $\widehat C_T(x)$ reported in the main tables is the pointwise mean across the $S = 50$ datasets; the dispersion of single-run estimates across datasets is examined in Appendix~\ref{app:stability}.

\medskip
\noindent\textit{Analysis and presentation.}\quad
The primary diagnostic is the pointwise bias $\widehat C_T(x) - C_T(x)$, summarized by its mean, median, and interquartile range across test points. We supplement these with a pointwise regression $\widehat C_T(x) = \hat\alpha + \hat\beta\, C_T(x)$ fit across the $n_{\mathrm{test}}$ evaluation points: the slope $\hat\beta$ measures how well the estimator tracks pointwise variation in the true floor, the intercept $\hat\alpha$ captures any additive shift, and the correlation $r$ summarizes overall pointwise agreement. A slope near 1 with intercept near zero indicates that the estimator resolves pointwise differences accurately; a slope below unity with an elevated intercept indicates that estimates are pooled toward a global mean rather than tracking local variation---a pattern that reflects smoothing rather than systematic distortion. These diagnostics are reported for all 36 scenarios in Tables~\ref{tab:continuous} and~\ref{tab:binary} (Appendix~\ref{app:tables}).

In the main text, we present detailed results for two representative scenarios: a favorable regime ($n = 400$, $p = 10$, $q = 4$, bootstrap) where the nuisance estimation has sufficient data relative to model complexity, and a challenging regime ($n = 200$, $p = 30$, $q = 6$, bootstrap) where each cross-fitting fold ($n/2$) sees only 100 observations with 30 features. These two scenarios span the range of estimator performance and illustrate how the PASR estimator behaves under both well-specified and difficult nuisance estimation conditions. A high-dimensional stress test ($n = p = 200$) is presented separately in Section~\ref{subsec:sim_stress}.

All simulation code, including scripts for reproducing figures, running the PASR procedure, and summarizing results, is made available on the first author's Github page: https://github.com/NateOConnellPhD

\subsection{Estimation results}
\label{subsec:sim_results}

Figure~\ref{fig:scatter} displays the pointwise relationship between $\widehat C_T(x)$ and $C_T(x)$ for the two representative scenarios, and Figure~\ref{fig:bias} summarizes the cross-scenario relationship between mean estimated and mean true covariance floors. Tables~\ref{tab:continuous} and~\ref{tab:binary} in Appendix~\ref{app:tables} report the full diagnostics across all 36 scenarios.

\begin{figure}[t!]
	\centering
	\includegraphics[width=\textwidth]{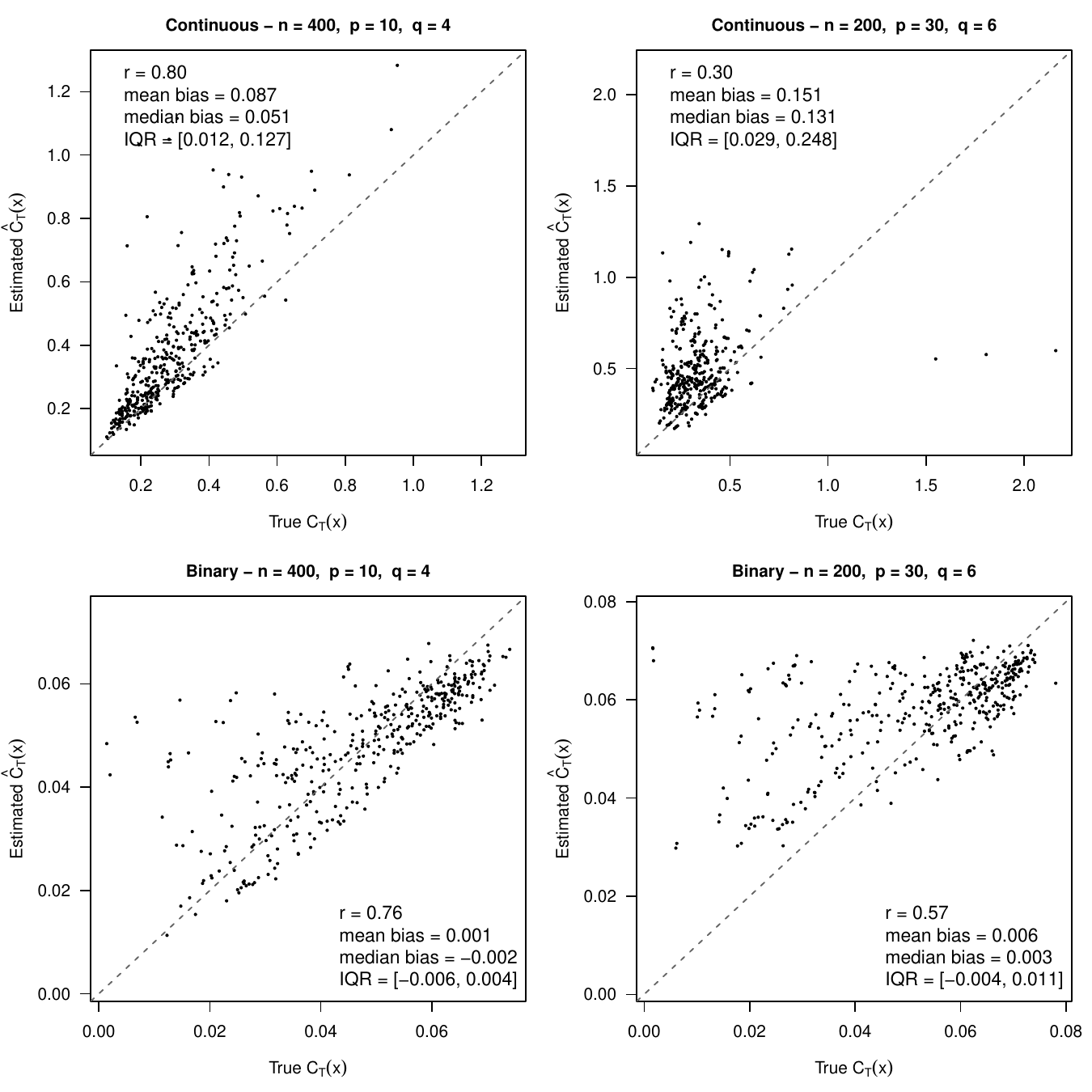}
	\caption{Pointwise relationship between the PASR estimate $\widehat C_T(x)$ and the true covariance floor $C_T(x)$ at $n_{\mathrm{test}} = 400$ test points, for the favorable regime (left; $n = 400$, $p = 10$, $q = 4$, bootstrap) and the challenging regime (right; $n = 200$, $p = 30$, $q = 6$, bootstrap). Top row: continuous outcomes; bottom row: binary outcomes. Dashed line is the identity. The PASR estimate is the mean across $S = 50$ independent datasets. For continuous outcomes, the majority of test points form a tight cluster with moderate conservative shift; test points with large true $C_T(x)$ are pooled toward the level of this cluster. Binary outcomes exhibit near-zero mean bias in both regimes.}
	\label{fig:scatter}
\end{figure}

\medskip
\noindent\textit{Continuous outcomes.}\quad
The PASR estimator is uniformly conservative for continuous outcomes: the mean estimate $\overline{\widehat C_T}$ exceeds the mean true floor $\overline{C_T}$ in every one of the 36 scenarios (Figure~\ref{fig:bias}, left panel), consistent with the theoretical guarantee of Proposition~\ref{prop:conservative}. The cross-scenario correlation between $\overline{\widehat C_T}$ and $\overline{C_T}$ is $r = 0.997$, indicating that the estimator tracks the true floor level with near-perfect fidelity across a wide range of design configurations.

The magnitude of the conservative bias is governed by the nuisance gap $\beta^2(x)$ and therefore varies with the difficulty of estimating the conditional mean. In the favorable regime ($n = 400$, $p = 10$), median bias ranges from $0.007$ to $0.088$ across the nine $(q, \text{sampling scheme})$ configurations, and the median is consistently much smaller than the mean (Table~\ref{tab:continuous}). In the challenging regime ($n = 200$, $p = 30$), each cross-fitting fold sees only 100 observations with 30 features, and the nuisance gap is correspondingly larger: median bias ranges from $0.030$ to $0.198$.

The pointwise scatterplots in Figure~\ref{fig:scatter} demonstrate the structure behind these summaries. In both regimes, the majority of test points form a cluster where most $C_T(x)$ lie, and $\widehat C_T(x)$ tracks $C_T(x)$ closely with a moderate conservative shift. A smaller number of test points with low true $C_T(x)$ exhibit substantially inflated estimates, appearing well above the identity line at the low end of the horizontal axis. These are test points where the conditional mean is locally difficult to estimate, producing a large nuisance gap $\beta^2(x)$ that inflates the fitted variance and drives the PASR estimate upward---exactly the mechanism characterized by Proposition~\ref{prop:conservative}. This concentration of overestimation at difficult-to-estimate test points explains why mean bias exceeds median bias across all scenarios: the typical test point within the main cluster is estimated with substantially less bias than the mean suggests. In the challenging regime, these locally inflated estimates are more prevalent, producing slopes below 1 and elevated intercepts (Table~\ref{tab:continuous}), but the direction of bias remains uniformly conservative. These difficult-to-estimate test points are more prevalent in regimes where true $C_T(x)$ is low for most points---that is, regimes with high $p$ relative to $n$, which drives down the theoretical covariance floor as described in Section~\ref{sec:design_parameters}.

The cross-fitted residual product $\bar s_i$ is approximately constant within each $(n, p)$ configuration across all values of $q$ and the observation sampling scheme (mean $\bar s \approx 1.25$ for $p = 10$ and $\bar s \approx 1.59$ for $p = 30$ at $n = 400$). This confirms that the nuisance estimation is effectively decoupled from the target procedure's design parameters, as intended by the saturated mean forest specification.

\medskip
\noindent\textit{Binary outcomes.}\quad
Binary outcomes exhibit markedly different behavior. Mean empirical bias is effectively zero across all 36 scenarios consistent with ~\ref{prop:binary_unbiased}, ranging from $-0.007$ to $+0.007$ (Table~\ref{tab:binary}; Figure~\ref{fig:bias}, right panel). This absence of systematic bias reflects a structural feature of classification forests producting predicted probabilities: the conditional outcome variance $p(x)(1-p(x))$ is not estimated by a separate nuisance procedure but is instead determined entirely by the tree-level probability predictions themselves, so the nuisance gap that drives conservative bias in the continuous case does not arise.

The pointwise scatterplots in Figure~\ref{fig:scatter} show a related pattern. In the favorable regime, the PASR estimate tracks the true floor with high fidelity across the full range of test points, with the central mass concentrated around the identity line. In the challenging regime, the central mass of test points concentrates at higher values of true $C_T(x)$ and is still tracked relatively accurately among the predominant mass of true $C_T(x)$ points, but test points with low true $C_T(x)$ are pulled upward toward the grand mean of the estimated floor. This upward pull reflects smoothing inherent to the forest-based estimator, which compresses pointwise variation toward the population average when estimation is difficult. In Table~\ref{tab:binary}, this is reflected by attenuated slopes below unity and elevated intercepts in the challenging regime, compared with slopes closer to unity and intercepts near zero in the favorable regime. Because the low-floor test points are overestimated while the central mass is estimated accurately, the deviations approximately cancel, yielding near-zero mean bias in both regimes. The cross-scenario correlation between $\overline{\widehat C_T}$ and $\overline{C_T}$ is $r = 0.991$, confirming that the estimator recovers the correct floor level even when pointwise resolution is limited. Since classification is the dominant use case for random forests in applied work, the strong mean-level performance and near-zero bias for binary outcomes is the most operationally relevant finding.

\medskip
\noindent\textit{Cross-scenario summary.}\quad
Across all 36 scenarios, the PASR estimator exhibits two consistent properties: uniform conservatism for continuous outcomes and near-zero bias for binary outcomes. These properties hold across variations in sample size, dimensionality, candidate set size, and observation sampling scheme. Figure~\ref{fig:bias} confirms that the bias direction is stable and predictable: continuous scenarios lie uniformly above zero, while binary scenarios cluster tightly around zero at all floor magnitudes.

The results summarized above represent averages of $\widehat C_T(x)$ across $S = 50$ independent simulations per scenario. In practice, a user applies the PASR procedure once to a single dataset, so the stability of individual runs is critical to the method's utility. Appendix~\ref{app:stability} examines single-run stability by decomposing run-to-run variability into a systematic component (driven by nuisance estimation quality, which shifts all test-point estimates in the same direction) and a pointwise component (driven by synthetic resampling noise at individual test points). In the favorable regime, the pointwise component dominates (mean pairwise correlation of bias across test points of $0.26$), meaning that a single run may over- or underestimate $C_T(x)$ at individual test points but does not systematically inflate or deflate estimates across the covariate space. In the harder regime, the systematic component is larger (mean pairwise correlation of $0.46$), reflecting greater variability in nuisance estimation quality across runs. Crucially, this systematic component remains uniformly conservative: even runs with poor nuisance estimates shift all test-point estimates upward rather than producing anticonservative estimates at any test point. The bias direction established above therefore holds not only in expectation across datasets, but at the single-run level relevant to practice.

\begin{figure}[t!]
	\centering
	\includegraphics[width=\textwidth]{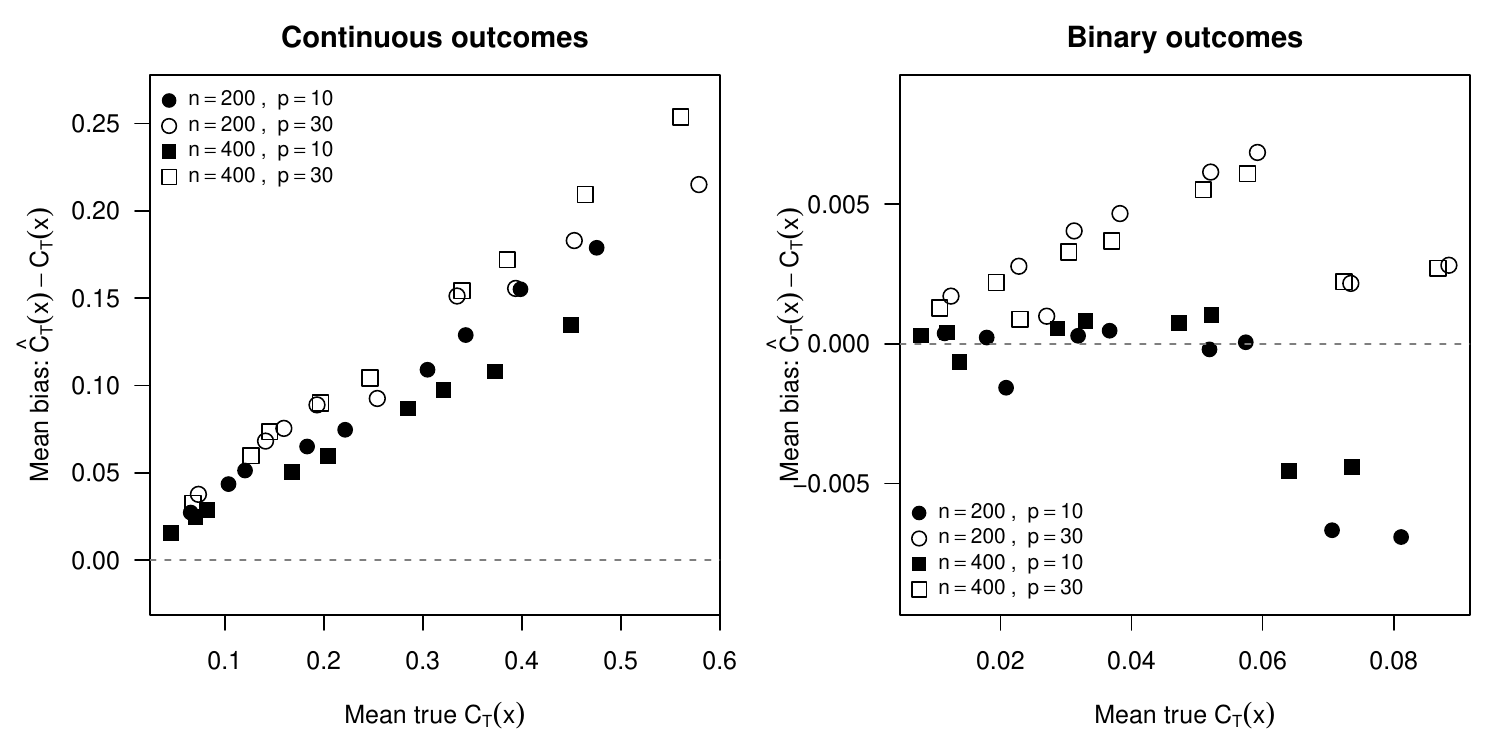}
	\caption{Mean bias of the PASR estimator as a function of the mean true covariance floor, across all 36 simulation scenarios. Left: continuous outcomes (cross-scenario $r = 0.997$); right: binary outcomes ($r = 0.991$). Each point represents one scenario; shapes encode sample size and dimensionality. For continuous outcomes, all points lie above the zero line, confirming the uniformly conservative bias direction established in Proposition~\ref{prop:conservative}. Binary outcomes exhibit near-zero bias at all floor magnitudes.}
	\label{fig:bias}
\end{figure}

\subsection{Prediction and confidence interval coverage}
\label{subsec:sim_coverage}

\begin{figure}[t!]
	\centering
	\includegraphics[width=\textwidth]{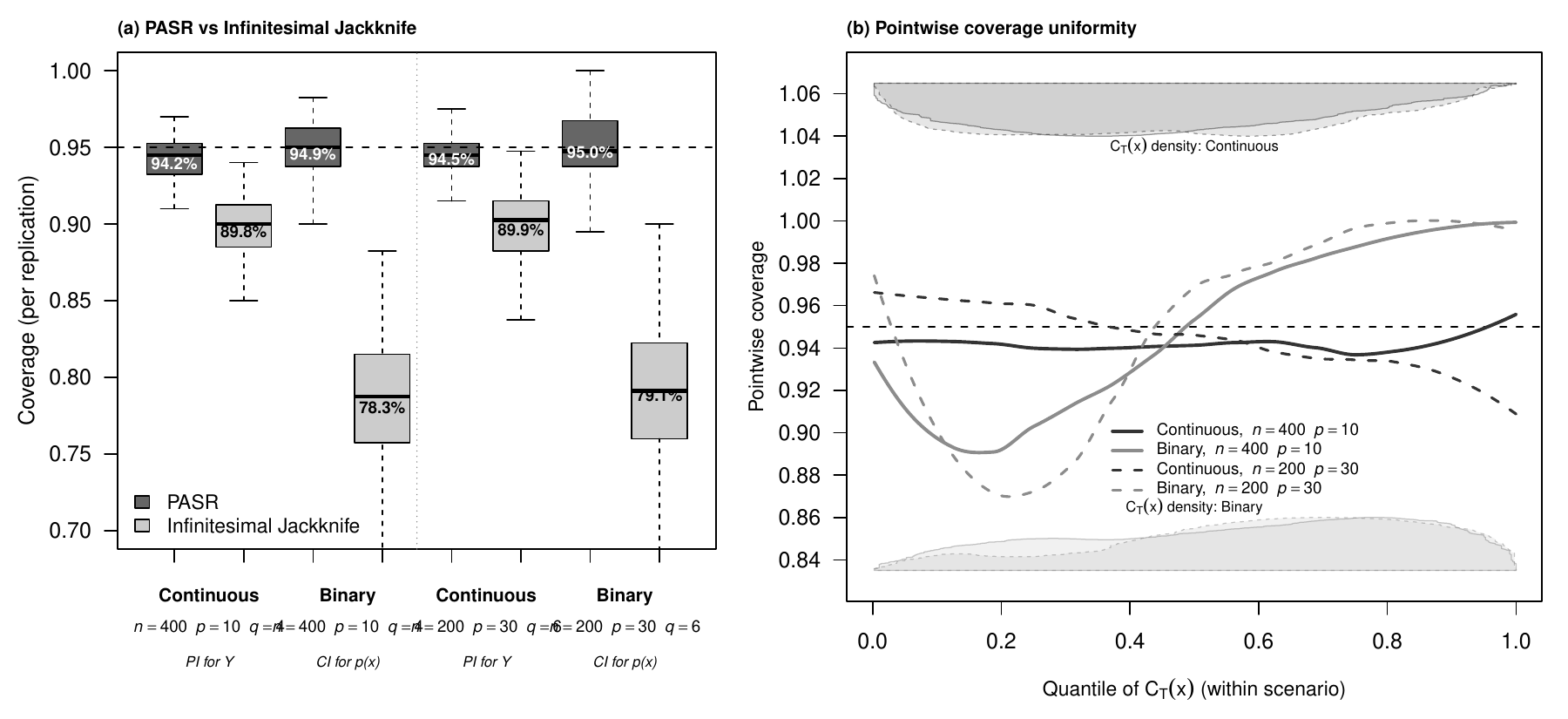}
	\caption{Coverage properties of PASR-based intervals across $R_{\mathrm{cov}} = 200$ independent replications compared with the infinitesimal jackknife (IJ). (a)~Marginal coverage for continuous prediction intervals (PI for $Y$) and binary confidence intervals (CI for $p(x)$) at two design configurations. Dark boxes: PASR; light boxes: IJ. Dashed line marks the nominal 95\% level. The IJ undercovers for continuous outcomes ($\sim$90\%) because it does not capture the covariance floor, and undercovers substantially for binary outcomes ($\sim$78--79\%) where no principled IJ estimator exists for probability forests. (b)~Pointwise PASR coverage as a function of the within-scenario quantile rank of $C_T(x)$, displayed as loess smooths. Shaded polygons show the density of true $C_T(x)$ values: continuous outcomes (top, dark) and binary outcomes (bottom, light). Solid lines: $n = 400$, $p = 10$; dashed lines: $n = 200$, $p = 30$.}
	\label{fig:coverage}
\end{figure}

To evaluate the utility of the variance decomposition, we construct prediction intervals for continuous outcomes and confidence intervals for probability estimates from binary outcomes, as described in Section~\ref{subsec:prediction_intervals}. For each of the two focal scenarios---the favorable regime ($n = 400$, $p = 10$, $q = 4$, bootstrap) and the challenging regime ($n = 200$, $p = 30$, $q = 6$, bootstrap)---we repeat the following procedure $R_{\mathrm{cov}} = 200$ times: draw training outcomes from the true data-generating mechanism at the fixed covariate configuration $X$, fit a deployed forest, estimate all variance components via PASR, and check whether the resulting interval covers the target at each of $n_{\mathrm{test}} = 400$ test points. For continuous outcomes, the coverage target is a new outcome draw $Y_{\mathrm{new}}$ at each test point, and the prediction interval is constructed as in~\eqref{eq:pred_interval}. For binary outcomes, the confidence interval~\eqref{eq:ci_binary} targets the true conditional probability $p(x)$.

As a comparator, we construct intervals using the infinitesimal jackknife (IJ) variance estimate of \citet{wager2014}, implemented via \texttt{ranger}'s built-in standard error estimation with \texttt{keep.inbag = TRUE}. We use this implementation rather than the variance estimator in the \texttt{grf} package \citep{athey2019}, which is based on the asymptotic theory of \citet{wager2018} and requires subsampling without replacement at fractions below 0.5---a structurally different forest design than the bootstrap-based forests evaluated here. The \texttt{ranger} IJ operates on the same deployed forest that generates the predictions, ensuring that the variance estimate corresponds to the actual fitted procedure. For continuous outcomes, the IJ interval adds the same cross-fitted residual product estimate $\hat\sigma^2(x)$ used in the PASR interval, giving the IJ the best possible chance: $\hat f_B(x) \pm z_{\alpha/2}\sqrt{\hat\sigma^2_{\mathrm{IJ}}(x) + \hat\sigma^2(x)}$. For binary outcomes, the IJ is not directly available for probability forests, so we fit a separate regression forest to the 0/1 outcomes with \texttt{keep.inbag = TRUE} and extract the IJ standard error from that forest---a generous interpretation of the best a practitioner could currently achieve---and construct $\hat p_B(x) \pm z_{\alpha/2}\sqrt{\hat\sigma^2_{\mathrm{IJ}}(x)}$. 

\medskip
\noindent\textit{Marginal coverage.}\quad
Figure~\ref{fig:coverage}(a) displays marginal coverage across replications. All four PASR scenario-outcome combinations achieve near-nominal coverage: 94.2\% and 94.9\% for the favorable regime (continuous and binary, respectively), and 94.5\% and 95.0\% for the challenging regime. The modest undercoverage of continuous prediction intervals relative to the 95\% target is not attributable to the variance estimator, which is conservative by Proposition~\ref{prop:conservative}, but rather to the Gaussian quantile approximation in~\eqref{eq:pred_interval}: the distribution of the prediction error $Y_{\mathrm{new}} - \hat f_B(x)$ includes a non-Gaussian component from the forest's data-adaptive partitioning, producing slightly heavier tails than the normal reference distribution. This is a standard limitation of Wald-type intervals and is not specific to the PASR framework. Binary confidence intervals achieve nominal or near-nominal coverage in both regimes.

The IJ intervals undercover relative to the PASR intervals in all four comparisons. For continuous outcomes, IJ coverage is 89.8\% and 89.9\% in the favorable and challenging regimes, respectively---approximately 5 percentage points below the PASR intervals. This gap reflects the IJ's target estimand: the IJ estimates the sampling variability of $f_\infty(x)$ under repeated data generation, which does not include the covariance floor $C_T(x)$. Because the PASR interval incorporates both the Monte Carlo variance and the covariance floor, it captures a larger share of the total prediction uncertainty and achieves coverage closer to the nominal level.

For binary outcomes, the IJ undercovers more severely, achieving 78.3\% and 79.1\% coverage. The IJ variance estimate from a regression forest on 0/1 outcomes targets the sampling variability of the forest's probability prediction under new training data, but does not account for the design-induced covariance floor---which, for probability forests, constitutes a substantial fraction of the total prediction variance. The approximately 16 percentage point gap between PASR and IJ coverage for binary outcomes underscores the practical significance of the covariance floor for classification: existing tools provide no mechanism for incorporating this component into uncertainty estimates for predicted probabilities.

\medskip
\noindent\textit{Pointwise coverage uniformity.}\quad
Figure~\ref{fig:coverage}(b) examines PASR coverage uniformity by plotting pointwise coverage as a function of the quantile rank of $C_T(x)$ within each scenario, with shaded density plots showing the distribution of true $C_T(x)$ values on the quantile axis. The continuous intervals in the favorable regime maintain approximately uniform coverage across the full range of floor magnitudes. In the challenging regime, continuous coverage dips modestly at intermediate quantiles before recovering at the upper end, consistent with the greater variability in the PASR estimate at difficult-to-estimate test points identified in Section~\ref{subsec:sim_results}.

Binary confidence intervals show undercoverage at low $C_T(x)$ quantiles where the PASR estimate is pulled upward by smoothing, transitioning to overcoverage at high $C_T(x)$ where the conservative shift dominates. The density plots at the bottom of Figure~\ref{fig:coverage}(b) show that the mass of true $C_T(x)$ values for binary outcomes concentrates in the middle and upper quantile ranges---precisely the region where coverage is at or above nominal. The undercoverage at low quantiles affects a small minority of test points with atypically low true $C_T(x)$, and has minimal impact on marginal coverage. This pattern corresponds directly to the pooling toward the grand mean documented in Section~\ref{subsec:sim_results}: low-floor test points are overestimated by the PASR smoother, producing intervals that are too narrow locally, while the central mass is estimated accurately and achieves nominal coverage. Marginal coverage remains nominal because the over- and undercoverage regions balance on average. Improved pointwise resolution in $\widehat C_T(x)$---for instance through larger $R_{\mathrm{syn}}$ or alternative aggregation strategies---would be expected to flatten these curves toward uniform coverage, and is a natural direction for future work.

\subsection{High-dimensional evaluation}
\label{subsec:sim_stress}

\begin{figure}[t!]
	\centering
	\includegraphics[width=\textwidth]{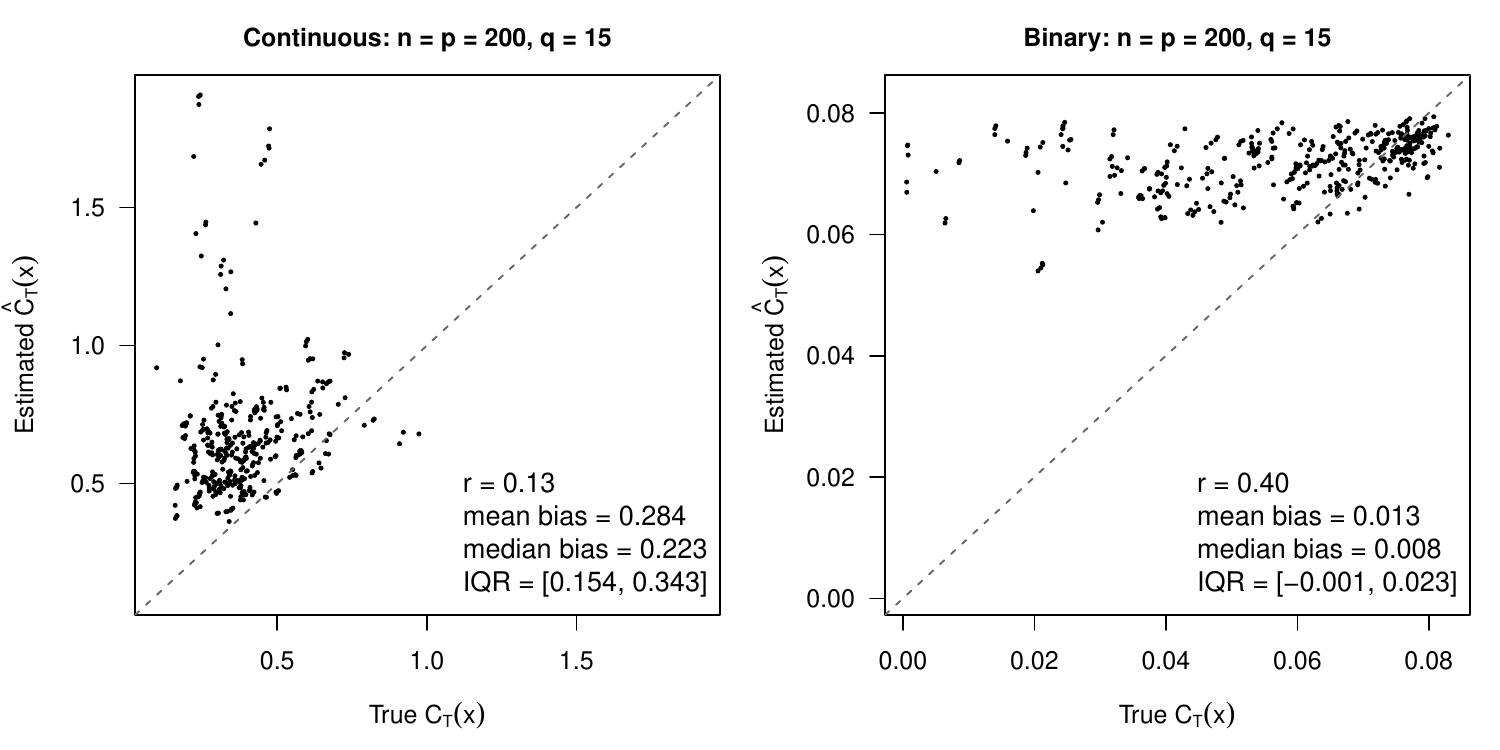}
	\caption{Pointwise relationship between $\widehat C_T(x)$ and $C_T(x)$ for the high-dimensional evaluation ($n = p = 200$, $q = 15$, bootstrap). Left: continuous outcomes. The majority of test points cluster in a tight region just above the identity line; a small number with large true $C_T(x)$ are poorly resolved, inflating the mean bias relative to the median. Right: binary outcomes. The estimator remains approximately unbiased with median bias of $0.008$ despite the extreme dimensionality.}
	\label{fig:stress}
\end{figure}

\begin{figure}[t!]
	\centering
	\includegraphics[width=\textwidth]{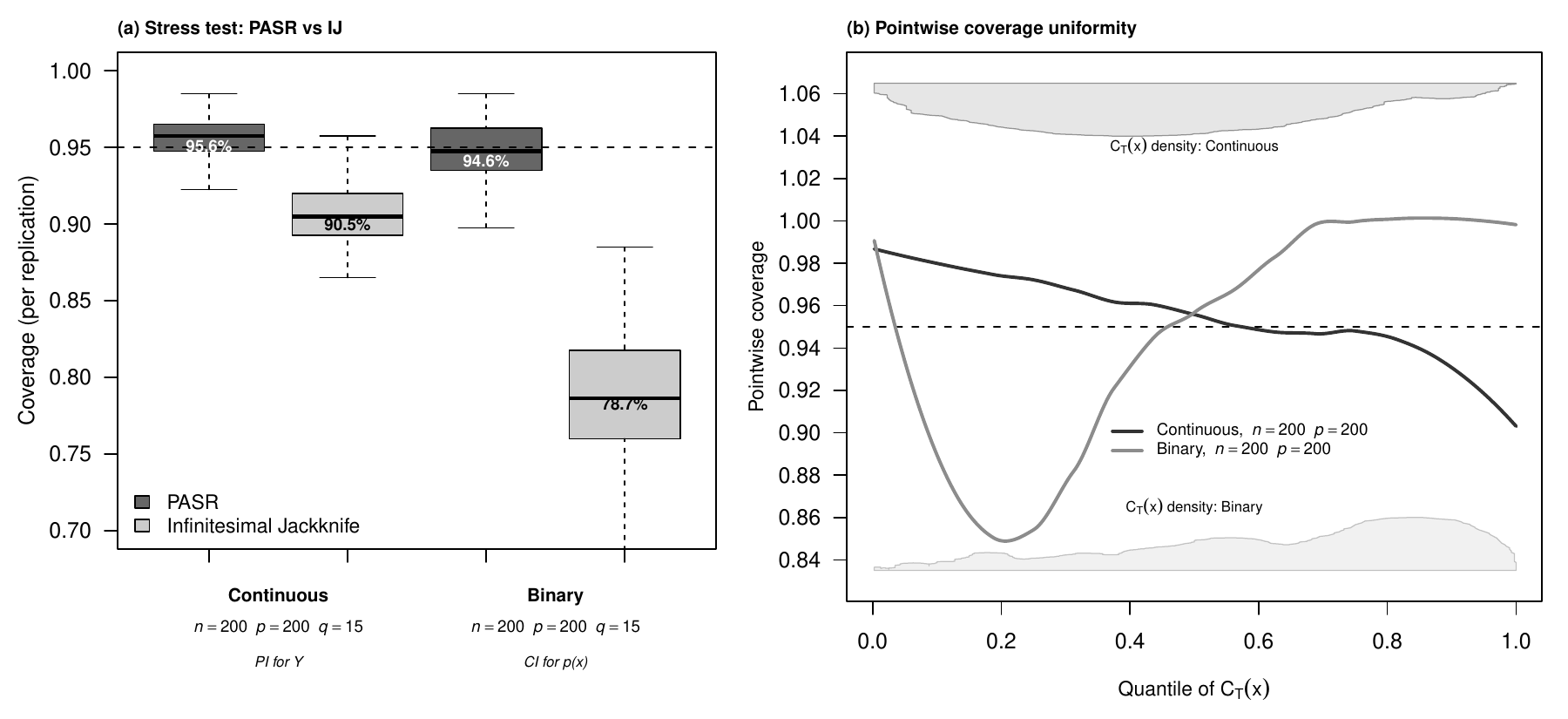}
	\caption{Coverage properties under the high-dimensional stress test ($n = p = 200$, $q = 15$). (a)~Marginal coverage for continuous prediction intervals and binary confidence intervals, comparing PASR (dark) with the infinitesimal jackknife (light). PASR achieves near-nominal coverage for both outcome types despite severely ill-conditioned nuisance estimation, while the IJ undercovers by approximately 5 percentage points for continuous outcomes and 16 percentage points for binary outcomes---consistent with the gaps observed in the main scenarios. (b)~Pointwise PASR coverage as a function of the quantile rank of $C_T(x)$, with shaded density polygons showing the distribution of true $C_T(x)$ values. The binary coverage dip at low quantiles is more pronounced than in the main scenarios, reflecting the stronger pooling documented in the estimation results above, but affects a small minority of test points where the density is sparse. Continuous coverage remains approximately uniform with slight overcoverage consistent with the larger conservative bias in this regime.}
	\label{fig:stress_coverage}
\end{figure}

To test the robustness of the PASR estimator, we evaluate a single scenario with $n = p = 200$ and $q = 15$ (bootstrap), where the majority of predictors ($>150$) are pure noise variables as defined in Appendix~\ref{app:dgm}. This represents an extreme version of the challenging regime: each cross-fitting fold sees only 100 observations with 200 features, and the nuisance estimation problem is severely underdetermined.

\medskip
\noindent\textit{Estimation accuracy.}\quad
Figure~\ref{fig:stress} displays the pointwise relationship between the estimated and true covariance floor for this scenario. For continuous outcomes, the pattern observed in the challenging regime of the main simulations is amplified. The dominant cluster of test points is estimated with a moderate conservative shift, but some test points with low true $C_T(x)$ are inflated substantially---the same nuisance-driven mechanism identified in Section~\ref{subsec:sim_results}, now more pronounced because $\beta^2(x)$ is larger when the conditional mean is estimated from 100 observations with 200 features. The median bias of $0.223$ on a median true floor of $0.357$---an overestimate of roughly 60\%---is substantially more representative of typical performance than the mean bias of $0.284$, which is inflated by these difficult-to-estimate test points (pointwise $r = 0.13$, slope $= 0.22$). For binary outcomes, the estimator performs well even in this regime: median bias of $0.008$ on a median true floor of $0.065$, with mean bias of only $0.013$.

\medskip
\noindent\textit{Interval coverage.}\quad
Figure~\ref{fig:stress_coverage} evaluates whether the conservative estimation bias translates into usable intervals under these extreme conditions. Despite the substantial overestimation of $C_T(x)$ for continuous outcomes, the PASR prediction intervals achieve near-nominal coverage. Binary confidence intervals also achieve near-nominal coverage, consistent with the near-zero estimation bias documented above.

The IJ comparison reveals the same structural gap observed in the main scenarios. For continuous outcomes, the IJ undercovers by approximately 5 percentage points; for binary outcomes, by approximately 16 percentage points. The consistency of these gaps across the main and stress test scenarios---despite vastly different dimensionality and nuisance estimation quality---confirms that the undercoverage reflects the IJ's inability to capture the covariance floor rather than any scenario-specific artifact. The covariance floor constitutes a substantial and stable fraction of the total prediction variance that existing tools do not account for.

Pointwise coverage uniformity (Figure~\ref{fig:stress_coverage}b) shows that continuous intervals maintain approximately uniform coverage with slight overcoverage, consistent with the larger conservative bias in this regime. Binary intervals exhibit the same pattern as the main results---undercoverage at low $C_T(x)$ quantiles transitioning to overcoverage at high quantiles---but more pronounced, reflecting the stronger pooling toward the grand mean. The density distribution show that this dip affects a sparse minority of test points; the central mass where coverage is at or above nominal accounts for the majority of predictions.

Even at the boundary where $n = p$ and the nuisance estimation problem is severely ill-conditioned, the PASR estimator degrades gracefully: it produces conservative estimates consistent with the true scale for continuous outcomes and approximately unbiased estimates for binary outcomes, and the resulting intervals maintain near-nominal coverage for both outcome types. The bias remains uniformly in the conservative direction, consistent with the theoretical guarantee of Proposition~\ref{prop:conservative}.

\section{Discussion}
\label{sec:discussion}

This paper reframes random forests as randomized statistical procedures acting on a fixed set of covariates, and explicitly targets the variance of $\hat f(x) \mid X$, akin to that of standard regression procedures. This structural reframing reveals that the ensemble's finite-sample variance is governed by two structurally distinct dependence mechanisms: observation reuse and partition alignment. Alignment is the more fundamental---it arises whenever trees trained on data from the same data-generating process discover similar local prediction rules, and persists even when sample splitting eliminates observation overlap entirely. Intuitively, this occurs because predictions are based on outcomes drawn from the same conditional parent population, governed by similar splits in the constructed trees. The covariance floor $C_T(x)$ that these mechanisms produce persists regardless of the number of trees in a forest and represents an intrinsic property of the deployed procedure.

The PASR estimator provides a practical framework for estimating this variance. It is unbiased for the covariance floor under the fitted model at any finite aggregation level, requires no asymptotic machinery, and degrades gracefully under difficult nuisance estimation conditions. For continuous outcomes, the cross-fitted residual product construction guarantees that any systematic error is conservative; for binary outcomes, the absence of a separate variance estimation step yields asymptotic unbiasedness at rate $O(n^{-2})$ (Proposition~\ref{prop:binary_unbiased}), confirmed empirically across all design configurations tested. The operational variance decomposition combines the PASR floor estimate with the within-forest Monte Carlo variance to produce prediction and confidence intervals that achieve near-nominal coverage for both outcome types, providing --- for the the first time --- pointwise uncertainty quantification for predicted probabilities from deployed classification forests.

The covariance floor addresses a different inferential question than existing asymptotic approaches. Methods based on U-statistic or influence-function representations \citep{mentch2016, wager2018} target the sampling variability of $f_\infty(x)$ under repeated data generation. The PASR-based intervals instead condition on the realized training data and quantify the total uncertainty of the deployed prediction, including design-induced variability that persists under infinite aggregation. These two perspectives are complementary rather than competing; a complete characterization of predictive uncertainty would require both.

The framework applies in high-dimensional settings without modification to its theoretical properties, since the estimator operates on i.i.d.\ synthetic replicates conditional on $X$. The binding constraint is nuisance estimation quality: for continuous outcomes, reliable estimation of the conditional variance function degrades when $p$ is large relative to $n$, while for binary outcomes the $O(n^{-2})$ bias rate of Proposition~\ref{prop:binary_unbiased} ensures that the estimator remains accurate even at $p > n$.. Developing nuisance strategies that remain stable in high dimensions is a natural direction for future work and represents the primary obstacle to high-dimensional application, rather than any limitation of the PASR framework itself. Nevertheless, the current implementation provides usable results in high-dimensional settings---conservative but well-calibrated---particularly for the prediction of probabilities from classification forests.

\medskip
\noindent\textbf{Extensions to other forest procedures.}
In Appendix~\ref{app:extensions} we describe how the framework extends to quantile regression forests, random survival forests, honest forests, and causal forests. Full development and empirical validation of PASR within these procedures is deferred to future work; this discussion serves to demonstrate that the theory developed here is not specific to the standard random forest but applies broadly to tree-based ensembles whose trees are generated by an exchangeable mechanism. In each case, the tree-level prediction is a deterministic functional of realized outcomes in a random averaging set, and the variance identity in Theorem~\ref{thm:finite_var} applies without modification. The covariance floor is strictly positive whenever the alignment mechanism is active, regardless of the outcome type, loss function, or within-node estimator. Honest forests provide a particularly informative case: the sample-splitting design eliminates observation overlap while leaving alignment intact, confirming that alignment is the more fundamental source of dependence. The framework does not extend directly to boosted ensembles such as gradient boosting, where trees are constructed sequentially with each tree depending on the residuals of its predecessors, violating the exchangeability assumption that underlies Theorem~\ref{thm:finite_var}. Whether analogous variance decompositions can be developed under sequential dependence is an open question for future work. 

\medskip
\noindent\textbf{Future directions.}
The framework developed here opens several avenues for further investigation. The error decomposition in~\eqref{eq:error_decomp} provides a precise target for improving finite-sample performance: advances in calibration strategies or alternative nuisance estimation would tighten the nuisance gap and improve the PASR estimates for continuous outcomes. Deriving the asymptotic distribution of $\widehat{C}_T(x)$---a central limit theorem for the PASR estimator---would enable formal inference on the floor itself. The synthetic replicates generated by the PASR procedure contain distributional information beyond the first two moments; leveraging empirical quantiles from these replicates for coverage calibration, and exploring connections with conformal prediction methods, are natural extensions of the current interval construction. The extensions in Appendix~\ref{app:extensions} establish that the framework applies to causal, honest, survival, and quantile regression forests; developing tailored PASR specifications and empirical validation for these procedures would broaden the practical reach of the methodology. 

Honest forests provide a particularly valuable setting for extending this framework beyond pointwise intervals to hypothesis testing and simultaneous inference. By eliminating observation overlap through sample splitting, honesty isolates the alignment mechanism as the sole source of $C_T(x)$, making design-conditional inference procedures more tractable. Natural extensions include testing treatment effect heterogeneity ($H_0: \tau(x_1) = \tau(x_2)$) using $\text{Cov}(\hat{\tau}(x_1), \hat{\tau}(x_2) \mid X)$, subgroup identification with explicit uncertainty bounds that distinguish confident responders from uncertain regions, and simultaneous inference calibrated to the joint distribution available from PASR replicates. These applications would leverage the design-conditional variance decomposition to enable inference tasks not currently addressable with existing asymptotic sampling-based methods.

Finally, the design-based perspective suggests that forest hyperparameters could be tuned with explicit awareness of the resolution--dependence trade-off characterized in Section~\ref{sec:design_parameters}, rather than through prediction error alone.

\section{Conclusion}
\label{sec:conclusion}

This paper develops a finite-sample, design-based theory for random forest predictors. An exact variance identity separates aggregation noise from structural dependence, and a covariance decomposition identifies observation reuse and partition alignment as the two mechanisms generating the covariance floor. The floor is estimable via procedure-aligned synthetic resampling, and the resulting estimator is unbiased under the fitted model at any finite aggregation level. Operational prediction and confidence intervals constructed from the PASR variance decomposition achieve nominal coverage for both continuous and binary outcomes. For continuous outcomes, the nuisance gap is provably conservative; for classification forests, the PASR estimator is asymptotically unbiased at rate $O(n^{-2})$, providing---for the first time---theoretically grounded pointwise uncertainty quantification for predicted probabilities from deployed classification forests. Together, these results offer a principled basis for understanding how forest design parameters govern ensemble behavior, for quantifying the total predictive uncertainty of fitted forests, and for extending these ideas to tree-based ensembles with exchangeable tree-generating mechanisms.

%% file: supplementary_material.tex
\section{Proofs and Derivations}

\subsection{Sampling variability of the infinite-forest target}
\label{app:sampling_rate}

This appendix studies the second term in the unconditional variance decomposition presented in Section~\ref{subsec:variance_decomp},
\[
\Var\!\left(
\mathbb E\!\bigl[\hat f_B(x;\mathcal D_n^\star,\theta)\mid X\bigr]
\right),
\]
which captures how the forest predictor's conditional mean varies across repeated draws of the covariate configuration from the population. For large $B$, $\hat f_B(x;\mathcal D_n^\star,\theta)$ approximates its infinite-aggregation limit $f_\infty(x;\mathcal D_n^\star)=\mathbb E_\theta[T_\theta(x;\mathcal D_n^\star)]$, and the conditional expectation $\mathbb E[\hat f_B(x;\mathcal D_n^\star,\theta)\mid X]$ converges to $\mathbb E[f_\infty(x;\mathcal D_n^\star)\mid X]$. In the infinite-aggregation limit, the second term therefore reduces to
\[
\Var\!\left(
\mathbb E\!\bigl[f_\infty(x;\mathcal D_n^\star)\mid X\bigr]
\right),
\]
which measures how the design-based target of the forest changes across repeated draws of $X$ from the population. Because $f_\infty(x;\mathcal D_n^\star)$ depends on both $X$ and $Y$, the inner expectation averages over $Y\mid X$ and the remaining variability is driven by sampling of the covariate configuration itself.

To bound this term, we establish a rate for the stronger quantity $\Var(f_\infty(x;\mathcal D_n^\star))$, which provides an upper bound since $\Var(\mathbb E[f_\infty\mid X])\le \Var(f_\infty)$ by the law of total variance.

Let $\mathcal D_n^\star=\{Z_i=(X_i,Y_i)\}_{i=1}^n$ denote an i.i.d.\ sample from an underlying population. \citet{mentch2016} and \citet{wager2018} establish that, under regularity conditions, $f_\infty(x;\mathcal D_n^\star)$ admits an asymptotically linear expansion of the form
\[
f_\infty(x;\mathcal D_n^\star)
=
f(x)
+
\frac{1}{n}\sum_{i=1}^n \psi_x(Z_i)
+
r_n(x),
\]
where $\psi_x$ is a mean-zero influence function with finite variance and $r_n(x)$ is a remainder whose variance is of smaller order than $n^{-1}$.

Taking variances yields
\[
\Var\!\bigl(f_\infty(x;\mathcal D_n^\star)\bigr)
=
\Var\!\left(\frac{1}{n}\sum_{i=1}^n \psi_x(Z_i)\right)
+
\Var\!\bigl(r_n(x)\bigr)
+
2\,\Cov\!\left(\frac{1}{n}\sum_{i=1}^n \psi_x(Z_i),\,r_n(x)\right).
\]

By independence and finite second moment,
\[
\Var\!\left(\frac{1}{n}\sum_{i=1}^n \psi_x(Z_i)\right)
=
\frac{\Var(\psi_x(Z_1))}{n}.
\]

The remainder terms satisfy $\Var(r_n(x))=o(n^{-1})$, and the cross-term is $o(n^{-1})$ by the Cauchy--Schwarz inequality applied to the covariance. Consequently,
\[
\Var\!\bigl(f_\infty(x;\mathcal D_n^\star)\bigr)
=
\frac{\Var(\psi_x(Z_1))}{n}
+
o(n^{-1}),
\]
and therefore
\[
\Var\!\left(
\mathbb E\!\bigl[f_\infty(x;\mathcal D_n^\star)\mid X\bigr]
\right)
\le
\frac{\Var(\psi_x(Z_1))}{n}
+
o(n^{-1}).
\]

The sampling variability of the infinite-forest target is thus generically of order $O(n^{-1})$, justifying the focus in the main text on the design-induced variability captured by the first term $\mathbb E[\Var(\hat f_B(x;\mathcal D_n^\star,\theta)\mid X)]$, which contains the structural covariance floor $C_T(x)$ that persists under infinite aggregation at fixed sample size.

\subsection{Proof of Theorem~\ref{thm:cov_floor}}
\label{app:proof_strict_covariance}

\begin{proof}
	Fix a prediction point \(x\) and a realized covariate configuration \(X=\{X_i\}_{i=1}^n\). All expectations and covariances are taken with respect to the joint distribution of $(Y\mid X)$ and the independent tree-generating randomizations. For this proof, we suppress $X$ throughout for notational clarity; all statements are understood to be conditional on the fixed covariate configuration consistent with the main text. 
	
	\medskip
	
	\textbf{Conditional second moments.}
	The quantities \(\sigma_i^2=\Var(Y_i\mid X)\) denote the conditional outcome variances evaluated at the realized covariate values and enter the covariance expansion as deterministic scale factors once $X$ is fixed.
	
	\medskip
	
	\textbf{Step 1: A nonnegative lower bound for \(C_T(x)\).}
	By the law-of-total-covariance decomposition in \eqref{eq:lotc},
	\[
	C_T(x)
	=
	\mathbb E\!\left[
	\Cov\!\left(
	T_\theta(x),T_{\theta'}(x)\mid \mathcal I_\theta,\mathcal I_{\theta'}
	\right)
	\right]
	+
	\Cov\!\left(
	\mathbb E[T_\theta(x)\mid \mathcal I_\theta],\,
	\mathbb E[T_{\theta'}(x)\mid \mathcal I_{\theta'}]
	\right).
	\]
	In particular,
	\[
	C_T(x)
	\;\ge\;
	\mathbb E\!\left[
	\Cov\!\left(
	T_\theta(x),T_{\theta'}(x)\mid \mathcal I_\theta,\mathcal I_{\theta'}
	\right)
	\right].
	\]
	
	\medskip
	
	\textbf{Step 2: Strict positivity of the observation-reuse contribution.}
	Under conditional independence across observational units given the fixed design points, Section~\ref{subsec:obs_overlap} shows that
	\[
	\mathbb E\!\left[
	\Cov\!\left(
	T_\theta(x),T_{\theta'}(x)\mid \mathcal I_\theta,\mathcal I_{\theta'}
	\right)
	\right]
	=
	\sum_{i=1}^n \sigma_i^2\,
	\mathbb E\!\left[
	W_i(x;\theta)\,W_i(x;\theta')
	\right],
	\]
	a sum of nonnegative terms.
	
	By assumption, there exists \(i^\star\) such that \(\sigma_{i^\star}^2>0\) and \(\mathbb P(W_{i^\star}(x;\theta)>0)>0\). Independence of \(\theta\) and \(\theta'\) implies
	\[
	\mathbb P\!\left(
	W_{i^\star}(x;\theta)>0,\;
	W_{i^\star}(x;\theta')>0
	\right)
	=
	\mathbb P\!\left(
	W_{i^\star}(x;\theta)>0
	\right)^2
	>
	0.
	\]
	On this event, \(W_{i^\star}(x;\theta)\,W_{i^\star}(x;\theta')>0\), and therefore
	\[
	\mathbb E\!\left[
	W_{i^\star}(x;\theta)\,W_{i^\star}(x;\theta')
	\right]
	>
	0.
	\]
	Hence the \(i^\star\) summand is strictly positive, and the entire sum is strictly positive.
	
	\medskip
	
	\textbf{Step 3: Conclusion.}
	Combining Step 1 and Step 2 yields \(C_T(x)>0\). The limit statement follows from Theorem~\ref{thm:finite_var}, since \(\lim_{B\to\infty}\Var(\hat f_B(x)\mid X)=C_T(x).\).
\end{proof}

\subsection{Proof of Lemma~\ref{lem:cand_overlap}}
\label{app:proof_candidate_overlap}

\begin{proof}
	Fix a realized covariate configuration $X$ and a node index $v$. All probability statements are taken with respect to the tree-generating randomization. For each tree $\theta$, let $S^{(m)}_{\theta,v}\subset\{1,\dots,p\}$ denote the candidate variable set drawn at node $v$ under candidate-set size $m$.
	
	We construct a coupling across values of $m$ as follows. For each tree $\theta$ and node $v$, draw an independent uniform random permutation $\pi_{\theta,v}$ of $\{1,\dots,p\}$, and define
	\[
	S^{(m)}_{\theta,v}
	=
	\{\pi_{\theta,v}(1),\dots,\pi_{\theta,v}(m)\}.
	\]
	This construction yields the correct marginal distribution for uniform sampling without replacement of size-$m$ candidate sets.
	
	Under this coupling, for any $m_1<m_2$,
	\[
	S^{(m_1)}_{\theta,v}\subset S^{(m_2)}_{\theta,v},
	\qquad
	S^{(m_1)}_{\theta',v}\subset S^{(m_2)}_{\theta',v}.
	\]
	Since $\pi_{\theta,v}$ and $\pi_{\theta',v}$ are independent, the intersection size
	\[
	\bigl|S^{(m)}_{\theta,v}\cap S^{(m)}_{\theta',v}\bigr|
	\]
	follows a hypergeometric distribution with expectation
	\[
	\mathbb E\!\left[
	\bigl|S^{(m)}_{\theta,v}\cap S^{(m)}_{\theta',v}\bigr|
	\right]
	=
	\frac{m^2}{p}.
	\]
	This expectation is strictly increasing in $m$ for $m\in\{1,\dots,p\}$, establishing the claimed monotonicity of candidate-set overlap.
\end{proof}

\subsection{Proof of Proposition~\ref{prop:restrict_alignment}}
\label{app:proof_restrict_overlap}
All probability statements are taken with respect to the tree-generating randomization under fixed $X$.
Fix a prediction point $x$. Consider two forest designs that differ only in the candidate-set size at each split, with $q'<q$. Let
\[
Z_\theta^{(q)}(x)=\mathbb E[T_\theta^{(q)}(x)\mid \mathcal I_\theta, X],
\qquad
Z_\theta^{(q')}(x)=\mathbb E[T_\theta^{(q')}(x)\mid \mathcal I_\theta, X]
\]
denote the corresponding resampling-conditional prediction rules.

We construct the two designs on a common probability space as follows. At each internal node, the candidate set under the $q'$-design is obtained by subsampling without replacement from the candidate set under the $q$-design. All remaining sources of randomness (observation selection, split-point evaluation, and tie-breaking) are shared across designs. This coupling is conceptual and used only for comparison of induced rule distributions; it does not correspond to an implemented algorithm.

Along the path to $x$, the $q$-design evaluates splits over a superset of coordinates relative to the $q'$-design. Consequently, whenever the $q'$-design selects a particular split at a node, the $q$-design either selects the same split or selects a different split only if an additional candidate variable available under $q$ yields a strictly better value of the splitting criterion. Under the stated nondegeneracy conditions, ties occur with probability zero or are resolved identically.

Therefore, whenever two independent trees generated under the $q'$-design induce identical conditional prediction rules at $x$, the corresponding trees under the $q$-design also induce identical conditional prediction rules. Writing
\[
A_q(x)=\{Z_\theta^{(q)}(x)=Z_{\theta'}^{(q)}(x)\},
\qquad
A_{q'}(x)=\{Z_\theta^{(q')}(x)=Z_{\theta'}^{(q')}(x)\},
\]
this implies
\[
A_{q'}(x)\subseteq A_q(x)
\quad\text{with probability one under the coupling}.
\]
Taking probabilities yields $\alpha_{q'}(x)\le \alpha_q(x)$. If restricting from $q$ to $q'$ alters the selected split with positive probability along the path to $x$, then the inclusion is strict on an event of positive probability, implying $\alpha_{q'}(x)<\alpha_q(x)$. \hfill$\square$

\subsection{Proof of Lemma~\ref{lem:ct_identity}}
\label{app:ct_identity}
\begin{proof}
	Fix a prediction point $x$ and condition throughout on the realized covariate configuration $X$. By definition,
	\[
	C_T(x)
	=
	\Cov\!\bigl(T_\theta(x),T_{\theta'}(x)\mid X\bigr),
	\]
	where $\theta$ and $\theta'$ are independent draws from the tree-generating mechanism. Apply the law of total covariance with respect to $Y$:
	\[
	\Cov\!\bigl(T_\theta(x),T_{\theta'}(x)\mid X\bigr)
	=
	\mathbb E\!\left[
	\Cov\!\bigl(T_\theta(x),T_{\theta'}(x)\mid X,Y\bigr)
	\mid X
	\right]
	+
	\Cov\!\left(
	\mathbb E[T_\theta(x)\mid X,Y],\,
	\mathbb E[T_{\theta'}(x)\mid X,Y]
	\mid X
	\right).
	\]
	
	Conditional on $(X,Y)$, the tree-generating mechanisms $\theta$ and $\theta'$ are independent, so their predictions at $x$ are independent and
	\[
	\Cov\!\bigl(T_\theta(x),T_{\theta'}(x)\mid X,Y\bigr)
	=
	0.
	\]
	
	Therefore,
	\[
	C_T(x)
	=
	\Cov\!\left(
	\mathbb E[T_\theta(x)\mid X,Y],\,
	\mathbb E[T_{\theta'}(x)\mid X,Y]
	\mid X
	\right).
	\]
	
	Since $\theta$ and $\theta'$ are identically distributed and independent given $(X,Y)$,
	\[
	\mathbb E[T_\theta(x)\mid X,Y]
	=
	\mathbb E[T_{\theta'}(x)\mid X,Y]
	=
	f_\infty(x;X,Y).
	\]
	
	Thus,
	\[
	C_T(x)
	=
	\Cov\!\bigl(f_\infty(x;X,Y),\,f_\infty(x;X,Y)\mid X\bigr)
	=
	\Var\!\left(f_\infty(x;X,Y)\mid X\right).
	\qedhere
	\]
\end{proof}

\subsection{Proof of Proposition~\ref{prop:ct_consistency}}
\label{app:ct_consistency}
\begin{proof}
	Fix a prediction point $x$ and condition throughout on the realized covariate configuration $X$. Let $C_T^{\widehat{\mathbb P}_n}(x) := \Var_{\widehat{\mathbb P}_n}(f_\infty(x;X,Y) \mid X)$ denote the covariance floor under the imposed law.
	
	\medskip
	\noindent\textbf{Step 1: Unbiasedness at any $B_{\mathrm{mc}}$.}
	
	For each synthetic replicate $r$, decompose the two forest predictions as
	\[
	\widehat f_{A}^{(r)}(x) = f_\infty(x;X,Y^{(r)}) + \varepsilon_A^{(r)}(x),
	\qquad
	\widehat f_{B}^{(r)}(x) = f_\infty(x;X,Y^{(r)}) + \varepsilon_B^{(r)}(x),
	\]
	where $\varepsilon_A^{(r)}(x)$ and $\varepsilon_B^{(r)}(x)$ are the finite-aggregation errors from forests $A$ and $B$ respectively. Conditional on $(X,Y^{(r)})$, these errors have mean zero and are independent because the two forests use independent draws from the tree-generating mechanism.
	
	The sample covariance across replicates expands as
	\begin{align*}
		\widehat C_T(x)
		&=
		\frac{1}{R-1}\sum_{r=1}^R
		\bigl(\widehat f_A^{(r)} - \overline f_A\bigr)
		\bigl(\widehat f_B^{(r)} - \overline f_B\bigr) \\
		&=
		\frac{1}{R-1}\sum_{r=1}^R
		\bigl(f_\infty^{(r)} + \varepsilon_A^{(r)} - \overline{f_\infty} 
		- \overline\varepsilon_A\bigr)
		\bigl(f_\infty^{(r)} + \varepsilon_B^{(r)} - \overline{f_\infty} 
		- \overline\varepsilon_B\bigr),
	\end{align*}
	where we write $f_\infty^{(r)}=f_\infty(x;X,Y^{(r)})$ and overbars denote sample means. Taking expectations conditional on $X$ and using the tower property through $(X,Y^{(r)})$:
	
	The cross-terms involving $\varepsilon_A^{(r)}$ and $f_\infty^{(r)}$ vanish because $\mathbb E[\varepsilon_A^{(r)}\mid X,Y^{(r)}]=0$. The cross-term $\varepsilon_A^{(r)}\varepsilon_B^{(r)}$ vanishes because $\varepsilon_A^{(r)}$ and $\varepsilon_B^{(r)}$ are conditionally independent with mean zero given $(X,Y^{(r)})$. What remains is the sample covariance of the $f_\infty^{(r)}$ terms plus the sample covariances of the $\varepsilon$ terms with themselves, which contribute zero in expectation by the same independence argument. Therefore,
	\[
	\mathbb E\!\left[\widehat C_T(x)\mid X\right]
	=
	\Var_{\widehat{\mathbb P}_n}\!\left(f_\infty(x;X,Y)\mid X\right)
	=
	C_T^{\widehat{\mathbb P}_n}(x).
	\]
	This holds for any finite $B_{\mathrm{mc}}$.
	
	\medskip
	\noindent\textbf{Step 2: Convergence as $B_{\mathrm{mc}}\to\infty$ followed 
		by $R\to\infty$.}
	
	As $B_{\mathrm{mc}}\to\infty$ for fixed $R$, Assumption~\ref{ass:mc} gives $\widehat f_A^{(r)}(x)\to f_\infty(x;X,Y^{(r)})$ and $\widehat f_B^{(r)}(x)\to f_\infty(x;X,Y^{(r)})$ in probability for each $r$. Since $R$ is fixed, by the continuous mapping theorem,
	\[
	\widehat C_T(x)
	\longrightarrow
	\frac{1}{R-1}\sum_{r=1}^R
	\left(f_\infty^{(r)}(x) - \overline f_\infty(x)\right)^2
	\quad\text{in probability}.
	\]
	As $R\to\infty$, this sample variance converges almost surely to $\Var_{\widehat{\mathbb P}_n}(f_\infty(x;X,Y)\mid X) = C_T^{\widehat{\mathbb P}_n}(x)$ by the strong law of large numbers applied to the i.i.d.\ sequence $\{f_\infty(x;X,Y^{(r)})\}_{r=1}^R$ under $\widehat{\mathbb P}_n(\cdot\mid X)$, using the finite second moment guaranteed by Assumption~\ref{ass:mc}. Combining the sequential limits yields 
	$\widehat C_T(x)\xrightarrow{P} C_T^{\widehat{\mathbb P}_n}(x)$.
\end{proof}

\subsection{Proof of Proposition~\ref{prop:conservative}}
\label{app:conservative}

\begin{proof}
Fix an observation $i$ and condition on $X$. Write $Y_i = m(X_i) + \sigma(X_i)\varepsilon_i$ where $\varepsilon_i$ has mean zero and unit variance conditional on $X_i$, and define the estimation errors $\delta_k(X_i) = m(X_i) - \hat m_k(X_i)$ for $k \in \{1,2\}$.

\medskip
\noindent\textbf{Step 1: Expand the residual product.}
\begin{align*}
	s_i 
	&= (Y_i - \hat m_1(X_i))(Y_i - \hat m_2(X_i)) \\
	&= (\sigma(X_i)\varepsilon_i + \delta_1(X_i))(\sigma(X_i)\varepsilon_i + \delta_2(X_i)) \\
	&= \sigma^2(X_i)\varepsilon_i^2 
	+ \sigma(X_i)\varepsilon_i\bigl(\delta_1(X_i) + \delta_2(X_i)\bigr) 
	+ \delta_1(X_i)\delta_2(X_i).
\end{align*}

\medskip
\noindent\textbf{Step 2: Take conditional expectations.}
Taking $\mathbb E[\cdot \mid X_i]$:
\begin{itemize}
	\item First term: $\mathbb E[\sigma^2(X_i)\varepsilon_i^2 \mid X_i] = \sigma^2(X_i)$, since $\mathbb E[\varepsilon_i^2 \mid X_i] = 1$.
	\item Second term: Each $\delta_k(X_i)$ decomposes as $\delta_k(X_i) = \mathrm{Bias}_k(X_i) + \eta_k(X_i)$ where $\mathrm{Bias}_k(X_i) = m(X_i) - \mathbb E[\hat m_k(X_i) \mid X_i]$ and $\eta_k(X_i) = \mathbb E[\hat m_k(X_i) \mid X_i] - \hat m_k(X_i)$ has mean zero. Since $\varepsilon_i$ is independent of $\hat m_1$ and $\hat m_2$ conditional on $X_i$ (the noise on observation $i$ is independent of the estimators, which for the cross-fitted construction are trained on data excluding observation $i$'s fold), we have $\mathbb E[\sigma(X_i)\varepsilon_i(\delta_1(X_i) + \delta_2(X_i)) \mid X_i] = 0$.
	\item Third term: Since $\hat m_1$ and $\hat m_2$ are trained on independent data partitions, $\eta_1(X_i)$ and $\eta_2(X_i)$ are conditionally independent given $X_i$ with mean zero. Therefore:
	\begin{align*}
		\mathbb E[\delta_1(X_i)\delta_2(X_i) \mid X_i] 
		&= \mathbb E[(\mathrm{Bias}_1(X_i) + \eta_1(X_i))(\mathrm{Bias}_2(X_i) + \eta_2(X_i)) \mid X_i] \\
		&= \mathrm{Bias}_1(X_i)\,\mathrm{Bias}_2(X_i) + \underbrace{\mathbb E[\eta_1(X_i)\eta_2(X_i) \mid X_i]}_{=\,0\text{ by independence}}.
	\end{align*}
\end{itemize}

Combining:
\[
\mathbb E[s_i \mid X_i] = \sigma^2(X_i) + \mathrm{Bias}_1(X_i)\,\mathrm{Bias}_2(X_i).
\]

\medskip
\noindent\textbf{Step 3: Apply the symmetry condition.}
By assumption, $\hat m_1$ and $\hat m_2$ are constructed by the same algorithm on training partitions of equal size, so $\mathrm{Bias}_1(x) = \mathrm{Bias}_2(x) := \beta(x)$ for all $x$. Therefore:
\[
\mathbb E[s_i \mid X_i] = \sigma^2(X_i) + \beta^2(X_i) \geq \sigma^2(X_i),
\]
with equality if and only if $\beta(X_i) = 0$.

\medskip
\noindent\textbf{Step 4: Propagation to the imposed law.}
The scale forest estimates $\mathbb E[s_i \mid X_i] = \sigma^2(X_i) + \beta^2(X_i)$, so the imposed law $\widehat{\mathbb P}_n(\cdot \mid X)$ generates synthetic outcomes with conditional variance $\hat\sigma^2(X_i) \geq \sigma^2(X_i)$ in expectation. Under the same conditional mean (both the imposed and true laws center at $\hat m(X_i)$ and $m(X_i)$ respectively; the mean discrepancy affects $C_T$ through a separate channel), the inflated variance implies that the imposed law has weakly greater conditional dispersion than the true law at every design point.

\medskip
\noindent\textbf{Step 5: Apply monotonicity.}
Under Assumption~\ref{ass:monotone}, $C_T^{\widehat{\mathbb P}_n}(x) \geq C_T(x)$ follows directly from the weakly greater conditional variance of the imposed law. Equality holds when $\beta(x) = 0$ for all $x$, i.e., when the mean estimators are conditionally unbiased.
\end{proof}

\subsection{Proof of Proposition~\ref{prop:binary_unbiased}}
\label{app:binary_unbiased}

\begin{proof}
	Fix a prediction point $x$ and condition throughout on the realized covariate configuration $X$.
	
	\medskip
	\noindent\textbf{Part (i).}
	This follows directly from Proposition~\ref{prop:ct_consistency}, which establishes unbiasedness of $\widehat{C}_T(x)$ for $C_T^{\widehat{\mathbb{P}}_n}(x)$ at any finite $B_{\mathrm{mc}}$ regardless of the outcome type.
	
	\medskip
	\noindent\textbf{Part (ii).}
	The tree-level prediction is linear in $Y$: $T_\theta(x) = \sum_{i=1}^n W_i(x;\theta) Y_i$, where $W_i(x;\theta)$ is the weight assigned to observation $i$ by tree $\theta$ at prediction point $x$. Conditional on $X$, the infinite-forest prediction is
	\[
	f_\infty(x; X, Y) = \mathbb{E}_\theta[T_\theta(x) \mid X, Y] = \sum_{i=1}^n \bar{W}_i(x) Y_i,
	\]
	where $\bar{W}_i(x) = \mathbb{E}_\theta[W_i(x;\theta) \mid X]$. Under the true law, $Y_i \mid X \sim \mathrm{Bernoulli}(p(X_i))$ independently, so
	\[
	C_T(x) = \Var(f_\infty(x; X, Y) \mid X) = \sum_{i=1}^n \bar{W}_i(x)^2 \, p(X_i)(1 - p(X_i)).
	\]
	Under the fitted model, $Y_i^{(r)} \sim \mathrm{Bernoulli}(\hat{p}(X_i))$ independently, so
	\[
	C_T^{\widehat{\mathbb{P}}_n}(x) = \sum_{i=1}^n \bar{W}_i(x)^2 \, \hat{p}(X_i)(1 - \hat{p}(X_i)).
	\]
	
	We now take $\mathbb{E}_Y[\cdot \mid X]$, averaging over the training outcomes that determine $\hat{p}(X_i)$. For each tree $\theta$ and training point $X_i$, the within-node proportion assigns $\hat{p}_\theta(X_i) = \sum_{j} W_j(X_i;\theta) Y_j$, where the sum runs over observations sharing the terminal node of $X_i$. Conditional on the partition induced by $\theta$ and on $X$, this is a sum of independent Bernoulli random variables, so $\mathbb{E}[\hat{p}_\theta(X_i) \mid X, \theta] = \sum_j W_j(X_i;\theta) p(X_j)$. By exchangeability of trees, the forest prediction satisfies
	\[
	\mathbb{E}_Y[\hat{p}(X_i) \mid X] = p_\infty(X_i; X),
	\]
	where $p_\infty(X_i; X) = \mathbb{E}_\theta[\sum_j W_j(X_i;\theta) p(X_j) \mid X]$ is the infinite-forest target at the training point. Under standard regularity conditions on terminal node sizes ensuring that $p(\cdot)$ is approximately constant within each node, $p_\infty(X_i; X) = p(X_i) + o(1)$.
	
	Applying the identity $\mathbb{E}[Z(1-Z)] = \mathbb{E}[Z] - \mathbb{E}[Z^2] = \mathbb{E}[Z] - \Var(Z) - (\mathbb{E}[Z])^2$ with $Z = \hat{p}(X_i)$:
	\begin{align*}
		\mathbb{E}_Y[\hat{p}(X_i)(1 - \hat{p}(X_i)) \mid X]
		&= \mathbb{E}_Y[\hat{p}(X_i) \mid X] - \Var_Y(\hat{p}(X_i) \mid X) - \left(\mathbb{E}_Y[\hat{p}(X_i) \mid X]\right)^2 \\
		&= p_\infty(X_i; X)\bigl(1 - p_\infty(X_i; X)\bigr) - \Var_Y(\hat{p}(X_i) \mid X) \\
		&= p(X_i)(1 - p(X_i)) - \Var_Y(\hat{p}(X_i) \mid X) + o(\Var_Y(\hat{p}(X_i) \mid X)),
	\end{align*}
	where the last equality uses $p_\infty(X_i; X) = p(X_i) + o(1)$ and the smoothness of $p \mapsto p(1-p)$. Substituting into the sum:
	\begin{align*}
		\mathbb{E}_Y\!\left[C_T^{\widehat{\mathbb{P}}_n}(x) \mid X\right]
		&= \sum_{i=1}^n \bar{W}_i(x)^2 \, \mathbb{E}_Y[\hat{p}(X_i)(1 - \hat{p}(X_i)) \mid X] \\
		&= \sum_{i=1}^n \bar{W}_i(x)^2 \, p(X_i)(1-p(X_i)) - \sum_{i=1}^n \bar{W}_i(x)^2 \, \Var_Y(\hat{p}(X_i) \mid X) + o\!\left(\sum_{i=1}^n \bar{W}_i(x)^2 \, \Var_Y(\hat{p}(X_i) \mid X)\right) \\
		&= C_T(x) - \sum_{i=1}^n \bar{W}_i(x)^2 \, \Var_Y(\hat{p}(X_i) \mid X) + o\!\left(\sum_{i=1}^n \bar{W}_i(x)^2 \, \Var_Y(\hat{p}(X_i) \mid X)\right),
	\end{align*}
	which gives part~(ii) after absorbing the lower-order term.
	
	\medskip
	\noindent\textbf{Part (iii).}
	Under standard conditions on terminal node sizes, the forest weights satisfy $\bar{W}_i(x) = O(n^{-1})$ for each $i$, so that $\sum_{i=1}^n \bar{W}_i(x)^2 = O(n^{-1})$. The forest prediction at each training point is an average over $B$ tree-level within-node proportions, each computed from $O(n)$ observations in expectation. By the law of total variance applied across trees and within-node sampling, $\Var_Y(\hat{p}(X_i) \mid X) = O(n^{-1})$ uniformly in $i$. Each summand in the bias term is therefore $O(n^{-2})$, and summing over $n$ terms gives
	\[
	\sum_{i=1}^n \bar{W}_i(x)^2 \, \Var_Y(\hat{p}(X_i) \mid X) = O(n^{-1}) \cdot O(n^{-1}) = O(n^{-2}).
	\]
	Consequently,
	\[
	\mathbb{E}_Y\!\left[C_T^{\widehat{\mathbb{P}}_n}(x) \mid X\right] = C_T(x) + O(n^{-2}). \qedhere
	\]
\end{proof}

\section{Data-generatiion and Simulations}
\label{app:dgm}

The simulation study uses a single data-generating mechanism with two predictor dimensionalities ($p = 10$ and $p = 30$). The $p = 10$ specification defines the core predictor and outcome functions; the $p = 30$ specification adds predictors $x_{13}$--$x_{27}$ and additional terms in the outcome models. For the high-dimensional evaluation ($p = n  = 200$), predictors $x_{28}$--$x_{200}$ are appended as pure noise carrying no signal. All continuous predictors are standardized to zero mean and unit variance after generation, denoted $\mathrm{std}(\cdot)$.

\subsection*{Predictor distribution}

Table~\ref{tab:predictors} specifies the marginal distribution of each predictor. The first 12 predictors form the core and are generated independently. For $p \geq 30$, predictors $x_{13}$, $x_{14}$, and $x_{15}$ are correlated through a shared latent variable $Z \sim \mathcal{N}(0,1)$ with mixing weight $\rho = 0.35$. Each is constructed as
\[
x_j = \mathrm{std}\!\left(g_j\!\left(\sqrt{\rho}\, Z + \sqrt{1 - \rho}\, \epsilon_j\right)\right),
\]
where $\epsilon_j$ is an independent noise term and $g_j$ is a link function, both specified per predictor in Table~\ref{tab:predictors}. Predictors $x_{16}$--$x_{27}$ are generated independently of $Z$. For $p > 27$, predictors $x_{28}, \ldots, x_p$ use the same latent factor construction with $\epsilon_k \sim \mathcal{N}(0,1)$ and $g_k$ equal to the identity, except that every fourth is squared before standardization. These additional predictors enter none of the outcome models.

\begin{table}[ht!]
	\centering
	\small
	\caption{Predictor distributions. All continuous predictors are standardized to zero mean and unit variance. Predictors $x_{13}$--$x_{27}$ are included when $p \geq 30$; predictors $x_{28}$--$x_p$ are included when $p > 27$.}
	\label{tab:predictors}
	\begin{tabular}{cll}
		\hline
		Predictor & Type & Distribution \\
		\hline
		\multicolumn{3}{l}{\textit{Core predictors (all scenarios)}} \\[2pt]
		$x_1$    & Continuous   & $\mathcal{N}(0,\, 1.44)$ \\
		$x_2$    & Continuous   & $1.5 \cdot t_5$ \\
		$x_3$    & Continuous   & $\mathcal{N}(0,\, 1)$ \\
		$x_4$    & Continuous   & $\mathcal{N}(0,\, 0.0625)$ \\
		$x_5$    & Continuous   & $\mathrm{Gamma}(2,\, 0.5)$ \\
		$x_6$    & Binary       & $\mathrm{Bernoulli}(0.4)$ \\
		$x_7$    & Binary       & $\mathrm{Bernoulli}(0.5)$ \\
		$x_8$    & Categorical  & $\{0,1,2\}$;\; $P = (0.6,\, 0.3,\, 0.1)$ \\
		$x_9$    & Continuous   & $\mathcal{N}(0,\, 9)$ \\
		$x_{10}$ & Binary       & $\mathrm{Bernoulli}(0.5)$ \\
		$x_{11}$ & Continuous   & $\mathrm{LogNormal}(0,\, 0.6)$ \\
		$x_{12}$ & Continuous   & $\mathrm{Beta}(2,\, 5)$ \\[4pt]
		\multicolumn{3}{l}{\textit{Correlated continuous predictors ($p \geq 30$; $\rho = 0.35$)}} \\[2pt]
		$x_{13}$ & Continuous   & $\epsilon \sim \mathcal{N}(0,1)$;\; $g = $ identity \\
		$x_{14}$ & Continuous   & $\epsilon \sim t_7$;\; $g = $ identity \\
		$x_{15}$ & Continuous   & $\epsilon \sim \mathcal{N}(0,1)$;\; $g = \mathrm{asinh}$ \\[4pt]
		\multicolumn{3}{l}{\textit{Independent continuous predictors ($p \geq 30$)}} \\[2pt]
		$x_{16}$ & Continuous   & $\mathrm{Gamma}(2.2,\, 0.7)$ \\
		$x_{17}$ & Continuous   & $\mathrm{LogNormal}(0,\, 0.5)$ \\
		$x_{18}$ & Continuous   & $\mathrm{Uniform}(0,\, 1)$ \\[4pt]
		\multicolumn{3}{l}{\textit{Additional discrete predictors ($p \geq 30$)}} \\[2pt]
		$x_{19}$ & Binary       & $\mathrm{Bernoulli}(0.35)$ \\
		$x_{20}$ & Binary       & $\mathrm{Bernoulli}(0.45)$ \\
		$x_{21}$ & Binary       & $\mathrm{Bernoulli}(0.25)$ \\
		$x_{22}$ & Binary       & $\mathrm{Bernoulli}(0.55)$ \\
		$x_{23}$ & Binary       & $\mathrm{Bernoulli}(0.40)$ \\
		$x_{24}$ & Categorical  & $\{0,1,2\}$;\; $P = (0.50,\, 0.35,\, 0.15)$ \\
		$x_{25}$ & Categorical  & $\{0,1,2,3\}$;\; $P = (0.55,\, 0.25,\, 0.15,\, 0.05)$ \\
		$x_{26}$ & Categorical  & $\{0,1,2\}$;\; $P = (0.65,\, 0.25,\, 0.10)$ \\
		$x_{27}$ & Binary       & $\mathrm{Bernoulli}(0.06)$ \\[4pt]
		\multicolumn{3}{l}{\textit{Noise predictors ($p > 27$; no signal)}} \\[2pt]
		$x_{28}, \ldots, x_p$ & Continuous & Same latent factor; $\epsilon_k \sim \mathcal{N}(0,1)$; every 4th squared \\
		\hline
	\end{tabular}
\end{table}

\subsection*{Continuous outcomes}

The conditional mean function for $p = 10$ is
\begin{align*}
	\mu(X) &= 0.90\sin(1.1\,x_1) + 0.35\,x_2 + 0.55\,\tilde{x}_3^2 + 0.18\,x_4 + 0.30\cdot\mathbf{1}(x_5 > 0.4) + 0.22\,x_6 \\
	&\quad + 0.18\cdot\mathbf{1}(x_8{=}1) + 0.28\cdot\mathbf{1}(x_8{=}2) + 0.45\sin(x_{11}) + 0.25\cdot\mathbf{1}(x_{12} > 1.0) \\
	&\quad + 0.18\,(x_1 x_2) + 0.12\bigl(\sin(x_{11})\cdot\mathbf{1}(x_{12} > 1.0)\bigr),
\end{align*}
where $\tilde{x}_3^2 = x_3^2 - \overline{x_3^2}$ denotes the centered squared term. For $p = 30$, the following terms with some degree of association with the outcome are added:
\begin{align*}
	\mu_{30}(X) &= \mu(X) + 0.18\,x_{13} + 0.12\sin(x_{15}) + 0.10\cdot\mathbf{1}(x_{16} > 0) + 0.10\,x_{19} + 0.08\,x_{20} \\
	&\quad + 0.10\cdot\mathbf{1}(x_{24}{=}2) + 0.10\cdot\mathbf{1}(x_{25}{=}1) \\
	&\quad + 0.10\,(x_{13}\,x_{19}) + 0.08\bigl(x_{20}\cdot\mathbf{1}(x_{12} > 1.0)\bigr).
\end{align*}

Continuous outcomes are generated as $Y_i = \mu(X_i) + \sigma(X_i)\,\varepsilon_i$ with $\varepsilon_i \overset{\mathrm{iid}}{\sim} \mathcal{N}(0,1)$, where the error standard deviation $\sigma(X)$ varies with a subset of the covariates to induce heteroscedasticity:
\[
\sigma(X) = \max\!\Big(0.65 + 0.25\,|x_1| + 0.15\,|x_2| + 0.15\cdot\mathbf{1}(x_5 > 0.4) + 0.12\cdot\mathbf{1}(x_{12} > 1.0) + \delta_{30}(X),\;\; 0.15\Big),
\]
where $\delta_{30}(X) = 0.08\,x_{19} + 0.08\cdot\mathbf{1}(x_{24}{=}2) + 0.08\,x_{27}$ for $p = 30$ and $\delta_{30} \equiv 0$ for $p = 10$. The lower bound of $0.15$ prevents the noise variance from collapsing near zero. All covariates $x_p$ not specifically defined in the conditional mean function above were included as pure noise with no signal. 

\subsection*{Binary outcomes}
Binary outcomes use a logistic link with a linear predictor that involves the same covariates as the continuous specification but with distinct coefficients and additional terms. The $p = 10$ specification is
\begin{align*}
	\eta_0(X) &= 0.55\,x_1 + 0.35\,x_2 + 0.45\,\tilde{x}_3^2 + 0.20\,x_4 + 0.35\,x_5 + 0.25\,x_6 + 0.15\,x_7 \\
	&\quad + 0.18\cdot\mathbf{1}(x_8{=}1) + 0.28\cdot\mathbf{1}(x_8{=}2) + 0.10\,x_9 + 0.12\,x_{10} \\
	&\quad + 0.35\sin(x_{11}) + 0.22\cdot\mathbf{1}(x_{12} > 1.1) + 0.12\,\tilde{x}_2^3 \\
	&\quad + 0.18\,(x_1 x_2) + 0.18\,(x_1 x_6) + 0.15\bigl(x_3\cdot\mathbf{1}(x_8{=}2)\bigr) \\
	&\quad + 0.12\bigl(\sin(x_{11})\cdot\mathbf{1}(x_{12} > 1.1)\bigr),
\end{align*}
where $\tilde{x}_2^3 = x_2^3 - 3x_2$ is the centered cubic. For $p = 30$, the following terms are added:
\begin{align*}
	\eta_{0,30}(X) &= \eta_0(X) + 0.18\,x_{13} + 0.12\,x_{14} + 0.10\sin(x_{15}) + 0.14\cdot\mathbf{1}(x_{16} > 0) \\
	&\quad + 0.10\,x_{17} + 0.08\cdot\mathbf{1}(x_{18} > 0.5) \\
	&\quad + 0.12\,x_{19} + 0.10\,x_{20} - 0.08\,x_{21} + 0.10\,x_{22} + 0.08\,x_{23} \\
	&\quad + 0.10\cdot\mathbf{1}(x_{24}{=}2) + 0.10\cdot\mathbf{1}(x_{25}{=}3) + 0.08\cdot\mathbf{1}(x_{26}{=}1) + 0.10\,x_{27} \\
	&\quad + 0.12\,(x_{13}\,x_{19}) + 0.10\bigl(x_{14}\cdot\mathbf{1}(x_{24}{=}1)\bigr) + 0.10\bigl(x_{20}\cdot\mathbf{1}(x_{12} > 1.1)\bigr).
\end{align*}

The intercept $a_0$ is calibrated at the realized covariate matrix so that the marginal prevalence equals $40\%$ by solving
\[
\frac{1}{n}\sum_{i=1}^n \mathrm{logit}^{-1}\!\bigl(a_0 + \eta_0(X_i)\bigr) = 0.40.
\]
Binary outcomes are then drawn as $Y_i \sim \mathrm{Bernoulli}\!\bigl(\mathrm{logit}^{-1}(a_0 + \eta_0(X_i))\bigr)$.

\subsection*{Design rationale}

The data-generating mechanism was designed to present a realistic challenge for the model, with varying types of predictors and non-linear relationships with the outcome, while allowing exact computation of true conditional quantities. Key features include: (i)~mixed predictor types spanning continuous, binary, and multinomial variables with varied marginal shapes; (ii)~nonlinear main effects (sine, threshold, polynomial) and two-way interactions involving both continuous and categorical predictors; (iii)~heteroscedastic errors for continuous outcomes, with the variance depending on a subset of the mean-function predictors; (iv)~a binary outcome model that shares covariate involvement with the continuous specification but uses distinct coefficients and additional predictors ($x_7$, $x_9$, $x_{10}$, $x_{14}$, $x_{17}$, $x_{21}$--$x_{23}$); and (v)~correlated predictors for $p \geq 30$ via the shared latent variable $Z$.

\subsection*{Test point construction}

Test points are constructed as an anchored jittered cloud: $n_{\mathrm{test}} = 400$ training observations are sampled with replacement, and Gaussian noise with $\sigma_{\mathrm{jitter}} = 0.02$ is added to each continuous predictor. Discrete predictors are left unchanged. This ensures test points lie within the support of the training distribution while avoiding exact duplication of training rows.

\newpage 

\section{Simulation Results}

\subsection{Full simulation results: Bias}
\label{app:tables}

Tables~\ref{tab:continuous} and~\ref{tab:binary} report the estimation diagnostics for all 36 standard scenarios and the high-dimensional evaluation ($n = p = 200$). For each scenario, the pointwise regression $\widehat C_T(x) = \hat\alpha + \hat\beta\, C_T(x)$ is fit across $n_{\mathrm{test}} = 400$ test points, where $\widehat C_T(x)$ is the mean PASR estimate across $S = 50$ independent datasets. Mean and median summaries are computed across test points and datasets. The IQR column reports the interquartile range of the pointwise bias $(\widehat C_T(x) - C_T(x))$ as $[Q_1,\, Q_3]$. Sampling schemes are denoted Boot (bootstrap), 50\% (subsampling without replacement at rate 0.50), and 80\% (subsampling at rate 0.80).

\begin{table}[htbp!]
	\centering
	\footnotesize
	\caption{Estimation diagnostics: continuous outcomes. Median bias is substantially smaller than mean bias across all scenarios, indicating that conservative overestimation is concentrated at test points with large $C_T(x)$. The estimator is uniformly conservative ($\overline{\widehat C_T} > \overline{C_T}$) across all scenarios, consistent with Proposition~\ref{prop:conservative}.}
	\label{tab:continuous}
	\begin{tabular}{rlrrrrrrrrrl}
		\hline
		$q$ & Samp.\ & Slope & Int.\ & $r$ & $\overline{C_T}$ & $\overline{\widehat C_T}$ & Mean bias & Med.\ $C_T$ & Med.\ $\widehat C_T$ & Med.\ bias & IQR \\
		\hline
		\multicolumn{12}{l}{\textit{$n = 200$, $p = 10$}} \\[2pt]
		1  & Boot & 1.31 & 0.011    & 0.78 & 0.104 & 0.147 & 0.044 & 0.084 & 0.109 & 0.026 & [0.006, 0.052] \\
		1  & 50\% & 1.38 & 0.002    & 0.79 & 0.066 & 0.093 & 0.027 & 0.054 & 0.069 & 0.013 & [0.003, 0.035] \\
		1  & 80\% & 1.37 & 0.007    & 0.79 & 0.121 & 0.172 & 0.051 & 0.096 & 0.121 & 0.021 & [0.004, 0.054] \\
		4  & Boot & 1.11 & 0.076    & 0.72 & 0.305 & 0.414 & 0.109 & 0.261 & 0.339 & 0.067 & [0.016, 0.146] \\
		4  & 50\% & 1.23 & 0.024    & 0.73 & 0.183 & 0.248 & 0.065 & 0.159 & 0.200 & 0.037 & [0.007, 0.090] \\
		4  & 80\% & 1.16 & 0.092    & 0.66 & 0.399 & 0.554 & 0.155 & 0.337 & 0.433 & 0.082 & [0.014, 0.195] \\
		10 & Boot & 1.16 & 0.073    & 0.72 & 0.343 & 0.472 & 0.129 & 0.290 & 0.374 & 0.084 & [0.023, 0.158] \\
		10 & 50\% & 1.19 & 0.032    & 0.72 & 0.222 & 0.296 & 0.075 & 0.192 & 0.237 & 0.041 & [0.009, 0.096] \\
		10 & 80\% & 1.16 & 0.105    & 0.68 & 0.476 & 0.655 & 0.179 & 0.402 & 0.501 & 0.088 & [0.014, 0.212] \\[4pt]
		\multicolumn{12}{l}{\textit{$n = 400$, $p = 10$}} \\[2pt]
		1  & Boot & 1.49 & $-$0.010 & 0.87 & 0.071 & 0.095 & 0.025 & 0.062 & 0.074 & 0.011 & [0.002, 0.034] \\
		1  & 50\% & 1.57 & $-$0.010 & 0.89 & 0.046 & 0.061 & 0.016 & 0.041 & 0.050 & 0.007 & [0.001, 0.022] \\
		1  & 80\% & 1.63 & $-$0.023 & 0.88 & 0.082 & 0.111 & 0.029 & 0.070 & 0.081 & 0.009 & [$-$0.000, 0.039] \\
		4  & Boot & 1.24 & 0.019    & 0.80 & 0.285 & 0.372 & 0.087 & 0.258 & 0.320 & 0.052 & [0.012, 0.127] \\
		4  & 50\% & 1.26 & 0.008    & 0.80 & 0.168 & 0.219 & 0.051 & 0.150 & 0.190 & 0.031 & [0.006, 0.077] \\
		4  & 80\% & 1.26 & 0.012    & 0.79 & 0.373 & 0.481 & 0.108 & 0.331 & 0.408 & 0.052 & [0.008, 0.172] \\
		10 & Boot & 1.29 & 0.003    & 0.81 & 0.321 & 0.418 & 0.097 & 0.297 & 0.355 & 0.060 & [0.011, 0.140] \\
		10 & 50\% & 1.32 & $-$0.007 & 0.83 & 0.204 & 0.264 & 0.060 & 0.186 & 0.228 & 0.035 & [0.005, 0.090] \\
		10 & 80\% & 1.32 & $-$0.011 & 0.80 & 0.450 & 0.585 & 0.135 & 0.414 & 0.503 & 0.070 & [0.003, 0.200] \\[4pt]
		\multicolumn{12}{l}{\textit{$n = 200$, $p = 30$}} \\[2pt]
		1  & Boot & 0.50 & 0.138    & 0.40 & 0.141 & 0.209 & 0.068 & 0.127 & 0.184 & 0.054 & [0.013, 0.107] \\
		1  & 50\% & 0.62 & 0.065    & 0.44 & 0.073 & 0.111 & 0.038 & 0.068 & 0.095 & 0.030 & [0.010, 0.056] \\
		1  & 80\% & 0.67 & 0.129    & 0.47 & 0.160 & 0.235 & 0.076 & 0.145 & 0.199 & 0.058 & [0.012, 0.115] \\
		6  & Boot & 0.34 & 0.372    & 0.30 & 0.335 & 0.486 & 0.151 & 0.316 & 0.434 & 0.131 & [0.029, 0.248] \\
		6  & 50\% & 0.43 & 0.200    & 0.34 & 0.193 & 0.282 & 0.089 & 0.181 & 0.257 & 0.075 & [0.019, 0.138] \\
		6  & 80\% & 0.40 & 0.453    & 0.33 & 0.453 & 0.636 & 0.183 & 0.423 & 0.584 & 0.146 & [0.015, 0.300] \\
		30 & Boot & 0.19 & 0.475    & 0.24 & 0.394 & 0.549 & 0.156 & 0.355 & 0.493 & 0.156 & [0.027, 0.270] \\
		30 & 50\% & 0.19 & 0.299    & 0.26 & 0.254 & 0.347 & 0.093 & 0.231 & 0.307 & 0.088 & [0.010, 0.165] \\
		30 & 80\% & 0.14 & 0.714    & 0.20 & 0.579 & 0.794 & 0.215 & 0.519 & 0.692 & 0.198 & [0.034, 0.382] \\[4pt]
		\multicolumn{12}{l}{\textit{$n = 400$, $p = 30$}} \\[2pt]
		1  & Boot & 1.33 & 0.018    & 0.73 & 0.126 & 0.186 & 0.060 & 0.115 & 0.158 & 0.038 & [0.004, 0.096] \\
		1  & 50\% & 1.42 & 0.004    & 0.75 & 0.068 & 0.100 & 0.032 & 0.062 & 0.083 & 0.022 & [0.004, 0.051] \\
		1  & 80\% & 1.53 & $-$0.004 & 0.75 & 0.145 & 0.219 & 0.074 & 0.132 & 0.185 & 0.045 & [0.006, 0.110] \\
		6  & Boot & 1.08 & 0.129    & 0.64 & 0.340 & 0.494 & 0.154 & 0.315 & 0.442 & 0.119 & [0.021, 0.248] \\
		6  & 50\% & 1.12 & 0.066    & 0.63 & 0.197 & 0.287 & 0.090 & 0.184 & 0.256 & 0.068 & [0.013, 0.150] \\
		6  & 80\% & 1.19 & 0.124    & 0.65 & 0.464 & 0.674 & 0.209 & 0.434 & 0.598 & 0.150 & [0.032, 0.351] \\
		30 & Boot & 1.07 & 0.144    & 0.66 & 0.385 & 0.557 & 0.172 & 0.354 & 0.506 & 0.132 & [0.035, 0.269] \\
		30 & 50\% & 1.09 & 0.081    & 0.66 & 0.247 & 0.351 & 0.104 & 0.227 & 0.312 & 0.080 & [0.016, 0.174] \\
		30 & 80\% & 1.10 & 0.197    & 0.63 & 0.561 & 0.814 & 0.254 & 0.516 & 0.733 & 0.189 & [0.039, 0.398] \\[4pt]
		\multicolumn{12}{l}{\textit{$n = p = 200$ (high-dimensional)}} \\[2pt]
		15 & Boot & 0.22 & 0.583    & 0.13 & 0.384 & 0.668 & 0.284 & 0.357 & 0.619 & 0.223 & [0.154, 0.343] \\
		\hline
	\end{tabular}
\end{table}

\newpage

\begin{table}[htbp!]
	\centering
	\footnotesize
	\caption{Estimation diagnostics: binary outcomes. Near-zero mean and median bias across all scenarios confirms the absence of the nuisance gap for probability forests.}
	\label{tab:binary}
	\begin{tabular}{rlrrrrrrrrrl}
		\hline
		$q$ & Samp.\ & Slope & Int.\ & $r$ & $\overline{C_T}$ & $\overline{\widehat C_T}$ & Mean bias & Med.\ $C_T$ & Med.\ $\widehat C_T$ & Med.\ bias & IQR \\
		\hline
		\multicolumn{12}{l}{\textit{$n = 200$, $p = 10$}} \\[2pt]
		1  & Boot & 0.58 & 0.008 & 0.74 & 0.018 & 0.018 & $<$0.001  & 0.018 & 0.018 & $-$0.001 & [$-$0.002, 0.001] \\
		1  & 50\% & 0.54 & 0.006 & 0.72 & 0.012 & 0.012 & $<$0.001  & 0.012 & 0.012 & $<$0.001 & [$-$0.001, 0.001] \\
		1  & 80\% & 0.58 & 0.007 & 0.77 & 0.021 & 0.019 & $-$0.002  & 0.021 & 0.019 & $-$0.003 & [$-$0.004, $-$0.001] \\
		4  & Boot & 0.48 & 0.027 & 0.72 & 0.052 & 0.052 & $<$0.001  & 0.055 & 0.054 & $-$0.003 & [$-$0.007, 0.002] \\
		4  & 50\% & 0.49 & 0.016 & 0.75 & 0.032 & 0.032 & $<$0.001  & 0.034 & 0.034 & $-$0.001 & [$-$0.004, 0.002] \\
		4  & 80\% & 0.48 & 0.030 & 0.75 & 0.071 & 0.064 & $-$0.007  & 0.075 & 0.067 & $-$0.010 & [$-$0.017, $-$0.002] \\
		10 & Boot & 0.50 & 0.029 & 0.73 & 0.057 & 0.058 & $<$0.001  & 0.062 & 0.060 & $-$0.003 & [$-$0.008, 0.002] \\
		10 & 50\% & 0.51 & 0.019 & 0.76 & 0.037 & 0.037 & 0.001     & 0.039 & 0.039 & $-$0.001 & [$-$0.004, 0.003] \\
		10 & 80\% & 0.50 & 0.034 & 0.76 & 0.081 & 0.074 & $-$0.007  & 0.087 & 0.078 & $-$0.011 & [$-$0.018, $-$0.002] \\[4pt]
		\multicolumn{12}{l}{\textit{$n = 400$, $p = 10$}} \\[2pt]
		1  & Boot & 0.76 & 0.003 & 0.85 & 0.012 & 0.012 & $<$0.001  & 0.012 & 0.013 & $<$0.001 & [$-$0.001, 0.001] \\
		1  & 50\% & 0.73 & 0.003 & 0.84 & 0.008 & 0.008 & $<$0.001  & 0.008 & 0.008 & $<$0.001 & [$-$0.001, 0.001] \\
		1  & 80\% & 0.72 & 0.003 & 0.87 & 0.014 & 0.013 & $-$0.001  & 0.013 & 0.013 & $-$0.001 & [$-$0.002, $<$0.001] \\
		4  & Boot & 0.58 & 0.021 & 0.76 & 0.047 & 0.048 & 0.001     & 0.050 & 0.051 & $-$0.002 & [$-$0.006, 0.004] \\
		4  & 50\% & 0.60 & 0.012 & 0.79 & 0.029 & 0.029 & 0.001     & 0.031 & 0.032 & $-$0.001 & [$-$0.003, 0.002] \\
		4  & 80\% & 0.57 & 0.023 & 0.79 & 0.064 & 0.059 & $-$0.005  & 0.068 & 0.063 & $-$0.008 & [$-$0.014, $<$0.001] \\
		10 & Boot & 0.59 & 0.023 & 0.76 & 0.052 & 0.053 & 0.001     & 0.055 & 0.057 & $-$0.002 & [$-$0.006, 0.005] \\
		10 & 50\% & 0.61 & 0.014 & 0.80 & 0.033 & 0.034 & 0.001     & 0.035 & 0.037 & $-$0.001 & [$-$0.004, 0.003] \\
		10 & 80\% & 0.57 & 0.027 & 0.79 & 0.074 & 0.069 & $-$0.004  & 0.078 & 0.074 & $-$0.008 & [$-$0.014, 0.001] \\[4pt]
		\multicolumn{12}{l}{\textit{$n = 200$, $p = 30$}} \\[2pt]
		1  & Boot & 0.35 & 0.018 & 0.56 & 0.023 & 0.026 & 0.003     & 0.025 & 0.026 & 0.001 & [$-$0.002, 0.005] \\
		1  & 50\% & 0.33 & 0.010 & 0.57 & 0.013 & 0.014 & 0.002     & 0.014 & 0.015 & 0.001 & [$-$0.001, 0.003] \\
		1  & 80\% & 0.38 & 0.018 & 0.59 & 0.027 & 0.028 & 0.001     & 0.030 & 0.029 & $-$0.001 & [$-$0.004, 0.004] \\
		6  & Boot & 0.31 & 0.042 & 0.58 & 0.052 & 0.058 & 0.006     & 0.058 & 0.061 & 0.003 & [$-$0.004, 0.011] \\
		6  & 50\% & 0.32 & 0.025 & 0.63 & 0.031 & 0.035 & 0.004     & 0.035 & 0.037 & 0.002 & [$-$0.002, 0.008] \\
		6  & 80\% & 0.35 & 0.050 & 0.62 & 0.073 & 0.076 & 0.002     & 0.082 & 0.079 & $-$0.002 & [$-$0.011, 0.010] \\
		30 & Boot & 0.33 & 0.047 & 0.60 & 0.059 & 0.066 & 0.007     & 0.066 & 0.069 & 0.003 & [$-$0.004, 0.013] \\
		30 & 50\% & 0.34 & 0.030 & 0.66 & 0.038 & 0.043 & 0.005     & 0.042 & 0.045 & 0.003 & [$-$0.002, 0.010] \\
		30 & 80\% & 0.36 & 0.059 & 0.64 & 0.088 & 0.091 & 0.003     & 0.099 & 0.096 & $-$0.002 & [$-$0.013, 0.013] \\[4pt]
		\multicolumn{12}{l}{\textit{$n = 400$, $p = 30$}} \\[2pt]
		1  & Boot & 0.36 & 0.015 & 0.50 & 0.019 & 0.022 & 0.002     & 0.020 & 0.022 & $<$0.001 & [$-$0.002, 0.004] \\
		1  & 50\% & 0.35 & 0.008 & 0.50 & 0.011 & 0.012 & 0.001     & 0.011 & 0.012 & $<$0.001 & [$-$0.001, 0.002] \\
		1  & 80\% & 0.39 & 0.015 & 0.53 & 0.023 & 0.024 & 0.001     & 0.024 & 0.024 & $-$0.001 & [$-$0.003, 0.003] \\
		6  & Boot & 0.29 & 0.042 & 0.50 & 0.051 & 0.057 & 0.006     & 0.054 & 0.059 & 0.001 & [$-$0.004, 0.011] \\
		6  & 50\% & 0.31 & 0.024 & 0.56 & 0.030 & 0.034 & 0.003     & 0.032 & 0.035 & 0.001 & [$-$0.002, 0.006] \\
		6  & 80\% & 0.32 & 0.052 & 0.54 & 0.072 & 0.075 & 0.002     & 0.077 & 0.078 & $-$0.003 & [$-$0.011, 0.010] \\
		30 & Boot & 0.32 & 0.046 & 0.53 & 0.058 & 0.064 & 0.006     & 0.061 & 0.067 & 0.001 & [$-$0.005, 0.012] \\
		30 & 50\% & 0.33 & 0.028 & 0.59 & 0.037 & 0.041 & 0.004     & 0.038 & 0.043 & 0.001 & [$-$0.003, 0.008] \\
		30 & 80\% & 0.33 & 0.061 & 0.56 & 0.087 & 0.089 & 0.003     & 0.091 & 0.095 & $-$0.004 & [$-$0.014, 0.013] \\[4pt]
		\multicolumn{12}{l}{\textit{$n = p = 200$ (high-dimensional)}} \\[2pt]
		15 & Boot & 0.09 & 0.067 & 0.40 & 0.059 & 0.072 & 0.013     & 0.065 & 0.073 & 0.008 & [$-$0.001, 0.023] \\
		\hline
	\end{tabular}
\end{table}

\newpage

\subsection{Simulation results: Single-run stability}
\label{app:stability}

Tables~\ref{tab:variability_cont} and~\ref{tab:variability_bin} report the variability of the PASR estimator across $S = 50$ independent datasets for each scenario. In practice, a user applies the estimator once to a single dataset. These tables characterize how much a single-run estimate $\widehat C_T(x)$ would vary if the user had drawn a different training sample $Y$ from the same population.

For binary outcomes, single-run standard deviations are an order of magnitude smaller than the estimated floor across all scenarios (e.g., median SD of $0.007$ against median $\widehat C_T$ of $0.059$ for $n = 200$, $p = 30$, $q = 6$, bootstrap), indicating that individual runs are highly precise. For continuous outcomes, single-run variability is larger in absolute terms but remains moderate relative to the estimated floor, with median SD roughly 35--55\% of the median $\widehat C_T$ in the favorable regime ($n = 400$, $p = 10$) and 40--55\% in the harder regime ($n = 400$, $p = 30$).

Figure~\ref{fig:stability} examines the structure of this variability for continuous outcomes by plotting the pointwise bias $\widehat C_T(x) - C_T(x)$ for each of the $S = 50$ runs as a function of the true floor, alongside the distribution of pairwise correlations of bias across test points. Two sources of single-run error can be distinguished: a \textit{systematic} component, in which the nuisance estimate $\hat\sigma^2(x)$ for a given run shifts all test-point estimates in the same direction, and a \textit{pointwise} component, in which the synthetic resampling introduces noise that is approximately independent across test points.

In the favorable regime ($n = 400$, $p = 10$, $q = 4$), the mean pairwise correlation across test points is $0.26$, indicating that the pointwise component dominates: runs that overestimate at one covariate configuration do not strongly predict overestimation at others. In the harder regime ($n = 200$, $p = 30$, $q = 6$), the mean pairwise correlation rises to $0.46$, reflecting a larger systematic contribution from nuisance estimation noise. With only 100 observations per cross-fitting fold and 30 predictors, the cross-fitted residual product $\hat\sigma^2(x)$ has substantial variance that propagates uniformly across the covariate space. Crucially, this systematic component is uniformly conservative---runs shift upward rather than crossing zero---so the bias direction established in the main text holds at the single-run level, not only in expectation across datasets.

These results have two practical implications. First, in settings where $n/p$ is moderate or large, the PASR estimator's single-run variability is primarily pointwise, meaning that local estimates are noisy but the overall level is stable. Second, in harder settings where nuisance estimation dominates the variability, the conservative direction is preserved, so the estimator does not produce overconfident (anticonservative) variance estimates even in unfavorable single runs. The near-nominal coverage reported in Section~\ref{subsec:sim_coverage} confirms that this single-run stability translates into reliable interval performance in practice.

\begin{figure}[htbp!]
	\centering
	\includegraphics[width=\textwidth]{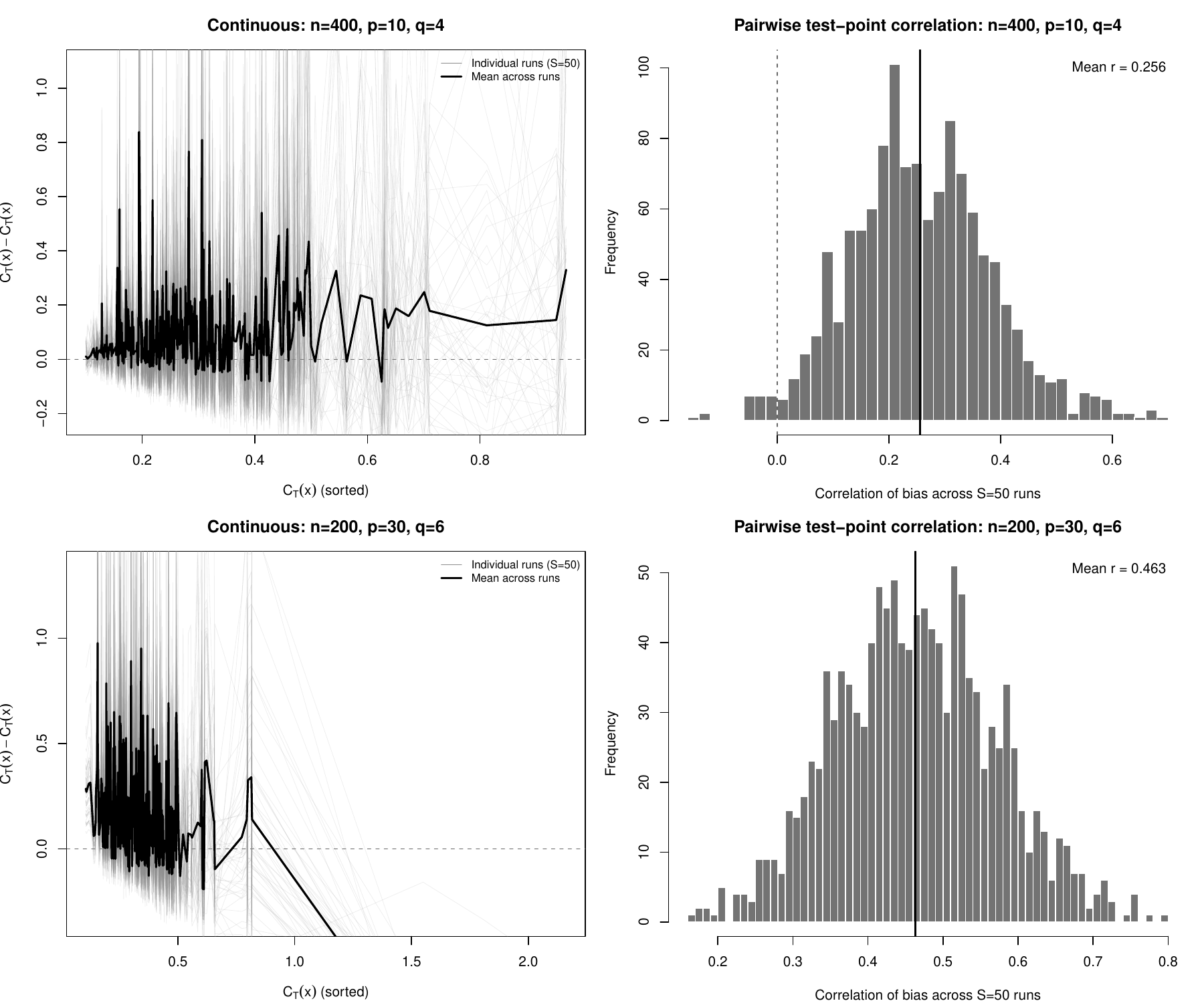}
	\caption{Single-run bias structure for continuous outcomes. Left column: pointwise bias $\widehat C_T(x) - C_T(x)$ for each of $S = 50$ independent datasets (gray lines) and the mean across datasets (black line), plotted against the true floor sorted in ascending order. Right column: distribution of pairwise correlations of pointwise bias across $n_{\mathrm{test}} = 400$ test points, computed across the $S = 50$ runs. Top row: $n = 400$, $p = 10$, $q = 4$ (bootstrap); bottom row: $n = 200$, $p = 30$, $q = 6$ (bootstrap). In the favorable regime, the mean pairwise correlation is $0.26$, indicating that most variability is pointwise. In the harder regime, the mean rises to $0.46$, reflecting a larger systematic contribution from nuisance estimation.}
	\label{fig:stability}
\end{figure}

\newpage
\begin{table}[htbp!]
	\centering
	\footnotesize
	\caption{Single-run variability of the empirical estimator: continuous outcomes. For each test point, the standard deviation and interquartile range of $\widehat C_T(x)$ are computed across $S = 50$ independent datasets; mean and median summaries are then taken across test points. Scale reference columns ($\overline{\widehat C_T}$ and Med.\ $\widehat C_T$) are reproduced from Table~\ref{tab:continuous}.}
	\label{tab:variability_cont}
	\begin{tabular}{rlrrrrrr}
		\hline
		$q$ & Samp.\ & $\overline{\widehat C_T}$ & Med.\ $\widehat C_T$ & Mean SD & Med.\ SD & Mean IQR & Med.\ IQR \\
		\hline
		\multicolumn{8}{l}{\textit{$n = 200$, $p = 10$}} \\[2pt]
		1  & Boot & 0.147 & 0.109 & 0.078 & 0.049 & 0.081 & 0.059 \\
		1  & 50\% & 0.093 & 0.069 & 0.041 & 0.027 & 0.047 & 0.032 \\
		1  & 80\% & 0.172 & 0.121 & 0.114 & 0.067 & 0.107 & 0.066 \\
		4  & Boot & 0.414 & 0.339 & 0.205 & 0.149 & 0.219 & 0.170 \\
		4  & 50\% & 0.248 & 0.200 & 0.103 & 0.071 & 0.117 & 0.084 \\
		4  & 80\% & 0.554 & 0.433 & 0.321 & 0.224 & 0.341 & 0.247 \\
		10 & Boot & 0.472 & 0.374 & 0.228 & 0.165 & 0.254 & 0.194 \\
		10 & 50\% & 0.296 & 0.237 & 0.127 & 0.086 & 0.141 & 0.102 \\
		10 & 80\% & 0.655 & 0.501 & 0.397 & 0.271 & 0.417 & 0.307 \\[4pt]
		\multicolumn{8}{l}{\textit{$n = 400$, $p = 10$}} \\[2pt]
		1  & Boot & 0.095 & 0.074 & 0.047 & 0.031 & 0.049 & 0.037 \\
		1  & 50\% & 0.061 & 0.050 & 0.026 & 0.017 & 0.028 & 0.020 \\
		1  & 80\% & 0.111 & 0.081 & 0.068 & 0.041 & 0.067 & 0.044 \\
		4  & Boot & 0.372 & 0.320 & 0.168 & 0.127 & 0.185 & 0.147 \\
		4  & 50\% & 0.219 & 0.190 & 0.087 & 0.067 & 0.098 & 0.076 \\
		4  & 80\% & 0.481 & 0.408 & 0.262 & 0.204 & 0.274 & 0.217 \\
		10 & Boot & 0.418 & 0.355 & 0.186 & 0.140 & 0.205 & 0.161 \\
		10 & 50\% & 0.264 & 0.228 & 0.103 & 0.077 & 0.116 & 0.088 \\
		10 & 80\% & 0.585 & 0.503 & 0.315 & 0.226 & 0.326 & 0.265 \\[4pt]
		\multicolumn{8}{l}{\textit{$n = 200$, $p = 30$}} \\[2pt]
		1  & Boot & 0.209 & 0.184 & 0.100 & 0.078 & 0.107 & 0.090 \\
		1  & 50\% & 0.111 & 0.095 & 0.046 & 0.037 & 0.049 & 0.040 \\
		1  & 80\% & 0.235 & 0.199 & 0.137 & 0.113 & 0.139 & 0.114 \\
		6  & Boot & 0.486 & 0.434 & 0.214 & 0.175 & 0.226 & 0.191 \\
		6  & 50\% & 0.282 & 0.257 & 0.113 & 0.093 & 0.128 & 0.109 \\
		6  & 80\% & 0.636 & 0.584 & 0.353 & 0.284 & 0.364 & 0.328 \\
		30 & Boot & 0.549 & 0.493 & 0.250 & 0.208 & 0.259 & 0.222 \\
		30 & 50\% & 0.347 & 0.307 & 0.139 & 0.113 & 0.153 & 0.129 \\
		30 & 80\% & 0.794 & 0.692 & 0.472 & 0.361 & 0.429 & 0.366 \\[4pt]
		\multicolumn{8}{l}{\textit{$n = 400$, $p = 30$}} \\[2pt]
		1  & Boot & 0.186 & 0.158 & 0.085 & 0.060 & 0.089 & 0.066 \\
		1  & 50\% & 0.100 & 0.083 & 0.041 & 0.029 & 0.044 & 0.034 \\
		1  & 80\% & 0.219 & 0.185 & 0.125 & 0.088 & 0.127 & 0.093 \\
		6  & Boot & 0.494 & 0.442 & 0.207 & 0.154 & 0.219 & 0.174 \\
		6  & 50\% & 0.287 & 0.256 & 0.110 & 0.086 & 0.123 & 0.103 \\
		6  & 80\% & 0.674 & 0.598 & 0.350 & 0.277 & 0.354 & 0.287 \\
		30 & Boot & 0.557 & 0.506 & 0.243 & 0.182 & 0.250 & 0.205 \\
		30 & 50\% & 0.351 & 0.312 & 0.134 & 0.102 & 0.148 & 0.124 \\
		30 & 80\% & 0.814 & 0.733 & 0.447 & 0.342 & 0.418 & 0.359 \\[4pt]
		\multicolumn{8}{l}{\textit{$n = p = 200$ (high-dimensional)}} \\[2pt]
		15 & Boot & 0.668 & 0.619 & 0.305 & 0.246 & 0.307 & 0.267 \\
		\hline
	\end{tabular}
\end{table}

\newpage

\begin{table}[htbp!]
	\centering
	\footnotesize
	\caption{Single-run variability of the empirical estimator: binary outcomes. Layout follows Table~\ref{tab:variability_cont}. Single-run standard deviations are an order of magnitude smaller than the estimated floor, indicating that individual runs are highly precise.}
	\label{tab:variability_bin}
	\begin{tabular}{rlrrrrrr}
		\hline
		$q$ & Samp.\ & $\overline{\widehat C_T}$ & Med.\ $\widehat C_T$ & Mean SD & Med.\ SD & Mean IQR & Med.\ IQR \\
		\hline
		\multicolumn{8}{l}{\textit{$n = 200$, $p = 10$}} \\[2pt]
		1  & Boot & 0.018 & 0.018 & 0.004 & 0.004 & 0.006 & 0.005 \\
		1  & 50\% & 0.012 & 0.012 & 0.003 & 0.003 & 0.003 & 0.003 \\
		1  & 80\% & 0.019 & 0.019 & 0.006 & 0.006 & 0.009 & 0.009 \\
		4  & Boot & 0.052 & 0.054 & 0.013 & 0.013 & 0.017 & 0.016 \\
		4  & 50\% & 0.032 & 0.034 & 0.007 & 0.007 & 0.010 & 0.009 \\
		4  & 80\% & 0.064 & 0.067 & 0.022 & 0.023 & 0.031 & 0.030 \\
		10 & Boot & 0.058 & 0.060 & 0.014 & 0.014 & 0.018 & 0.018 \\
		10 & 50\% & 0.037 & 0.039 & 0.009 & 0.009 & 0.011 & 0.010 \\
		10 & 80\% & 0.074 & 0.078 & 0.026 & 0.026 & 0.036 & 0.036 \\[4pt]
		\multicolumn{8}{l}{\textit{$n = 400$, $p = 10$}} \\[2pt]
		1  & Boot & 0.012 & 0.013 & 0.003 & 0.002 & 0.003 & 0.003 \\
		1  & 50\% & 0.008 & 0.008 & 0.002 & 0.002 & 0.002 & 0.002 \\
		1  & 80\% & 0.013 & 0.013 & 0.004 & 0.004 & 0.005 & 0.005 \\
		4  & Boot & 0.048 & 0.051 & 0.011 & 0.011 & 0.015 & 0.015 \\
		4  & 50\% & 0.029 & 0.032 & 0.006 & 0.007 & 0.008 & 0.008 \\
		4  & 80\% & 0.059 & 0.063 & 0.020 & 0.020 & 0.028 & 0.028 \\
		10 & Boot & 0.053 & 0.057 & 0.012 & 0.012 & 0.016 & 0.016 \\
		10 & 50\% & 0.034 & 0.037 & 0.007 & 0.008 & 0.010 & 0.010 \\
		10 & 80\% & 0.069 & 0.074 & 0.023 & 0.023 & 0.032 & 0.031 \\[4pt]
		\multicolumn{8}{l}{\textit{$n = 200$, $p = 30$}} \\[2pt]
		1  & Boot & 0.026 & 0.026 & 0.004 & 0.004 & 0.005 & 0.005 \\
		1  & 50\% & 0.014 & 0.015 & 0.002 & 0.002 & 0.003 & 0.002 \\
		1  & 80\% & 0.028 & 0.029 & 0.007 & 0.007 & 0.009 & 0.009 \\
		6  & Boot & 0.058 & 0.061 & 0.010 & 0.010 & 0.012 & 0.010 \\
		6  & 50\% & 0.035 & 0.037 & 0.006 & 0.005 & 0.007 & 0.006 \\
		6  & 80\% & 0.076 & 0.079 & 0.020 & 0.020 & 0.026 & 0.025 \\
		30 & Boot & 0.066 & 0.069 & 0.011 & 0.011 & 0.014 & 0.012 \\
		30 & 50\% & 0.043 & 0.045 & 0.007 & 0.007 & 0.009 & 0.008 \\
		30 & 80\% & 0.091 & 0.096 & 0.024 & 0.024 & 0.031 & 0.030 \\[4pt]
		\multicolumn{8}{l}{\textit{$n = 400$, $p = 30$}} \\[2pt]
		1  & Boot & 0.022 & 0.022 & 0.004 & 0.003 & 0.004 & 0.004 \\
		1  & 50\% & 0.012 & 0.012 & 0.002 & 0.002 & 0.002 & 0.002 \\
		1  & 80\% & 0.024 & 0.024 & 0.006 & 0.006 & 0.007 & 0.007 \\
		6  & Boot & 0.057 & 0.059 & 0.009 & 0.009 & 0.012 & 0.010 \\
		6  & 50\% & 0.034 & 0.035 & 0.005 & 0.005 & 0.007 & 0.006 \\
		6  & 80\% & 0.075 & 0.078 & 0.019 & 0.018 & 0.024 & 0.023 \\
		30 & Boot & 0.064 & 0.067 & 0.011 & 0.011 & 0.013 & 0.012 \\
		30 & 50\% & 0.041 & 0.043 & 0.007 & 0.007 & 0.008 & 0.008 \\
		30 & 80\% & 0.089 & 0.095 & 0.022 & 0.022 & 0.029 & 0.028 \\[4pt]
		\multicolumn{8}{l}{\textit{$n = p = 200$ (high-dimensional)}} \\[2pt]
		15 & Boot & 0.072 & 0.073 & 0.007 & 0.007 & 0.009 & 0.009 \\
		\hline
	\end{tabular}
\end{table}

\newpage 

\section{Extensions to Other Forest Procedures}
\label{app:extensions}

The variance identity in Theorem~\ref{thm:finite_var} and the covariance decomposition in Section~\ref{sec:covariance_decomposition} are derived from two structural properties: exchangeability of the tree-generating mechanism across $\theta_1,\ldots,\theta_B$, and the representation of each tree prediction as a function of the realized outcomes determined by a randomized averaging rule. These properties are shared by a broad class of tree-based ensemble methods beyond the standard regression forest analyzed in the preceding sections. We briefly describe how the framework applies to four prominent extensions: quantile regression forests, survival forests, honest forests, and causal forests. While this is far from an all inclusive list, showing the framework represented in these common forest procedures serve to demonstrate the generalizability of the outlined design-based framework for tree-based ensembles. 

\subsection{Quantile regression forests}
\label{subsec:qrf}

Quantile regression forests \citep{meinshausen2006} retain the standard forest partitioning mechanism but replace conditional mean estimation with estimation of the full conditional distribution of $Y \mid X = x$. The ensemble induces weights $W_i(x) = \frac{1}{B}\sum_{b=1}^B W_i(x;\theta_b)$, which define a weighted empirical distribution
\[
\hat F(y \mid x)
=
\sum_{i=1}^n W_i(x)\,\mathbf{1}\{Y_i \le y\}.
\]
The conditional $\tau$-quantile is obtained as the generalized inverse
\[
\hat Q_\tau(x) = \inf\{ y : \hat F(y \mid x) \ge \tau \}.
\]

At the tree level, the contribution of a single tree $\theta$ can be written as a functional of the local empirical distribution induced by the averaging set $A_\theta(x)$. Conditional on the realized partition, this functional is deterministic in the outcomes $\{Y_i\}_{i \in A_\theta(x)}$. The ensemble prediction is therefore an average of exchangeable tree-level random functionals, exactly as in the regression setting.

The variance identity in Theorem~\ref{thm:finite_var} does not rely on linearity of the tree-level prediction in $Y_i$, but only on exchangeability of the tree-generating mechanism. It therefore applies directly to $\hat Q_\tau(x)$. The covariance floor
\[
C_T(x) = \Cov\!\bigl(Q_{\tau,\theta}(x), Q_{\tau,\theta'}(x) \mid X\bigr)
\]
is governed by the same two mechanisms identified in Section~\ref{sec:covariance_decomposition}. Strict positivity of $C_T(x)$ follows under the same structural condition as in Theorem~\ref{thm:cov_floor}: if there exists an observation whose conditional outcome distribution has positive dispersion and which receives positive weight at $x$ with positive probability, then independent trees that align locally will induce nondegenerate variability in the quantile functional. Quantile regression forests thus satisfy the same structural assumptions as standard regression forests, with only the target functional of the weighted local data modified.

\subsection{Survival forests}
\label{subsec:survival}

Random survival forests \citep{ishwaran2008} target time-to-event outcomes $(T_i, \delta_i)$, where $T_i$ is the observed time and $\delta_i$ is a censoring indicator, with trees grown using survival-specific splitting criteria such as the log-rank statistic. Within each terminal node, the prediction at $x$ is a summary of the local survival distribution -- typically the Nelson-Aalen cumulative hazard estimator or the Kaplan-Meier survival function evaluated at a fixed time horizon.

The key structural property is that, conditional on the tree structure, the prediction at $x$ is a deterministic function of the outcomes $\{(T_i, \delta_i)\}_{i \in A_\theta(x)}$ in the averaging set. Although the within-node estimator involves the risk-set ordering and is not a simple unweighted average of responses, the tree-level prediction remains a fixed functional of the local data once the partition is realized. Theorem~\ref{thm:finite_var} requires only exchangeability of the tree-generating mechanism, which is preserved regardless of the loss function or within-node estimator. The covariance floor $C_T(x) = \Cov(T_\theta(x), T_{\theta'}(x) \mid X)$ is well-defined and strictly positive under the same conditions: whenever an observation with positive conditional outcome variance receives nonzero weight at $x$ with positive probability.

Empirical estimation via synthetic resampling as detailed in this paper requires specifying an imposed law for $(T_i, \delta_i \mid X)$ rather than $(Y_i \mid X)$. This could be accomplished through a conditional survival function estimated from the observed data, with synthetic event times and censoring indicators drawn from the fitted conditional distributions at the realized design points. The covariance form of the estimator~\eqref{eq:ct_hat} applies without modification once the synthetic outcome replicates are generated under the appropriate survival model. Construction and evaluation of this estimator is deferred to future work. 

\subsection{Honest forests}
\label{subsec:honest}

Honest forests \citep{athey2019, wager2018} partition the training sample into a structure-building subset $\mathcal S_1$ and an estimation subset $\mathcal S_2$. The tree partition $R_\theta(x)$ is determined entirely by $\mathcal S_1$, while the prediction at $x$ uses only observations from $\mathcal S_2$ that fall in the terminal region containing $x$:
\[
T_\theta(x) = \frac{1}{|\mathcal S_2 \cap A_\theta(x)|} \sum_{i \in \mathcal S_2 \cap A_\theta(x)} Y_i.
\]
The prediction retains the weighted-average form with nonnegative weights summing to one, so Theorem~\ref{thm:finite_var} applies directly.

The sample-splitting mechanism has a specific consequence for the covariance decomposition. When two trees use independent estimation subsets $\mathcal S_2^{(\theta)}$ and $\mathcal S_2^{(\theta')}$ drawn without replacement from the same pool, the probability that any observation contributes to the prediction at $x$ in both trees is reduced relative to standard forests. In the limiting case where $\mathcal S_2^{(\theta)} \cap \mathcal S_2^{(\theta')} = \emptyset$, which occurs with positive probability when the estimation fraction is small, the observation-overlap component of~\eqref{eq:lotc} is goes to zero. No shared outcome receives positive weight in both trees, eliminating reuse-induced dependence by construction.

The alignment component, however, persists. Independently generated honest trees discover similar partitions near $x$ whenever the signal structure in $\mathcal S_1$ produces stable split decisions, because the data-generating process -- not the specific structure-building observations-- determines which splits are favorable. Two trees with disjoint estimation sets but similar partitions will average different observations drawn from the same local subpopulation, producing dependence through aligned local averaging rules exactly as described in Section~\ref{subsec:partition_alignment}. The covariance floor is therefore strictly positive even under complete separation of estimation sets, provided the alignment mechanism is active.

Honest forests thus provide a natural setting in which the two mechanisms identified in this paper are empirically separable: the sample-splitting design eliminates or attenuates observation overlap while leaving alignment intact. This separation confirms that alignment, not observation reuse, is the more fundamental source of irreducible dependence in tree-based ensembles.

\subsection{Causal forests}
\label{subsec:causal}

Causal forests \citep{athey2019, wager2018} estimate heterogeneous treatment effects $\tau(x) = \mathbb E[Y(1) - Y(0) \mid X = x]$ by constructing trees that target the conditional average treatment effect within terminal nodes. In the generalized random forest framework \citep{athey2019}, the prediction at $x$ takes the form
\[
\hat\tau_\theta(x) = \sum_{i=1}^n \alpha_i(x;\theta)\,Y_i,
\]
where the weights $\alpha_i(x;\theta)$ depend on terminal-node membership, the treatment indicator $D_i$, and the local treatment proportion within the node. Unlike regression forests, these weights can be negative: treated observations in $A_\theta(x)$ receive weights of one sign while control observations receive weights of the opposite sign, reflecting the contrast structure of the estimand.

The variance identity in Theorem~\ref{thm:finite_var} is unaffected by the sign of the weights. It requires only exchangeability of the tree-generating mechanism, which is preserved in causal forests. The covariance floor $C_T(x) = \Cov(\hat\tau_\theta(x), \hat\tau_{\theta'}(x) \mid X)$ is therefore well-defined.

Strict positivity of $C_T(x)$ requires a brief additional argument. In the alignment component~\eqref{eq:calign_explicit}, each summand involves $\mathbb E[\bar W_i(x;\mathcal I_\theta)\,\bar W_i(x;\mathcal I_{\theta'}) \mid X]$, where $\bar W_i$ denotes the resampling-conditional mean weight. Because $\mathcal I_\theta$ and $\mathcal I_{\theta'}$ are independent, this expectation factors:
\[
\mathbb E\!\left[\bar W_i(x;\mathcal I_\theta)\,\bar W_i(x;\mathcal I_{\theta'}) \;\middle|\; X\right]
=
\left(\mathbb E\!\left[\bar W_i(x;\mathcal I_\theta) \;\middle|\; X\right]\right)^2
\ge 0.
\]
Every term in the alignment component is therefore nonneg\-ative regardless of the sign of individual weights, and strict positivity follows whenever there exists an observation whose expected resampling-conditional weight is nonzero and whose conditional outcome variance is positive.

For the observation-overlap component, the product $\alpha_i(x;\theta)\,\alpha_i(x;\theta')$ is nonneg\-ative for each observation $i$ in practice, because the sign of $\alpha_i$ is determined by observation $i$'s treatment status $D_i$, which is the same in both trees. Two independent trees that include observation $i$ in their terminal regions at $x$ assign it a weight whose sign depends on $D_i$ and the local treatment proportion, and since $D_i$ is fixed, the signs are consistent across trees. The overlap component is therefore also nonneg\-ative under standard implementations.

Estimation via synthetic resampling requires specifying an imposed law for $(Y_i, D_i \mid X)$. When treatment assignment is unconfounded conditional on $X$, this can be decomposed as $D_i \mid X \sim \mathrm{Bernoulli}(\hat e(X_i))$ and $Y_i \mid X, D_i \sim \hat F_{D_i}(\cdot \mid X_i)$, where $\hat e$ is an estimated propensity score and $\hat F_d$ are estimated conditional outcome distributions under each treatment arm. Synthetic replicates are generated by drawing treatment assignments and potential outcomes from the fitted conditional models at the realized design points. The covariance estimator~\eqref{eq:ct_hat} then targets the covariance floor of the causal forest under the imposed law.